\setlist[enumerate,1]{leftmargin=0.6cm}
\setlist[itemize]{leftmargin=0.6cm}
\def\1{\bm{1}}
\def\vzero{{\bm{0}}}
\def\vone{{\bm{1}}}
\def\vtheta{{\bm{\theta}}}
\def\vdelta{{\bm{\delta}}}
\def\vxi{{\bm{\xi}}}
\def\vb{{\bm{b}}}
\def\vm{{\bm{m}}}
\newcommand{\ve}{\@ifnextchar\bgroup{\velong}{{\bm{e}}}}
\newcommand{\velong}[1]{{\bm{#1}}}
\def\vg{{\bm{g}}}
\def\vh{{\bm{h}}}
\def\vm{{\bm{m}}}
\def\vu{{\bm{u}}}
\def\vv{{\bm{v}}}
\def\vw{{\bm{w}}}
\def\vx{{\bm{x}}}
\def\vy{{\bm{y}}}
\def\vz{{\bm{z}}}
\def\v1{{\bm{1}}}
\def\vX{{\bm{X}}}
\def\vW{{\bm{W}}}
\def\mI{{\bm{I}}}
\def\mP{{\bm{P}}}
\def\mU{{\bm{U}}}
\def\mSigma{{\bm{\Sigma}}}
\DeclareMathAlphabet{\mathsfit}{\encodingdefault}{\sfdefault}{m}{sl}
\SetMathAlphabet{\mathsfit}{bold}{\encodingdefault}{\sfdefault}{bx}{n}
\def\gA{{\mathcal{A}}}
\def\gC{{\mathcal{C}}}
\def\gG{{\mathcal{G}}}
\def\gO{{\mathcal{O}}}
\def\gX{{\mathcal{X}}}
\def\sR{{\mathbb{R}}}
\newcommand{\E}{\mathbb{E}}
\newcommand{\R}{\mathbb{R}}
\newcommand{\Cov}{\mathrm{Cov}}
\newcommand{\normlp}{L^p}
\DeclareMathOperator{\Tr}{Tr}
\newcommand{\diag}{\mathrm{diag}}
\newcommand{\dotp}[2]{\left<#1, #2\right>}
\newcommand{\dotpsm}[2]{\langle #1, #2\rangle}
\newcommand{\hvx}{\hat{\vx}}
\newcommand{\Normal}{\mathcal{N}}
\newcommand{\DatZ}{\mathcal{Z}} 
\newcommand{\DatXi}{\Xi} 
\newcommand{\abs}[1]{\left\lvert #1 \right\rvert}
\newcommand{\abssm}[1]{\lvert #1 \rvert}
\newcommand{\normsm}[1]{\| #1 \|}
\newcommand{\normtwosm}[1]{\normsm{#1}_2}
\newcommand{\dd}{\textup{\textrm{d}}}
\newcommand{\normPsm}[1]{\normsm{#1}_{\mathrm{P}}}
\newcommand{\normRsm}[1]{\normsm{#1}_{\mathrm{R}}}
\newcommand{\hvm}{\widehat{\vm}}
\newcommand{\hvv}{\widehat{\vv}}
\newcommand{\eeta}{\eta_{\mathrm{e}}}
\newcommand{\prdX}{\mathcal{P}_{X}}
\newcommand{\msigma}{{\bm{\sigma}}}
\newcommand{\vDelta}{{\bm{\Delta}}}
\newcommand{\kaifeng}[1]{{\color{red}{}}}
\newcommand{\sadhika}[1]{{\color{red}{}}}
\newcommand{\abhishek}[1]{{\color{blue}{}}}
\newcommand{\ito}{{It\^{o}}}
\newcommand{\levy}{{L\'{e}vy}}
\theoremstyle{plain}
\newtheorem{theorem}{Theorem}[section]
\newtheorem{lemma}[theorem]{Lemma}
\theoremstyle{definition}
\newtheorem{definition}[theorem]{Definition}
\theoremstyle{remark}
\newtheorem{remark}[theorem]{Remark}
\title{On the SDEs and Scaling Rules \\ for Adaptive Gradient Algorithms}
\author{%
	Sadhika Malladi\thanks{Equal Contribution.} \qquad Kaifeng Lyu\footnotemark[1] \qquad Abhishek Panigrahi \qquad Sanjeev Arora\\
	Department of Computer Science\\
	Princeton University\\
	\texttt{\{smalladi,klyu,ap34,arora\}@cs.princeton.edu} \\
}
\begin{document}

\maketitle
\sloppy

\begin{abstract}
Approximating Stochastic Gradient Descent (SGD) as a Stochastic Differential Equation (SDE) has allowed researchers to enjoy the benefits of studying a continuous optimization trajectory while carefully preserving the  stochasticity of SGD.
Analogous study of adaptive gradient methods, such as RMSprop and Adam, has been challenging because there were no rigorously proven SDE approximations for these methods.
This paper derives the SDE approximations for RMSprop and Adam, giving theoretical guarantees of their correctness as well as experimental validation of their applicability to common large-scaling vision and language settings.
A key practical result is the derivation of a \textit{square root scaling rule} to adjust the optimization hyperparameters of RMSprop and Adam when changing batch size, and its empirical validation in deep learning settings.

\end{abstract}

\section{Introduction}
Distributed synchronous optimization environments have enabled very rapid training of models (in terms of wall-clock time) by allowing a large batch size. 
Understanding large-batch stochastic optimization is crucial to enjoying the speed-up of this setting.
In this context, \cite{krizhevsky2014one,goyal2017accurate} empirically discovered the  {\em linear scaling rule (LSR)} for Stochastic Gradient Descent (SGD). It calls for scaling learning rate proportionately to the batch size while fixing the number of epochs. 
It was recognized that the validity of this scaling rule stems from the benefits to generalization due to noise from mini-batch gradient estimation. But naive analysis, as done in~\cite{hoffer2017train}, suggested
that the scaling rule for SGD ought to be {\em square root} instead of linear.
Subsequently, \cite{jastrzkebski2017three} pointed out that since the
phenomenon involves varying the LR even down to zero, the correct analysis
should invoke a continuous view, namely a stochastic differential equation
(SDE). The SDE view helps identify the correct scaling of the noise  and leads to a  derivation of the linear scaling rule (see Section~\ref{sec:sde-approx-scaling}).


However, extending the SDE approach---i.e., continuous-time approximations---to
popular adaptive optimization algorithms, like RMSprop~\citep{hinton2012rmsprop}
and Adam~\citep{kingma2014adam}, has been challenging due to their use of
coordinate-wise normalization. By ignoring gradient noise,
\cite{ma2020qualitative} derived intuitive continuous approximations for
full-batch RMSprop and Adam; however, this deterministic view precludes a scaling
rule. 

Recent papers have suggested a {\em square root} scaling rule for adaptive
algorithms: set the learning rate proportionately to the {\em square root} of
the batch size while fixing the number of epochs. Based on perturbation theory,
\cite{granziol2021learning} proposed the square root scaling rule for RMSprop
and Adam but could only reason about optimization behavior near convergence, not
along the entire trajectory.
A square root scaling rule was also
empirically discovered for another adaptive gradient method called \textit{LAMB} \citep{you2020lamb},
which is an optimization method with layerwise adaptive learning
rates, designed for better optimization and generalization in large-batch training.
Instead of tuning learning rates while increasing batch size, LAMB used
the square root scaling rule to automatically adjust the learning rate and
achieve strong performance on vision and language tasks.




In this paper, we make the following contributions.

\begin{enumerate}
\item We propose new SDE approximations for two popular
adaptive optimization algorithms, RMSprop and Adam (\Cref{def:rmsprop_sde,def:adam_sde})  in \Cref{thm:rmsprop_sde,thm:adam_sde}. We prove that these approximations are {\em 1st-order weak approximations} (\Cref{def:weak_approx}), providing a calculus-based guarantee of the approximation strength between the SDEs and their corresponding discrete processes as was done for SGD and its SDE in \cite{li2019stochastic}. 
\item Our SDE approximations immediately yield square-root scaling rules
(\Cref{def:rmsprop_scaling,def:adam_scaling}) for adjusting the optimization hyperparameters of Adam and RMSprop when changing batch size to ensure that the resulting discrete trajectories are all 1st-order weak approximations of the same SDE (\Cref{thm:scaling_rules}). Experiments (\Cref{fig:testaccplots_cifarmain,fig:largescaleexpts_mainpaper} and \Cref{sec:app_sqsr_scaling_exps}) validate the scaling rules in the vision and language modeling domains.
\item We provide efficient experimental verification of the validity of the SDE approximation for the adaptive algorithms in realistic deep nets (\Cref{def:rmsprop_scaling,def:adam_scaling}). Direct simulation of the SDE, e.g., Euler-Maruyama,
is prohibitively expensive because it requires computing the full gradient and noise covariance at fine-grained intervals. Hence we adapt (\Cref{def:svag_alg}) the new and efficient {\em SVAG simulation} for SGD \citep{li2021validity} for use with our proposed SDEs and rigorously prove its correctness (\Cref{thm:svag}).  
Using SVAG, we provide evidence that the proposed SDE approximations track the analogous discrete trajectories in many common large-scale vision and language settings (\Cref{fig:adamsvag} and \Cref{sec:app_exp_svag}). 
\end{enumerate} 

\section{Preliminaries}
\label{sec:prelim}

We use $\vv
\odot \vu$, $\vv^2$, $\sqrt{\vv}$ to denote coordinate-wise operators for multiplication, taking squares,
taking square roots.
For ease of exposition we modify RMSprop and Adam to use $\vv_k$ in the update for $\vtheta_k$ instead of using $\vv_{k+1}$.\footnote{Experiments in \Cref{sec:app_vk} show that this change does not significantly impact performance.}
We also assume that $\vv_0$ is nonzero if $\epsilon$ is 0 to avoid division by zero.

\begin{definition}\label{def:rmsprop}
	RMSprop~\citep{hinton2012rmsprop}
	is an algorithm that updates $\vtheta_k$ as follows,
	\begin{align*}
    	\vtheta_{k+1} = \vtheta_{k} - \eta \vg_{k} \odot (\sqrt{\vv_{k}} + \epsilon)^{-1},  \quad
    	\vv_{k+1}     = \beta \vv_k + (1 - \beta) \vg_k^2, 
	\end{align*}
	where $\vtheta_k$ is the parameter, $\vg_k$ is the stochastic gradient at step $k$, and $\vv_k$ is an estimate for the second moment of $\vg_k$.
\end{definition}


\begin{definition} \label{def:adam}
	Adam~\citep{kingma2014adam}
	is an algorithm that updates $\vtheta_k$ as follows,
	\begin{align*}
		&\vm_{k+1} = \beta_1 \vm_{k} + (1-\beta_1) \vg_k, \quad
		& &\vv_{k+1} = \beta_2 \vv_k + (1-\beta_2) \vg_k^2, \\
		&\hvm_{k+1} = \vm_{k+1}\odot (1-\beta_1^{k+1})^{-1}, \quad
		& &\hvv_{k+1} = \vv_{k+1}\odot (1-\beta_2^{k+1})^{-1}, \\
		&\vtheta_{k+1} = \vtheta_k - \eta \hvm_{k+1} \odot (\sqrt{\hvv_k} + \epsilon)^{-1},
	\end{align*}
	where $\vtheta_k$ is the parameter, $\vg_k$ is the stochastic gradient at step $k$, $\vm_k$ is the momentum, and $\vv_k$ is an estimate for the second moment of $\vg_k$.
\end{definition}

\subsection{Noisy Gradient Oracle with Scale Parameter}
We abstract the stochastic gradient as being provided by a {\em noisy} oracle for the full gradient.
We formulate the oracle to highlight the phenomenon of interest: changing the batch size affects the scale of the noise.


\begin{definition} \label{def:NGOS}
A {\em Noisy Gradient Oracle with Scale Parameter} (NGOS) is
characterized by a tuple $\gG_{\sigma} = (f, \mSigma, \DatZ_{\sigma})$. Given a (noise)
scale parameter $\sigma > 0$, $\gG_{\sigma}$ takes an input $\vtheta$ and returns $\vg
=\nabla f(\vtheta) + \sigma \vz$, where $\nabla f(\vtheta)$ is the gradient of
$f$ at $\vtheta$, $\vz$ is the gradient noise drawn from the probability distribution
$\DatZ_{\sigma}(\vtheta)$ with mean zero and covariance matrix $\mSigma(\vtheta)$.
We use $\gG_{\sigma}(\vtheta)$ to denote the distribution of $\vg$ given $\sigma$ and $\vtheta$.
The probability distribution
$\DatZ_{\sigma}(\vtheta)$ can change with the scale $\sigma$, but the
covariance matrix $\mSigma(\vtheta)$ is fixed across different noise scales.
\end{definition}

For all $\vg_k$ in \Cref{def:rmsprop,def:adam}, we assume that $\vg_k$ is drawn
from a noisy gradient oracle $\gG_{\sigma}$.
In our setting, as is common when batches are sampled with replacement, $\sigma$ is primarily controlled through the batch size; in particular,
$\sigma\sim 1/\sqrt{B}$ (see \Cref{sec:app_sigma_scaling} for a derivation).
For sampling with replacement on a finite dataset of size $n$,
where $f_1(\vtheta), \dots, f_n(\vtheta)$ are the loss functions for the $n$ data points (and the average of these $n$ functions is $f(\vtheta)$),
this covariance matrix for a given parameter $\vtheta$ can be explicitly written as
\begin{equation} \label{eq:sigma-training}
	\mSigma(\vtheta) = \frac{1}{n}\sum_{i=1}^{n} (\nabla
f_{i}(\vtheta) - \nabla f(\vtheta))(\nabla f_{i}(\vtheta) - \nabla
f(\vtheta))^\top.
\end{equation}

\subsection{SDE Approximation and Scaling Rules} \label{sec:sde-approx-scaling}
Under appropriate conditions  it becomes possible to approximate SGD via an
\ito~SDE, which uses Brownian motion to model the noise and has the following general form, where $W_t$ is a Wiener
process: $\dd \vX_t = \vb(\vX_t)\dd t + \msigma(\vX_t) \dd W_t$. 
The SGD update rule for a loss $f$ is $\vx_{k+1}=\vx_k-\eta\vg_k$, where $\eta$ is the learning rate and $\vg_k$ is given by the NGOS on input $\vx_k$.
The following is the canonical interpretation of SGD as an SDE:
\begin{equation} \label{eqn:canonicalSDE}
	\dd \vX_t = -\nabla f(\vX_t)\dd t + \sqrt{\eta}\mSigma^{1/2}(\vX_t)\dd \vW_t.
\end{equation} 
\Cref{eqn:canonicalSDE} hints at a relationship between learning rate and
gradient noise---specifically, the {\em linear scaling rule}---since scaling
batch size by factor $\kappa$ scales the noise covariance by $1/\kappa$, which can be
canceled by scaling $\eta$ by $\kappa$ as well~\citep{jastrzkebski2017three}.
Therefore, the linear scaling rule ensures the SDE approximation does not change when using a different batch size.
With the same methodology, the current paper studies the SDE approximations for adaptive gradient 
algorithms to derive the square root scaling rule for them.

\subsection{Quality of SDE Approximation and Theoretical Assumptions} 
\label{subsec:SDEquality}
The quality of the SDE approximation can be measured empirically (\Cref{sec:svag}) and bounded theoretically using a calculus-based guarantee, which was initiated in the context of deep learning in \cite{li2019stochastic}. In particular, the theoretical guarantee uses the following notion of approximation between discrete and continuous stochastic processes by
comparing iteration $k$ in the discrete process with continuous time $k \eeta$, where $\eeta > 0$ is the (effective) step size of the discrete process.
\begin{definition}[Order-1 Weak Approximation, \cite{li2019stochastic}]
	Let $\{\vX_t^{\eeta}: t\in[0, T]\}$ and $\{\vx_k^{\eeta}\}_{k=0}^{\lfloor
	T/\eeta \rfloor}$ be families of continuous and discrete stochastic processes
	parametrized by $\eeta$. We say $\{\vX_t^{\eeta}\}$ and $\{\vx_k^{\eeta}\}$
	are order-1 weak approximations of each other if for every test function $g$
	with at most polynomial growth (\Cref{def:app_polygrowth_fn}), there exists
	a constant $C>0$ independent of $\eeta$ such that
	\begin{equation*}
		\max_{k=0,...,\lfloor T/\eeta\rfloor} | \E g(\vx_k^{\eeta}) - \E g(\vX_{k\eeta}^{\eeta}) | \leq C\eeta
		\label{def:weak_approx}
	\end{equation*}
\end{definition}
A function $g \colon \sR^d\to\sR$ is said to have {\em polynomial growth} if there exist positive integers $\kappa_1,\kappa_2>0$ such that $|g(\vx)| \leq \kappa_1(1+\normtwosm{\vx}^{2\kappa_2})$ for all $\vx \in \R^d$ (\Cref{def:app_polygrowth_fn}).
The above definition measures the strength of the approximation by the closeness of a test function $g$ computed on the iterates of both trajectories.
The approximation becomes stronger in this sense as $\eeta$ becomes smaller.
In the SDE approximation of SGD, $\eeta = \eta$ and $k$ steps correspond to continuous time $k \eta$.
A key difference between SGD and adaptive algorithms is that $\eeta = \eta^2$ for both RMSprop and Adam, which means
$k$ steps correspond to continuous time $k \eta^2$.
We validate this time-scaling theoretically in \Cref{sec:SDE}.

Now we formalize the assumptions needed in the theory. 
Since our analysis framework is based upon calculus, it becomes necessary to assume 
differentiability conditions on the NGOS (\Cref{as:f}).
Similar differentiability conditions also appear in prior SDE works \citep{li2019stochastic,li2021validity},
and we note that lately it has become clear that restricting to
differentiable losses (via differentiable node activations such as Swish~\citep{ramachandran2017searching}) does not hinder good performance.
\begin{definition}[Well-behaved NGOS] \label{as:f}
	The loss function $f$ and covariance matrix function $\mSigma$ in a NGOS $\gG_{\sigma}$ are {\em well-behaved} if they satisfy\footnote{Note:
	$\gC^\infty$-smoothness can be relaxed using the mollification technique
	from \cite{li2019stochastic}.}:
	(1) $\nabla f(\vtheta)$ is Lipschitz and $\gC^\infty$-smooth;
	(2) The square root of covariance matrix $\mSigma^{1/2}(\vtheta)$ is
		bounded, Lipschitz, and $\gC^\infty$-smooth; and
	(3) All partial derivatives of $\nabla f(\vtheta)$ and $\mSigma^{1/2}(\vtheta)$ up to and including the $4$-th order have polynomial growth.
	We also say that the NGOS $\gG_{\sigma}$ is well-behaved if $f$ and $\mSigma$ are well-behaved.
\end{definition}

Deriving an SDE approximation also requires conditions on the noise distribution in the NGOS. It is allowed to be
non-Gaussian, but not heavy-tailed.
We require an upper bound
on the third moment of the noise so that the distribution is not very skewed.
For other higher order moments, we require 
$\E_{\vz \sim \DatZ_{\sigma}}[\normtwosm{\vz}^p]^{1/p}$, namely the $\normlp$-norm of random variable $\normtwosm{\vz}$,
to grow at
most linearly as a function of $\vtheta$ (which is implied by ensuring an upper bound on all even
order moments).
We note that the following conditions are standard in prior work using {\ito} SDEs to study SGD.
\begin{definition}[Low Skewness Condition] \label{def:low-skew}
	The NGOS $\gG_{\sigma}$ is said to satisfy the {\em low skewness condition} if  
	there is a function $K_3(\vtheta)$ of polynomial growth (independent of $\sigma$) such that
	$\abssm{\E_{\vz \sim
	\DatZ_{\sigma}(\vtheta)}[\vz^{\otimes 3}]} \le K_3(\vtheta) / \sigma$
	for all $\vtheta \in \R^d$ and all noise scale parameters $\sigma$.
\end{definition}
\begin{definition}[Bounded Moments Condition]\label{def:bound-moment}
	The NGOS $\gG_{\sigma}$ is said to satisfy the {\em bounded moments condition} if  
	for all integers $m \ge 1$ and all noise scale parameters $\sigma$,
	there exists a constant $C_{2m}$ (independent of $\sigma$) such that
	$\E_{\vz \sim
	\DatZ_{\sigma}(\vtheta)}[\normtwosm{\vz}^{2m}]^{\frac{1}{2m}} \le
	C_{2m}(1 + \normtwosm{\vtheta})$ for all $\vtheta \in \R^d$.
\end{definition}
For well-behaved loss $f(\vtheta)$ and covariance $\mSigma(\vtheta)$,
the above two conditions are satisfied when
$\DatZ_{\sigma}$ is the Gaussian distribution with mean zero and covariance $\mSigma(\vtheta)$. That is,
the Gaussian NGOS $\vg \sim \Normal(\nabla f(\vtheta), \sigma^2 \mSigma(\vtheta))$ satisfies
the low skewness and bounded moments conditions.
The low skewness condition holds because the odd moments of a Gaussian are all zeros,
and the bounded moments condition can be verified since 
$\E_{\vz \sim \Normal(\vzero, \mSigma(\vtheta))}[\normtwosm{\vz}^{2m}]^{\frac{1}{2m}} \le \E_{\vw \sim \Normal(\vzero, \mI)}[\normtwosm{\vw}^{2m}]^{\frac{1}{2m}} \cdot \normtwosm{\mSigma^{1/2}(\vtheta)}$
and $\mSigma^{1/2}(\vtheta)$ is Lipschitz.

The Gaussian NGOS with $\sigma = \frac{1}{\sqrt{B}}$ can be seen as an approximation of the gradient noise in a mini-batch
training with batch size $B$, if $\mSigma(\vtheta)$ is set to match with \eqref{eq:sigma-training}.
But this does not directly imply that the gradient noise in mini-batch training satisfies the low skewness and bounded moments conditions,
as the noise is not exactly Gaussian.
Assuming that the gradient of the loss function $f_i(\vtheta)$
at every data point is Lipschitz, these two conditions can indeed be
verified for all batch sizes $B \ge 1$. 

\subsection{Discussion on Heavy-Tailed Gradient Noise}
We note that \Cref{def:low-skew,def:bound-moment} allow some non-Gaussianity in the noise, but 
$K_3(\vtheta)$ and $C_{2m}$ 
could be large in practice.
In this case, higher order moments of the gradient noise have non-negligible effects on training that the \ito~SDE cannot capture.
\citet{zhang2020adaptive} and \citet{simsekli2019tailindex} presented experimental
evidence that the noise is heavy-tailed.
This motivated
\citet{zhou2020theoretically} to consider a \levy~SDE approximation (instead of
\ito~SDE) to study the (worse) generalization behavior of Adam. However, the
quality of the \levy~SDE approximation was not formally guaranteed (e.g., in the
sense of \Cref{def:weak_approx}), so finding a guaranteed approximation for
adaptive optimization algorithms remains an open problem. Moreover,
\citet{li2021validity,xie2021diffusion} highlighted issues with the evidence,
noting that  the measurements used in \citet{simsekli2019tailindex} are intended
only for scalar values. When applied to vector valued distributions the
measurement can (incorrectly) identify a multidimensional Gaussian distribution
as heavy-tailed too \citep{li2021validity}. 
It is in general difficult to estimate the moments of the noise distribution from samples, so the heavy-tailedness of
real-world gradient noise remains an open question. 

Our empirical results suggest that our assumptions in
\Cref{def:low-skew,def:bound-moment} are not too strong.
In \Cref{sec:svag},
we efficiently simulate
the \ito~SDE using an algorithm analogous to SVAG~\citep{li2021validity}.
The simulation closely approximates the test accuracy
achieved by the discrete trajectory, suggesting that even if heavy-tailed noise
is present during training, it is not crucial for good generalization
(\Cref{sec:app_exp_svag}).
We remain interested in exploring the heavy-tailed analogs of our \ito~SDEs. 
However, efficient simulation of such SDEs remains intractable and formal approximation guarantees are difficult to prove, so we are limited in assessing the utility of such approximations. 
We leave it for future work to
investigate if and how heavy-tailed noise plays a role in adaptive optimization.

\section{Related Work}\label{subsec:related}
We defer a full discussion of empirical and theoretical works on adaptive gradient methods to \Cref{sec:app_addl_works} and only discuss works relevant to SDEs and scaling rules here.

\paragraph{Applications of SDE Approximations.}
There are applications of the SDE approximation beyond the derivation of a scaling rule.
\cite{li2020reconciling} and \cite{kunin2021neural} assumed that the loss has some symmetry (i.e., scale invariance) and studied the resulting dynamics. Furthermore, \cite{li2020reconciling} used this property to explain the phenomenon of sudden rising error after LR decay in training.
\cite{xie2021diffusion} analyzed why SGD favors flat minima using an SDE-motivated diffusion model.

\paragraph{Past Work on Square Root Scaling Rule.}
As mentioned before, square root scaling was incorrectly believed for a few years to be theoretically justified for SGD.
\citet{granziol2021learning} decomposed the stochastic Hessian during batch training into a deterministic Hessian and stochastic sampling perturbation and assumes one of the components to be low rank to propose a square root scaling rule.
\citep{you2020lamb} empirically discovered a square root scaling rule for language models trained by LAMB, a layer-wise variant of Adam.
\citet{xie2022adai} heuristically derived, but did not show an approximation bound for, a second-order SDE for approximating Adam,
and they applied the SDE to study the time needed for Adam to escape sharp minima.
\citet{xie2022adai} did not discuss a scaling rule, though their proposed SDE may admit one.
Similarly, \citet{zhou2020theoretically} derived a \levy~SDE for Adam, but no approximation bounds are given in the paper.


\section{SDEs for Adaptive Algorithms}
\label{sec:SDE}



An SDE approximation operates in continuous time and thus implicitly considers the limit $\eta
\rightarrow 0$. 
In adaptive algorithms, the moment averaging parameters $\beta,\beta_1,\beta_2$ and $\eta$ must be taken to limits such that the adaptivity and stochasticity can still be studied.
For example, if $\beta, \beta_1, \beta_2$ remain fixed when $\eta\to 0$, then
the algorithm computes the moving averages in a very small neighborhood, which
averages out the effects of gradient noise and gradient history, causing the
flow to turn into deterministic SignGD \citep{ma2020qualitative}. 
We will need to assume $\beta,\beta_1,\beta_2\to 1$ as $\eta\to 0$, which implies the
impact of the history grows as the learning rate decreases, and thus the
adaptive features of these algorithms can still be studied in the continuous
approximation~\citep{ma2020qualitative}. To keep the stochastic nature
of the flow, we require the noise from mini-batching dominate the gradient updates.





\subsection{Warm-Up: Linear loss}\label{sec:linear_warmup} 

To build intuition for the SDE and the scaling rule, we first study a simplified setting.
In particular, consider a linear
loss $f(\vtheta)= \dotpsm{\vtheta}{\bar\vg}$ and isotropic noise in the NGOS, namely
$\vg_k \sim {\mathcal N}(\bar{\vg}, \sigma^2 \mI)$.
The RMSprop update $\vv_{k+1} = \beta \vv_k + (1-\beta) \vg_k^2$ can be expanded as
	$\vv_k = \beta^k \vv_0 + (1-\beta)\sum_{j=0}^{k-1} \beta^j \vg_j^2.$ 
By linearity of expectation, 
$$
	\E[\vv_k] = \beta^k \vv_0 + (1-\beta) \sum_{j=0}^{k-1} \beta^j (\bar\vg^2 + \sigma^2 \vone) = \beta^k \vv_0 + (1-\beta^k)(\bar\vg^2 + \sigma^2 \vone).
$$
This suggests that $\E[\vv_k]$ is approximately $\bar{\vg}^2 + \sigma^2 \vone$ after a sufficient number of steps. 
Setting $\vv_0 = \bar{\vg}^2 + \sigma^2 \vone$, we see that the approximation $\E[\vv_k] = \bar{\vg}^2 + \sigma^2 \vone$ becomes exact for all $k \ge 0$.

Using the linearity of variance (for independent variables), the standard deviation of each coordinate of $\vv_k$ is of scale
$\gO((1-\beta)\sigma^2)$. Moreover, the expectation $\E[\vv_k]$ is of scale $\gO(\sigma^2)$, so
we know that $\vv_k$ is nearly
deterministic and concentrates around its expectation $\bar{\vg}^2 + \sigma^2\vone$ when $\beta$ is close to $1$.
Therefore, we take the approximation $\vv_k \approx \bar{\vg}^2 + \sigma^2 \vone$ for all $k \ge 0$.
Ignoring $\epsilon$, the RMSprop update rule becomes:
\begin{align}
	\vtheta_{k+1} &\approx \vtheta_k - \eta \vg_k \odot (\bar{\vg}^2 + \sigma^2 \vone)^{-1/2}. \label{eq:warmup-a}
\end{align}
These dynamics depend on the relative magnitudes of $\bar\vg$ and $\sigma$.
We show that when $\sigma\ll\normsm{\bar\vg}$, no SDE approximation can exist in \Cref{sec:app_grad_dominates_noise}.
Here, we explore the case where $\sigma \gg \normsm{\bar\vg}$ which implies $\vtheta_{k+1} \approx \vtheta_k - \frac{\eta}{\sigma} \vg_k$.
Noting that $\vg_k \sim \Normal(\bar{\vg}, \sigma^2 \mI)$ gives
$\vtheta_{k+1} - \vtheta_k \sim \Normal( \frac{\eta}{\sigma} \bar{\vg}, \eta^2 \mI)$ approximately.
Since Gaussian variables are additive, we can take a telescoping sum to obtain the marginal distribution of $\vtheta_k$:
	$\vtheta_k \sim \Normal\left((k\eta/\sigma) \bar\vg, k\eta^2\mI \right)$ approximately.

If an SDE approximation of RMSprop exists, then $\vtheta_k$ can be closely approximated by a {\em fixed} random variable from the corresponding  stochastic process at a fixed (continuous) time $t$.  
Thus, to keep the SDE unchanged upon changing the noise scale $\sigma$, the hyperparameters must be adjusted to keep $\frac{k \eta}{\sigma}$ and $k \eta^2$ unchanged, which implies
 $\eta \sim \frac{1}{\sigma}$ and $k \sim \frac{1}{\eta^2}$. 
 These observations intuitively yield the square root scaling rule: noting that $\sigma$ changes with mini-batch size $B$ as $\sigma \sim 1/\sqrt{B}$ suggests $\eta \sim \sqrt{B}$, and $k \sim 1/B$.

\subsection{SDE Approximations for Adaptive Algorithms}
Having established intuition in a highly simplified setting for adaptive algorithms,
we now return to a more general and realistic setting. We derive the SDEs that are order-1 approximations of the discrete RMSprop and
Adam algorithms under \Cref{as:f},
where the SDE states consist of both the parameters $\vtheta$ and moment estimates.
From the example of \Cref{sec:linear_warmup}, we see that
SDE approximations may exist if $\sigma \sim 1/\eta^2$ and $\sigma \gg \normsm{\bar\vg}$ (see \Cref{sec:app_noise_dominates} for empirical validation of this assumption).
In this case, we can prove that $k \sim \eta^2$ is true not only for the simple setting above but also in general.
This is a key difference to the SDE for SGD ---
each step in RMSprop or Adam corresponds to a time interval of $\eta^2$
in SDEs, but each SGD step corresponds to a time interval of $\eta$.
In \Cref{sec:linear_warmup}, $\vv\sim\sigma^2 \sim 1/\eta^2$ grows to infinity as $\eta \to 0$.
This also happens in the general setting, so we track $\vu_k \triangleq \vv_k/\sigma^2$ (instead of $\vv_k$ directly) in the SDEs. 
\begin{definition}[SDE for RMSprop]\label{def:rmsprop_sde}
	Let $\sigma_0 \triangleq \sigma\eta$, $\epsilon_0\triangleq \epsilon\eta$,
	and $c_2 \triangleq (1-\beta)/\eta^2$. Define the state of the SDE as $\vX_t =
	(\vtheta_t, \vu_t)$ and the dynamics as
	\begin{align*}
		\dd\vtheta_t = - \mP_t^{-1} \left(\nabla f(\vtheta_t) \dd t + \sigma_0 \mSigma^{1/2}(\vtheta_t) \dd \vW_t\right), \qquad
		\dd\vu_t = c_2(\text{diag}(\mSigma(\vtheta_t))-\vu_t)\dd t
    \end{align*}
    where $\mP_t := \sigma_0\text{diag}(\vu_t)^{1/2} + \epsilon_0 \mI$ is a preconditioner matrix, and $\vW_t$ is the Wiener process.
\end{definition}

\begin{theorem}[Informal version of \Cref{thm:app_rmsprop_sde}]
	Let $\vu_k\triangleq\vv_k/\sigma^2$ and define the state of the discrete RMSprop trajectory with hyperparameters $\eta,\beta,\epsilon$ (\Cref{def:rmsprop}) as $\vx_k = (\vtheta_k, \vu_k)$.	
	Then, for a well-behaved NGOS (\Cref{def:NGOS}) satisfying the skewness and bounded moments conditions (\Cref{def:low-skew,def:bound-moment}), the SDE in \Cref{def:rmsprop_sde} satisfies
    \[
    	\max_{k=0,...,\lfloor T/\eta^2\rfloor} | \E g(\vx_k) - \E g(\vX_{k\eta^2}) | \leq C\eta^2
    \]
    where $g$ and $T$ are defined as in \Cref{def:weak_approx} and the initial condition of the SDE is $\vX_0 = \vx_0$.
    \label{thm:rmsprop_sde}
\end{theorem}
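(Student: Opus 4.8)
The plan is to follow the moment-matching framework for weak approximations (Milstein; \cite{li2019stochastic}): to establish an order-1 weak approximation with effective step size $\eeta=\eta^2$, it suffices to verify that the conditional first and second moments of a single discrete step agree with the SDE drift and diffusion of \Cref{def:rmsprop_sde} (integrated over a time interval $\eta^2$) up to error $O(\eta^4)$, that the conditional third moments of both processes are $O(\eta^4)$, and that both processes enjoy uniformly bounded moments together with enough regularity of the SDE coefficients. Given these one-step estimates, the standard telescoping argument through the backward Kolmogorov semigroup $u(\vx,t)=\E[g(\vX_T)\mid\vX_t=\vx]$ of the SDE converts the $N=\lfloor T/\eta^2\rfloor$ one-step errors of size $O(\eta^4)$ into a global error $N\cdot O(\eta^4)=O(\eta^2)=C\eeta$, which is exactly the claimed bound.

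First I would rewrite the discrete dynamics in the variables $\vx_k=(\vtheta_k,\vu_k)$ with $\vu_k=\vv_k/\sigma^2$. Since $\sqrt{\vv_k}+\epsilon=\eta^{-1}(\sigma_0\sqrt{\vu_k}+\epsilon_0)$, the preconditioner becomes exactly $\mP_k=\sigma_0\diag(\vu_k)^{1/2}+\epsilon_0\mI$, an $O(1)$ quantity, and writing $\vg_k=\nabla f(\vtheta_k)+\sigma\vz_k$ and using $\eta^2\sigma=\eta\sigma_0$ and $1-\beta=c_2\eta^2$ gives
\[
\Delta\vtheta=-\eta^2\mP_k^{-1}\nabla f(\vtheta_k)-\eta\sigma_0\mP_k^{-1}\vz_k,\qquad
\Delta\vu=c_2\eta^2\big(\vz_k^2+2\nabla f(\vtheta_k)\odot\vz_k/\sigma+\nabla f(\vtheta_k)^2/\sigma^2-\vu_k\big).
\]
Taking conditional expectations, $\E[\Delta\vtheta\mid\vx_k]=-\eta^2\mP_k^{-1}\nabla f(\vtheta_k)$ matches the $\vtheta$-drift, and since $\sigma^{-2}=\eta^2/\sigma_0^2=O(\eta^2)$ the term $\nabla f^2/\sigma^2$ is negligible, giving $\E[\Delta\vu\mid\vx_k]=c_2\eta^2(\diag(\mSigma(\vtheta_k))-\vu_k)+O(\eta^4)$, matching the $\vu$-drift. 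The second moments yield $\E[\Delta\vtheta\Delta\vtheta^\top\mid\vx_k]=\eta^2\sigma_0^2\mP_k^{-1}\mSigma(\vtheta_k)\mP_k^{-1}+O(\eta^4)$, precisely $\eta^2$ times the diffusion $\sigma_0^2\mP^{-1}\mSigma\mP^{-1}$, while the $\vu$-block and the $\vtheta$-$\vu$ cross block are $O(\eta^4)$, consistent with $\vu_t$ carrying no Brownian term.

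The main obstacle is controlling the third moments. The leading noise term of $\Delta\vtheta$ is of order $\eta=\sqrt{\eeta}$ (this is exactly why the correct time scale is $\eeta=\eta^2$ rather than $\eta$), so a naive bound gives $\E[\Delta\theta_i\Delta\theta_j\Delta\theta_l]=O(\eta^3)=O(\eeta^{3/2})$, which is too large for an order-1 guarantee. This is precisely where the low skewness condition (\Cref{def:low-skew}) is essential: since $\E[\vz_k^{\otimes 3}]=O(1/\sigma)=O(\eta)$, the estimate is upgraded to $O(\eta^4)=O(\eeta^2)$, as needed. The cross third moments $\E[\Delta\theta\,\Delta\theta\,\Delta u]$ each carry either a third moment of $\vz_k$ or an extra factor $\sigma^{-1}=O(\eta)$ and are likewise $O(\eta^4)$, and the Itô increments of the SDE have $O(\eta^4)$ third moments automatically by symmetry of the Brownian part.

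Finally I would discharge the remaining hypotheses of the meta-theorem: uniform moment bounds on $\{\vx_k\}$ and $\{\vX_t\}$, and polynomial-growth bounds on the derivatives (up to the order required by the weak-approximation theorem) of the Kolmogorov solution $u(\vx,t)$. The moment bounds follow from the bounded moments condition (\Cref{def:bound-moment}) together with Lipschitzness of $\nabla f$ and $\mSigma^{1/2}$. The key subtlety is that all coefficients ($\mP_k^{-1}$, $\mSigma$, and their derivatives) remain well-behaved after the rescaling: this holds because $\mP_k=\sigma_0\diag(\vu_k)^{1/2}+\epsilon_0\mI\succeq\epsilon_0\mI$ keeps $\mP_k^{-1}$ bounded when $\epsilon_0>0$, and in the $\epsilon_0=0$ case one uses that the $\vu$-flow keeps $\vu_k$ bounded below whenever $\diag(\mSigma)$ is. The regularity of $u$ then follows from the smoothness in \Cref{as:f} via standard backward Kolmogorov estimates, and assembling the $O(\eta^4)$ one-step errors through the telescoping sum over $\lfloor T/\eta^2\rfloor$ steps produces the stated $O(\eta^2)$ bound.
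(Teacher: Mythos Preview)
Your overall strategy---moment matching in the state $\vx=(\vtheta,\vu)$ with effective step size $\eeta=\eta^2$, then feeding the $O(\eta^4)$ one-step errors into the Kolmogorov telescoping argument---is exactly the paper's approach, and your moment computations (including the use of \Cref{def:low-skew} to upgrade the third-moment estimate from $O(\eta^3)$ to $O(\eta^4)$) are correct.

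The genuine gap is in your final paragraph. The meta-theorem from \cite{li2019stochastic} requires the SDE drift and diffusion to be \emph{globally} Lipschitz and in $G^4$, not just along the trajectory. The coefficients of the RMSprop SDE in \Cref{def:rmsprop_sde} fail this: the map $u\mapsto (\sigma_0\sqrt{u}+\epsilon_0)^{-1}$ has derivative proportional to $u^{-1/2}$ near $u=0$, so it is not Lipschitz (and not $\gC^\infty$) even when $\epsilon_0>0$. Your argument that ``$\mP_k\succeq\epsilon_0\mI$ keeps $\mP_k^{-1}$ bounded'' establishes boundedness but not the required regularity, and your fallback---that the $\vu$-flow keeps $\vu$ bounded below---is a property of the trajectory, not of the coefficient functions on all of $\R^{2d}$.

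The paper's fix is to introduce an \emph{auxiliary} SDE in which each $u_i$ is replaced by a smooth cutoff $\mu(u_i)$ that equals $u_i$ for $u_i\ge u_{\min}$ (with $u_{\min}$ a lower bound valid along the trajectory on $[0,T]$) but is globally bounded below by $\tfrac{1}{2}u_{\min}$. This auxiliary SDE has globally Lipschitz, $G^4$ coefficients, so the meta-theorem applies to it; one then argues separately that the auxiliary and original SDEs have identical laws because $u_{t,i}\ge u_{\min}$ holds on $[0,T]$ (from $\dd u_{t,i}/\dd t\ge -c_2 u_{t,i}$). Without this two-step reduction, the hypotheses of the Kolmogorov-estimate machinery you invoke are not met.
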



\begin{remark}
\Cref{sec:linear_warmup} suggested that the SDE approximation can only exist when $\sigma\gg\|\bar\vg\|$. This condition is reflected by keeping  $\sigma_0=\sigma\eta$ a constant and $C$ depends on $\sigma_0$. When $\eta\to 0$, $\sigma$ scales as $1/\eta$, so $\sigma\gg\|\vg\|_2$. 
\end{remark}

We need to find continuous approximations of the normalization constants in Adam (\Cref{def:adam}).
As in the RMSprop case, we take $(1-\beta_2)/\eta^2 = c_2$.
Then, we can estimate the normalization constant $1-\beta_2^k$ in continuous time $t = k\eta^2$ as 
$1-\beta_2^k = 1-(1-c_2\eta^2)^{t/\eta^2}\approx 1-\exp(-c_2t)$.
We can do the analogous approximation for the other normalization constant $1 - \beta_1^k$ in Adam. Taking $(1-\beta_1) / \eta^2 = c_1$, we can approximate it as $1 - \beta_1^k \approx 1 - \exp(-c_1 t)$.
This is a heuristic approach to deal with the normalization constants, but we can indeed justify it in theory. 
\begin{definition}[Adam SDE]\label{def:adam_sde}
	Let $c_1 \triangleq (1-\beta_1)/\eta^2, c_2 \triangleq (1-\beta_2)/\eta^2$ and define $\sigma_0, \epsilon_0$ as in \Cref{def:rmsprop_sde}.
	Let $\gamma_1(t)\triangleq 1-\exp(-c_1 t)$ and $\gamma_2(t)\triangleq 1-\exp(-c_2 t)$.
	Define the state of the SDE as $\vX_t = (\vtheta_t, \vm_t, \vu_t)$ and the dynamics  as
	\begin{align*}
		\dd\vtheta_t &= -\tfrac{\sqrt{\gamma_2(t)}}{\gamma_1(t)} \mP_t^{-1} \vm_t \dd t, &\qquad
		\dd\vm_t &= c_1(\nabla f(\vtheta_t) - \vm_t)\dd t + \sigma_0c_1 \mSigma^{1/2}(\vtheta_t) \dd W_t, \\
		& & \dd\vu_t &= c_2(\text{diag}(\mSigma(\vtheta_t)) - \vu_t) \dd t,
    \end{align*}
    where $\mP_t := \sigma_0 \diag(\vu_t)^{1/2} + \epsilon_0 \sqrt{\gamma_2(t)} \mI$ is a preconditioner matrix, 
	$\vW_t$ is the Wiener process.
\end{definition}
Our main theorem for Adam is given below.
The initial steps of the discrete Adam trajectory can be discontinuous and noisy because of the normalization constants changing drastically.
Hence, we introduce a constant $t_0$ and 
show that for any $t_0$,
we can construct an SDE 
to be a weak approximation for Adam starting from the $\lceil t_0 / \eta^2 \rceil$-th step of Adam.
\begin{theorem}[Informal version of \Cref{thm:app_adam_sde}]\label{thm:adam_sde}
	Define $\vu_k = \vv_k/\sigma^2$ and let $\vx_k = (\vtheta_k,\vm_k,\vu_k)\in\sR^{3d}$ be the state of the discrete Adam trajectory with hyperparameters $\eta,\beta_1,\beta_2,\epsilon$.
	Then, for a well-behaved NGOS (\Cref{def:NGOS}) satisfying the skewness and  bounded moments conditions (\Cref{def:low-skew,def:bound-moment}) and any $t_0 > 0$, the SDE in \Cref{def:adam_sde} satisfies
    \[
    	\max_{k=\lceil t_0/\eta^2\rceil,...,\lfloor T/\eta^2\rfloor} | \E g(\vx_k) - \E g(\vX_{k\eta^2}) | \leq C\eta^2
    \]
    where $g$ and $T$ are defined as in \Cref{def:weak_approx} and the initial condition of the SDE is $\vX_{t_0}=\vx_{\lceil t_0/\eta^2\rceil}$. 
\end{theorem}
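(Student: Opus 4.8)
The plan is to run the Milstein-style argument of \cite{li2019stochastic} in exactly the form used for \Cref{thm:rmsprop_sde}: reduce the global weak-error bound to a one-step moment comparison and then feed the resulting estimates into the general order-1 weak approximation theorem. Writing $\eeta=\eta^2$ for the effective step size, I would compare iterate $k$ of discrete Adam with the SDE of \Cref{def:adam_sde} at continuous time $k\eta^2$, and show that the one-step increments of the two processes, started from a common state $\vx=(\vtheta,\vm,\vu)$, agree in their first three moments up to $O(\eta^4)=O(\eeta^2)$ and have fourth moments bounded by $O(\eeta^2)$ times a polynomial in $\normtwosm{\vx}$. The bounded moments condition (\Cref{def:bound-moment}) controls the high-order moments of the fresh noise, the low skewness condition (\Cref{def:low-skew}) is what pushes the third-moment contributions of the noise down to the required order, and the well-behavedness of the NGOS (\Cref{as:f}) supplies the smoothness and polynomial-growth bounds needed to Taylor-expand the coefficients.

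Before the moment comparison I would dispose of two structural issues absent from the RMSprop case. First, the normalization factors: since $1-\beta_i=c_i\eta^2$, a logarithmic expansion gives $\beta_i^{k}=\exp(-c_i k\eta^2)(1+O(k\eta^4))$, so for $k\eta^2\in[t_0,T]$ one has $\abssm{(1-\beta_i^{k})-\gamma_i(k\eta^2)}=O(\eta^2)$ uniformly, and the same holds for $1-\beta_i^{k+1}$. I would record that on $[t_0,T]$ both $\gamma_1$ and $\gamma_2$ are bounded below by positive constants depending only on $t_0$; this is precisely what the cutoff $t_0>0$ buys, since $1/\gamma_1(t)$ blows up as $t\to 0$ and the drift $\tfrac{\sqrt{\gamma_2(t)}}{\gamma_1(t)}\mP_t^{-1}\vm_t$ would otherwise be unbounded near the origin. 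Second, $\diag(\vu)^{1/2}$ and $\mP_t^{-1}$ are only smooth where $\vu$ is coordinatewise positive and $\mP_t$ nonsingular, so, exactly as in \Cref{thm:rmsprop_sde}, I would pass to an auxiliary SDE whose coefficients are globally Lipschitz and $\gC^\infty$, agree with those of \Cref{def:adam_sde} on the region actually reached, and carry the same law; the bounded-moment control on both trajectories makes the modification outside this region invisible to every polynomially growing test function.

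The core computation is the one-step expansion. Using $\vm_{k+1}=\vm_k+c_1\eta^2(\nabla f(\vtheta_k)-\vm_k)+c_1\eta\sigma_0\mSigma^{1/2}(\vtheta_k)\vz_k$ and $\vu_{k+1}=\vu_k+c_2\eta^2(\vg_k^2/\sigma^2-\vu_k)$, together with $(\sqrt{\hvv_k}+\epsilon)^{-1}=\eta\sqrt{1-\beta_2^{k}}\,\mP_t^{-1}+O(\eta^3)$ after replacing the discrete normalizers by $\gamma_1,\gamma_2$, the parameter step rewrites as $\vtheta_{k+1}-\vtheta_k=-\eta^2\tfrac{\sqrt{\gamma_2}}{\gamma_1}\mP_t^{-1}\vm_{k+1}+O(\eta^4)$, matching the SDE drift to leading order. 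I would then verify that the noise entering $\vm$ at order $\eta$ reproduces the diffusion term $\sigma_0 c_1\mSigma^{1/2}\dd W_t$ in second moment; that the $O(1)$ fluctuation of $\vg_k^2/\sigma^2$ enters the $\vu$-update only through the prefactor $c_2\eta^2$, so it contributes to the second moment of the $\vu$-increment at order $\eta^4=\eeta^2$, consistent with the absence of a diffusion term in $\vu_t$; and that the $\vtheta$--$\vm$ cross-moments, which couple through the shared noise $\vz_k$, agree to $O(\eta^4)$. Feeding these estimates into the general theorem yields $\max_k\abssm{\E g(\vx_k)-\E g(\vX_{k\eta^2})}\le C\eta^2$ on $[t_0,T]$ with $\vX_{t_0}=\vx_{\lceil t_0/\eta^2\rceil}$.

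The main obstacle I anticipate is not any single estimate but the joint handling of the time-inhomogeneity and the singularity at $t=0$: one must confirm that the per-step error from replacing the discrete normalizers $1-\beta_1^{k+1}$ and $1-\beta_2^{k}$ by the continuous $\gamma_1,\gamma_2$ stays $O(\eta^4)$ even though $1/\gamma_1$ is large for $t$ near $t_0$, so that these errors still sum to $O(\eta^2)$ over the $\Theta(1/\eta^2)$ steps, and that the coupling of the fresh noise into both $\vm$ and, through $\vm_{k+1}$, into $\vtheta$ does not produce a spurious mismatch at order $\eta^4$ in the cross-moments. Allowing the constant $C$ to depend on $t_0$ and blow up as $t_0\to 0$ is what makes this tractable, and is exactly why the theorem is stated from step $\lceil t_0/\eta^2\rceil$ onward rather than from $k=0$.
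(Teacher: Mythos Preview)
Your proposal is correct and follows essentially the same route as the paper: reduce to an auxiliary SDE with globally smooth coefficients (the paper does this via the transition function $\tau$ and $\mu(u)$ in \Cref{thm:app_adam_aux_sde}), then verify the three conditions of the general order-1 weak approximation theorem (\Cref{thm:app_many_step}) by computing and matching the first three moments of the discrete and continuous one-step increments (the paper's \Cref{lem:adam_discrete_moments,lem:adam_sde_moments}). Your explicit treatment of $\abssm{(1-\beta_i^{k})-\gamma_i(k\eta^2)}=O(\eta^2)$ on $[t_0,T]$ is a point the paper's exposition leaves largely implicit, and your identification of the $t_0$ cutoff as what keeps $1/\gamma_1$ bounded is exactly the role it plays there.
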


\begin{proof}[Proof Sketch]
We provide a proof sketch for our SDE approximations here and defer the technical details to \Cref{thm:app_rmsprop_sde,thm:app_adam_sde}.
The proof follows the same steps as \cite{li2019stochastic}: first, we compute the approximation error of the continuous trajectory after one step in discrete time.
Then, we use the single step error to bound the error over a finite interval of time. 
The proof extends standard SDE techniques in several ways.
The given SDEs do not satisfy the Lipschitzness and smoothness conditions because the denominator can be unbounded. We thus construct an auxiliary SDE with an equivalent distribution to the desired SDE (\Cref{thm:app_rmsprop_aux_sde}) but with better smoothness conditions,
and we prove this SDE to be an order-1 weak approximation of the discrete trajectory.
Moreover, the SDE coefficients are time-dependent for Adam, unlike the ones for SGD, so we need to extend existing results to cover this case (see~\Cref{sec:app_time_dependent}).
\end{proof}

\section{Square Root Scaling Rule}

The SDE approximations in \Cref{def:rmsprop_sde,def:adam_sde} motivate scaling rules for how to adjust the optimization hyperparameters when changing the batch size.
In order for $\sigma_0, c_1, c_2,$ and $\epsilon_0$ to remain constant, as required by the SDEs, one needs to change $\eta,\beta,\beta_1,\beta_2$, and $\epsilon$ accordingly.
\begin{definition}[RMSprop Scaling Rule]\label{def:rmsprop_scaling}
	When running RMSprop (\Cref{def:rmsprop}) with batch size $B' = \kappa B$, use the hyperparameters $\eta' = \eta\sqrt{\kappa}$, $\beta' = 1 - \kappa(1-\beta)$, and $\epsilon' = \frac{\epsilon}{\sqrt{\kappa}}$. 
\end{definition}
\begin{definition}[Adam Scaling Rule]\label{def:adam_scaling}
	When running Adam (\Cref{def:adam}) with batch size $B' = \kappa B$, use the hyperparameters $\eta' = \eta\sqrt{\kappa}$, $\beta_1' = 1 - \kappa(1-\beta_1)$, $\beta_2' = 1 - \kappa(1-\beta_2)$, and $\epsilon' = \frac{\epsilon}{\sqrt{\kappa}}$.
\end{definition}

\begin{theorem}[Validity of the Scaling Rules]\label{thm:scaling_rules}
	Suppose we have a well-behaved NGOS satisfying the low skewness and bounded moments conditions.
	\begin{enumerate}[leftmargin=*]
	    \item Let $\vx_k^{(B)}$ be the discrete RMSprop (\Cref{def:rmsprop})
	    trajectory with batch size $B$ and hyperparameters $\eta,\beta,$ and
	    $\epsilon$. Let $\vx_k^{(\kappa B)}$ be the trajectory with batch size
	    $\kappa B$ and hyperparameters adjusted according to
	    \Cref{def:rmsprop_scaling}.
		If $\vx_k^{(B)}$ and $\vx_k^{(\kappa B)}$ start from the same initial
	    point, then with $g$ and $T$ defined as in \Cref{def:weak_approx},
	    \[
		    \max_{k=0,...,\lfloor T/\eta^2\rfloor} \abs{ \E g(\vx_k^{(B)}) - \E g(\vx_{\lfloor k/\kappa\rfloor}^{(\kappa B)}) } \leq C(1+\kappa)\eta^2.
	    \]
	    \item Fix $t_0 > 0$. Let $\vx_k^{(B)}$ be the discrete Adam (\Cref{def:adam})
	    trajectory with batch size $B$ and hyperparameters
	    $\eta,\beta_1,\beta_2,$ and $\epsilon$. Let $\vx_k^{(\kappa B)}$ be the
	    trajectory with batch size $\kappa B$ and hyperparameters adjusted
	    according to \Cref{def:adam_scaling}. 
		If $\vx_{\lceil t_0 / \eta^2 \rceil}^{(B)}$ and $\vx_{\lceil \kappa t_0 / \eta^2 \rceil}^{(\kappa B)}$ are equal,
		then with $g$ and $T$ defined as in \Cref{def:weak_approx},
	\[
		\max_{k=\lceil t_0/\eta^2\rceil,...,\lfloor T/\eta^2\rfloor} | \E g(\vx_k^{(B)}) - \E g(\vx_{\lfloor k/\kappa\rfloor}^{(\kappa B)}) | \leq C(1+\kappa)\eta^2.
    \]
	\end{enumerate}
\end{theorem}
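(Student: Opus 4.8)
The plan is to route both discrete trajectories through the single SDE that each one provably approximates, and then close the residual gap using the time-regularity of that SDE. The first step is to check that the scaling rule leaves every SDE coefficient invariant. Using $\sigma \sim 1/\sqrt{B}$, so that $\sigma' = \sigma/\sqrt{\kappa}$, the substitutions $\eta' = \eta\sqrt{\kappa}$, $\beta' = 1-\kappa(1-\beta)$, $\epsilon' = \epsilon/\sqrt{\kappa}$ give $\sigma_0' = \sigma'\eta' = \sigma_0$, $c_2' = (1-\beta')/(\eta')^2 = c_2$, and $\epsilon_0' = \epsilon'\eta' = \epsilon_0$ (and analogously $c_1' = c_1$ for Adam). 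Since the NGOS data $f,\mSigma$ are unchanged by the batch size and the low-skewness and bounded-moments conditions hold uniformly over all noise scales by \Cref{def:low-skew,def:bound-moment}, the adjusted hyperparameters still satisfy every hypothesis of \Cref{thm:rmsprop_sde,thm:adam_sde}. Hence the $B$-trajectory and the $\kappa B$-trajectory are approximated by the \emph{same} SDE $\vX_t$ from \Cref{def:rmsprop_sde} (resp.\ \Cref{def:adam_sde}); crucially, because the constant $C$ in those theorems depends only on the (invariant) SDE coefficients, on $T$, and on the test function $g$, the same $C$ serves both trajectories.

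\textbf{Applying the approximation theorems.} Next I would invoke the relevant theorem for each trajectory, tracking the two different effective step sizes. The $B$-trajectory has effective step $\eta^2$, so $|\E g(\vx_k^{(B)}) - \E g(\vX_{k\eta^2})| \le C\eta^2$. The $\kappa B$-trajectory has effective step $(\eta')^2 = \kappa\eta^2$, so the same theorem with the scaled hyperparameters gives $|\E g(\vx_j^{(\kappa B)}) - \E g(\vX_{j\kappa\eta^2})| \le C\kappa\eta^2$. Setting $j = \lfloor k/\kappa\rfloor$ aligns the two discrete indices with nearby continuous times: the $\kappa B$-iterate sits at SDE time $\lfloor k/\kappa\rfloor\,\kappa\eta^2$, which differs from $k\eta^2$ by at most $\kappa\eta^2$ because $k - \kappa\lfloor k/\kappa\rfloor < \kappa$.

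\textbf{Bridging the two SDE times.} The remaining piece is to bound $|\E g(\vX_{k\eta^2}) - \E g(\vX_{\lfloor k/\kappa\rfloor\kappa\eta^2})|$, a difference of $\E g(\vX_{\cdot})$ evaluated at two nearby times of the \emph{same} process. I would show that $t \mapsto \E g(\vX_t)$ is Lipschitz with a constant $L$ independent of $\eta$: by the backward Kolmogorov equation its time-derivative equals $\E[\mathcal{A}g(\vX_t)]$ for the SDE generator $\mathcal{A}$, and since $g$ has polynomial growth while \Cref{as:f} makes the drift and diffusion smooth with polynomial-growth derivatives, $\mathcal{A}g$ again has polynomial growth; the bounded-moments control on $\vX_t$ over $[0,T]$ then makes $\E[\mathcal{A}g(\vX_t)]$ uniformly bounded. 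This yields $|\E g(\vX_{k\eta^2}) - \E g(\vX_{\lfloor k/\kappa\rfloor\kappa\eta^2})| \le L\kappa\eta^2$, and a triangle inequality through $\vX_{k\eta^2}$ and $\vX_{\lfloor k/\kappa\rfloor\kappa\eta^2}$ combines the three bounds into $C(1+\kappa)\eta^2$ after absorbing $L$ into $C$. For Adam I would run the identical argument but start both approximations at the common SDE time $t_0$, matching the initial states of the two trajectories there (this is precisely the equality hypothesis on the $\lceil t_0/\eta^2\rceil$-th iterate of the $B$-trajectory and the corresponding big-batch iterate), so that the two SDEs genuinely coincide on $[t_0,T]$.

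\textbf{Main obstacle.} I expect the delicate part to be the time-regularity bound with a constant that is genuinely uniform in $\eta$ and controlled in $\kappa$, rather than the routine algebra of coefficient invariance. The crux is that the constants furnished by \Cref{thm:rmsprop_sde,thm:adam_sde} must be shown identical for both batch sizes, which hinges on their depending only on the invariant SDE data; this prevents a hidden $\kappa$-dependence from inflating the error beyond $C(1+\kappa)\eta^2$. The explicit $(1+\kappa)$ factor should then arise solely from the larger effective step $\kappa\eta^2$ of the big-batch trajectory together with the time-bridging term, while the $B$-trajectory contributes only the $\eta^2$ summand.
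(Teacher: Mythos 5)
Your proposal is correct and follows the same basic route as the paper's proof: check that the scaling rule leaves $\sigma_0$, $c_1$, $c_2$, $\epsilon_0$ (hence the SDE) invariant, apply \Cref{thm:rmsprop_sde,thm:adam_sde} to each trajectory with its own effective step size ($\eta^2$ versus $\kappa\eta^2$), and combine via the triangle inequality through the common SDE. Where you go beyond the paper is the time-alignment term: the iterate $\vx_{\lfloor k/\kappa\rfloor}^{(\kappa B)}$ sits at SDE time $\lfloor k/\kappa\rfloor\,\kappa\eta^2$, which can differ from $k\eta^2$ by up to $\kappa\eta^2$, and you explicitly bound $\abs{\E g(\vX_{k\eta^2}) - \E g(\vX_{\lfloor k/\kappa\rfloor\kappa\eta^2})}$ by proving $t\mapsto\E g(\vX_t)$ is Lipschitz on $[0,T]$ via the Kolmogorov backward equation, polynomial growth of the generator applied to $g$, and moment bounds on $\vX_t$. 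The paper's two-sentence proof simply adds the two approximation bounds and never mentions this bridging step, so your argument is strictly more complete; the extra $L\kappa\eta^2$ term is harmlessly absorbed into the stated $C(1+\kappa)\eta^2$. Your point that the constant $C$ in the approximation theorems depends only on the invariant SDE data, $T$, and $g$ --- so that one constant serves both batch sizes --- is likewise the correct justification for a step the paper uses implicitly but does not articulate. One minor condition you could state explicitly: the scaled effective step $\kappa\eta^2$ must still lie in the admissible range $(0, 1 \land T \land \tfrac{1}{2c_2})$ required by \Cref{thm:app_rmsprop_sde,thm:app_adam_sde}, which is the regime in which the theorem is meant to be read.
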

\begin{proof}
	By the linearity of covariance, scaling the batch size by $\kappa$ only
	modifies the NGOS by scaling $\sigma$ by $1/\sqrt{\kappa}$. Hence, both
	scaling rules ensure that $\sigma_0, c_1, c_2$, and $\epsilon_0$ (and thus, the SDEs) are
	unchanged when the batch size changes. The approximation bounds in
	\Cref{thm:rmsprop_sde,thm:adam_sde} are in terms of $\eta^2$, and since
	$\eta$ is scaled here by $\sqrt{\kappa}$, the same method gives an upper bound $C \kappa \eta^2$.
	Adding the approximation bounds for $\eta$ and $\sqrt{\kappa} \eta$ together gives $C(1 + \kappa)\eta^2$.
\end{proof}
\begin{remark}
The $t_0$ condition on the Adam rule, a holdover from the condition in \Cref{thm:adam_sde},
implies that our theory only directly applies when there is a warm-start phase of 
$\lceil t_0 / \eta^2 \rceil$,
where the marginal distribution of the trainable parameters at the end of the phase is the same across different learning rates $\eta$.
Regardless, the scaling rules are shown to work in practice even without this phase.
\end{remark}

The scaling rules depend on maintaining the same SDE approximation, so the bounded moments and low skewness
conditions are sufficient (but not necessary) for this scaling rule to work. \cite{li2021validity} provided an analogous discussion for SGD, and they show the scaling rule can hold even
if there is heavy-tailed noise. We leave a study of heavy-tailed gradient noise in adaptive algorithms as future work.
\begin{figure}[t]
     \centering
     \begin{minipage}[t]{0.24\textwidth}
         \centering\includegraphics[width=1.15\linewidth]{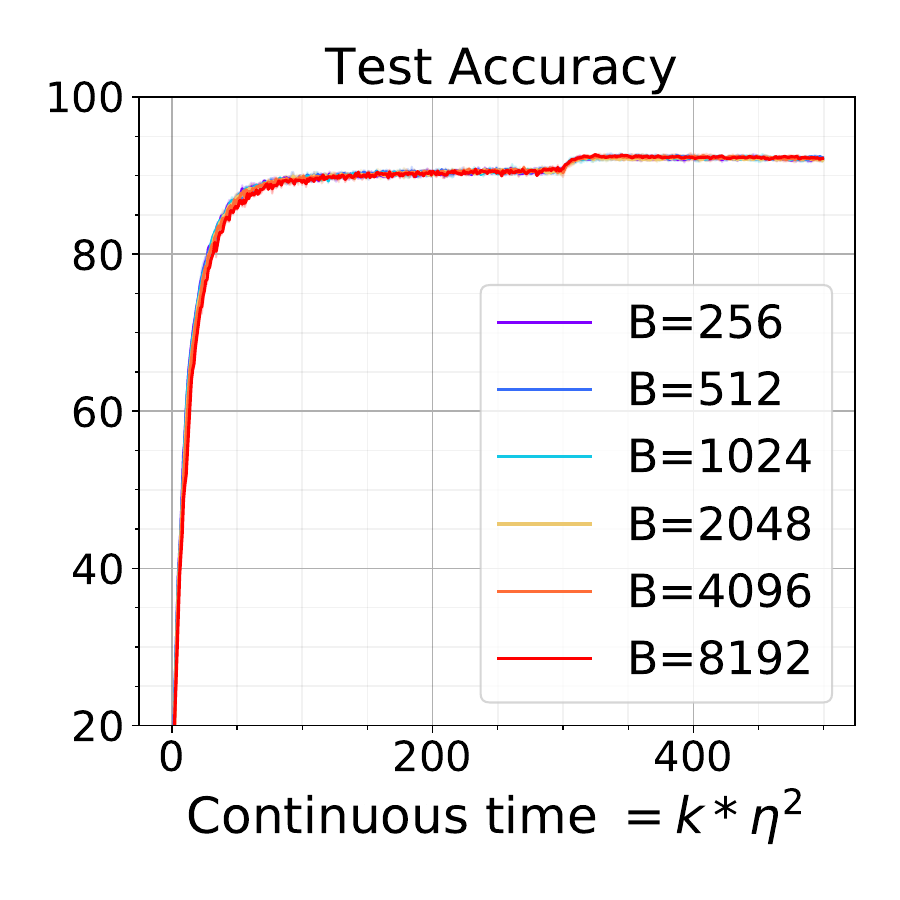} \\
        (a) VGG-16 with Adam
     \end{minipage}
     \hfill
     \begin{minipage}[t]{0.24\textwidth}
         \centering\includegraphics[width=1.15\linewidth]{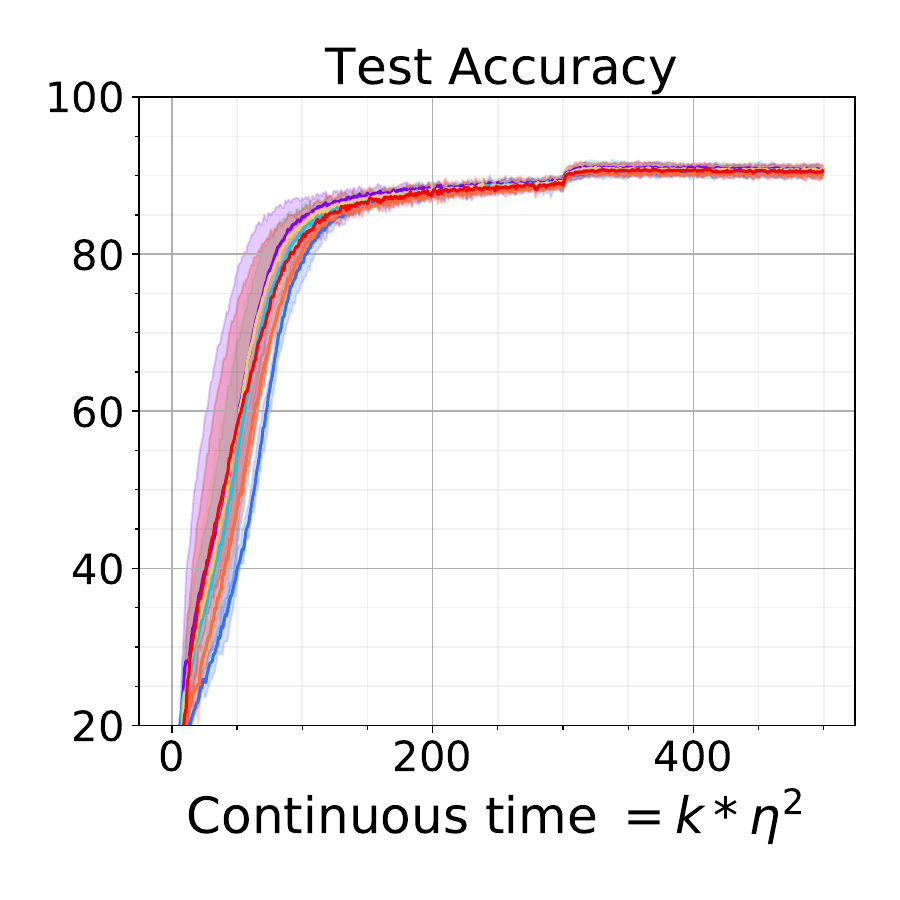} \\
        (b) ResNet-50 with Adam
         \end{minipage}
         \hfill
     \begin{minipage}[t]{0.24\textwidth}
         \centering\includegraphics[width=1.15\linewidth]{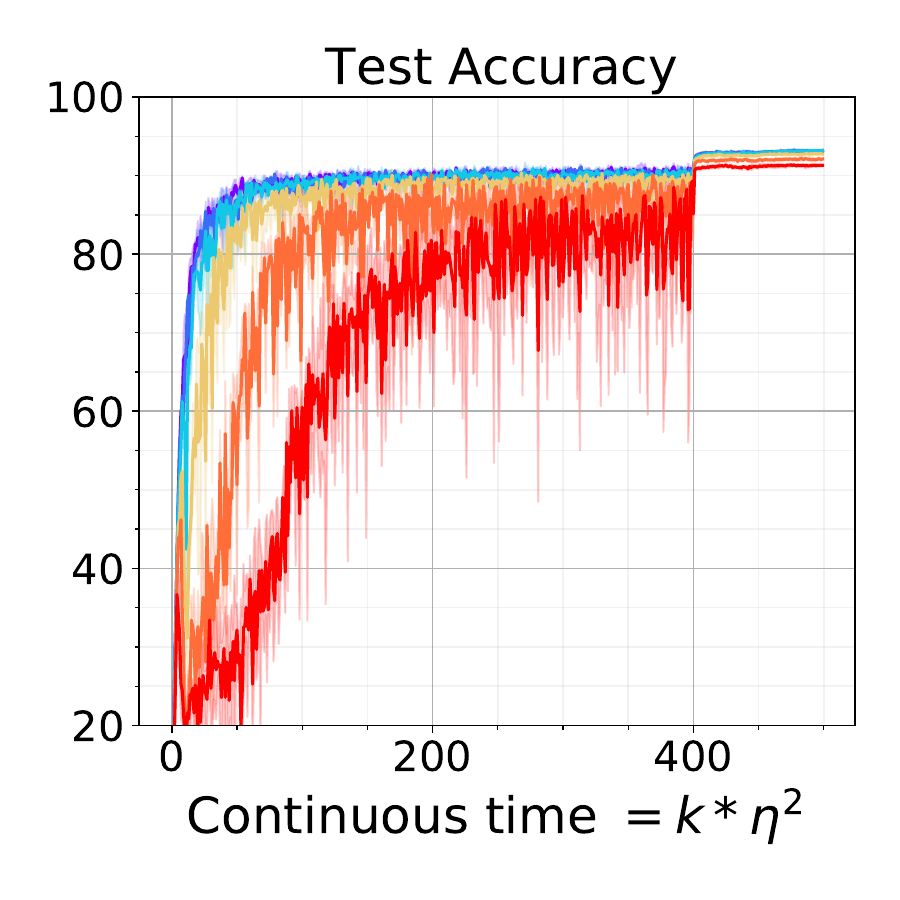}\\
        (c) VGG-16 with RMSprop
     \end{minipage}
     \hfill
     \begin{minipage}[t]{0.24\textwidth}
         \centering\includegraphics[width=1.15\linewidth]{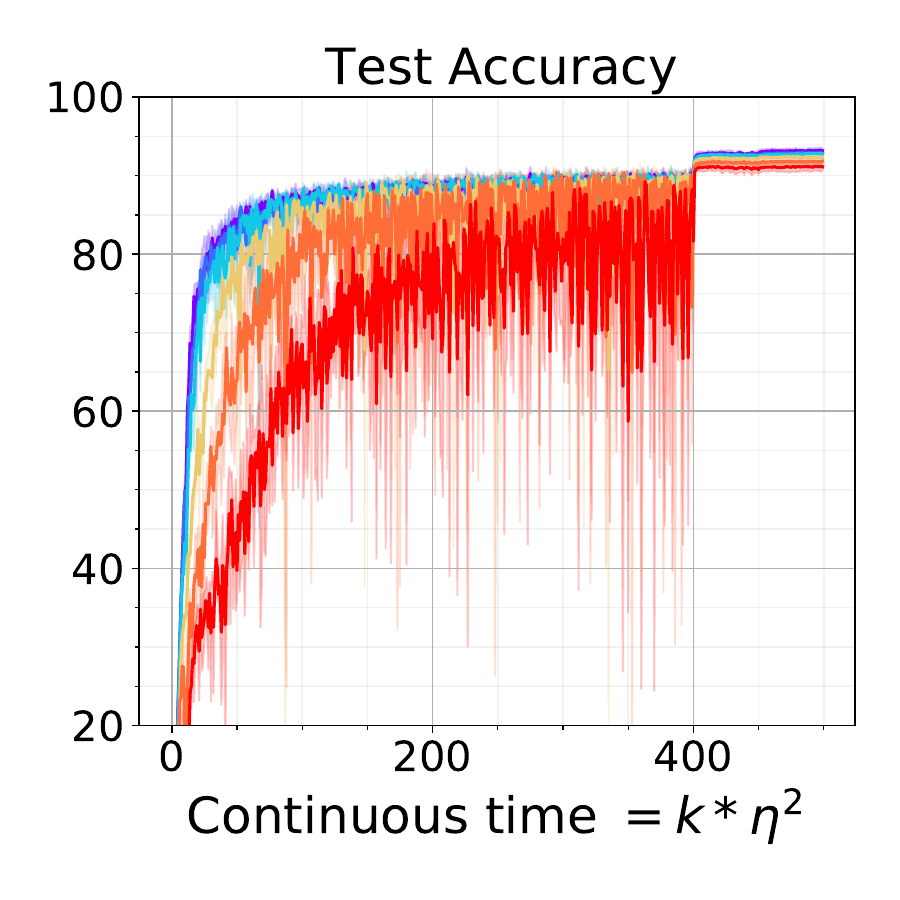} \\
        (d) ResNet-50 with RMSprop
     \end{minipage}
  \caption{Square root scaling rule experiments on CIFAR-10 with VGG-16 and ResNet-50 (details in \cref{sec:app_exp_config}). We plot the mean and variance of 3 random seeds. Same color legend has been used across all the plots. The performance gap between $B=256$ and $B=8192$ is at most $3\%$ in all cases.}
  \label{fig:testaccplots_cifarmain}
\end{figure}

\begin{figure}[t]
     \centering
     \begin{minipage}[t]{0.32\textwidth}
         \centering\includegraphics[width=\linewidth]{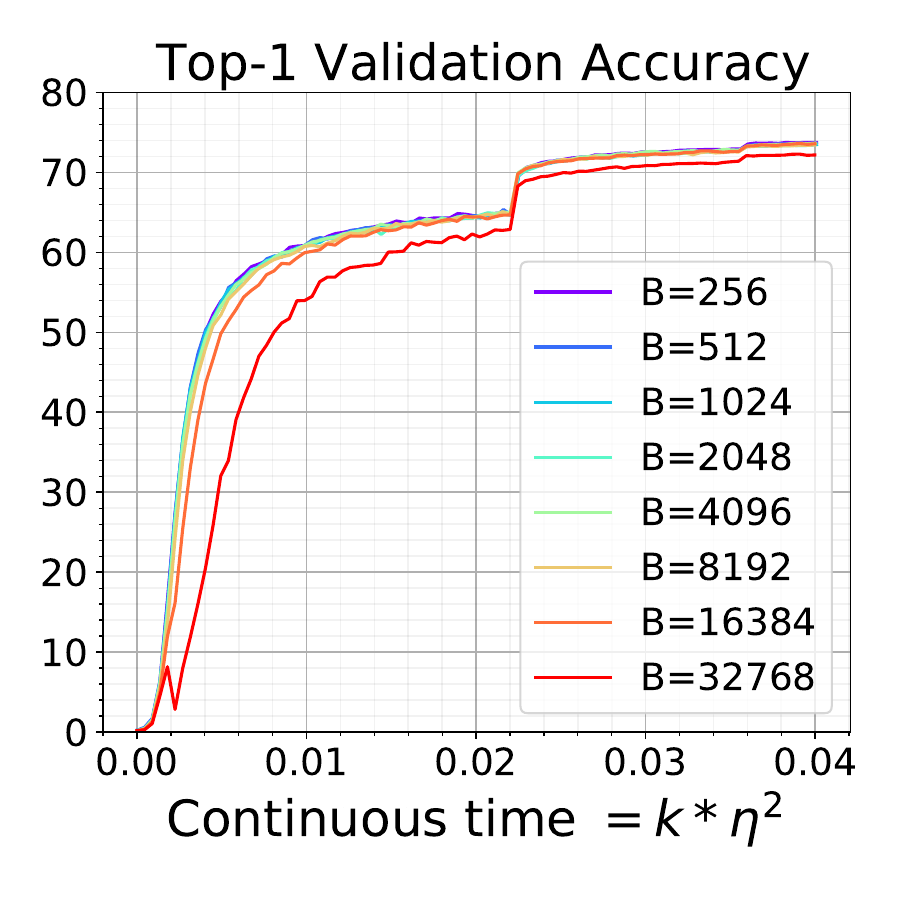}\\
        (a) ResNet-50 on ImageNet
     \end{minipage}\hfill
     \begin{minipage}[t]{0.32\textwidth}
         \centering\includegraphics[width=\linewidth]{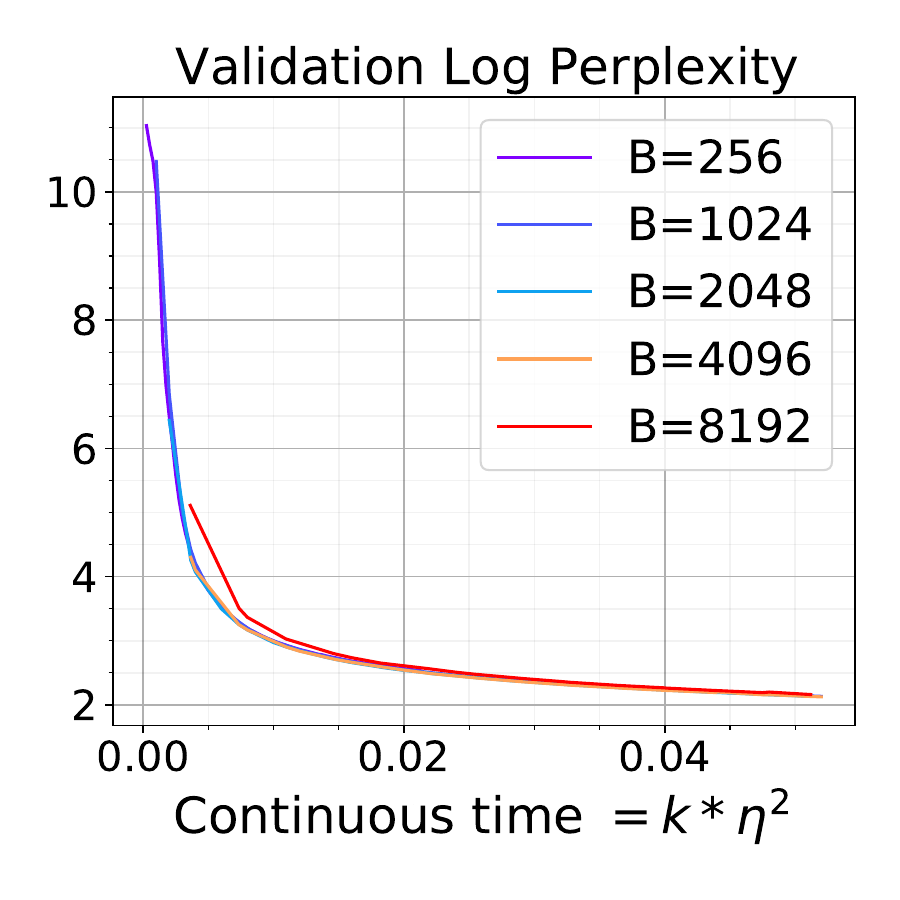} \\
         (b) RoBERTa on Wikipedia + Bookcorpus
     \end{minipage}\hfill
     \begin{minipage}[t]{0.32\textwidth}
         \centering\includegraphics[width=\linewidth]{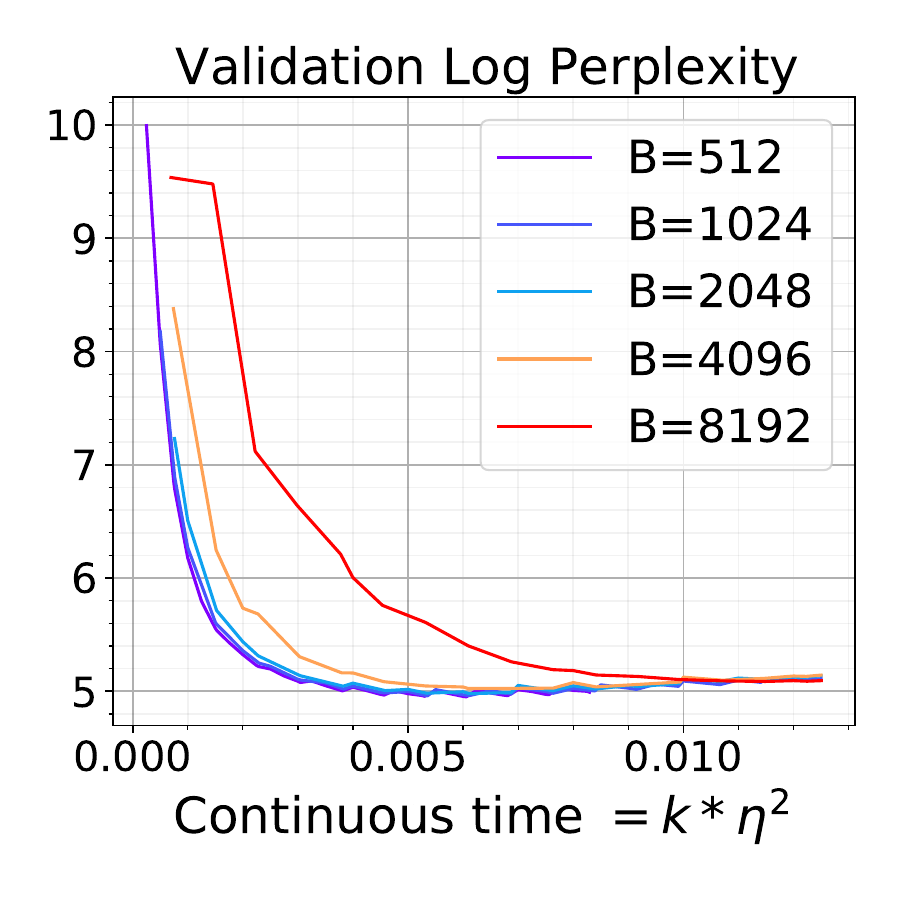} \\
        (c) GPT on WikiText-103
     \end{minipage}
  \caption{Large scale square root scaling rule experiments (details in \cref{sec:app_exp_config}). Small and large batch models differ by at most $1.5\%$ test accuracy in vision and 0.5 perplexity in language.}
  \label{fig:largescaleexpts_mainpaper}
\end{figure}

\paragraph{Experiments.} \Cref{fig:testaccplots_cifarmain,fig:largescaleexpts_mainpaper} show the square root scaling rule applied to ResNet-50~\citep{he2016deep} and VGG-16~\citep{simonyan2014very} trained on CIFAR-10~\citep{cifar10}, RoBERTa-large~\citep{liu2019roberta} trained on the Wiki+Books corpus~\citep{zhu2015aligning}, 12-layer GPT~\citep{brown2020language} on WikiText-103~\citep{merity2017pointer}
and ResNet-50 trained on ImageNet~\citep{deng2009imagenet}. We use the efficient language model pre-training recipe outlined in
\cite{izsak2021how}. \cref{sec:app_sqsr_scaling_exps} has many additional settings, including ablations against other scaling rules (\Cref{sec:app_sqsr_ablation}).

\section{SVAG for Adaptive Algorithms}\label{sec:svag}
Validating the approximation strength captured in \Cref{def:weak_approx} involves comparing the discrete algorithms and their SDEs on a set of test functions.
However, obtaining the SDE solution
through traditional simulations, e.g., Euler-Maruyama, is computationally
intractable.\footnote{One can also simulate the SDE by constructing 1st-order weak approximations while taking $\eta\to 0$ along the scaling rules, but the batch size cannot be smaller than 1 and hence $\eta$ cannot go arbitrarily close to the limit.}




\cite{li2021validity} proposed an efficient simulation, SVAG, of the SDE for SGD
in the finite LR regime: scale the constant LR by $1/\ell$ and take the
limit $\ell\to\infty$. In practice the simulation converges for a small value
of $\ell$.
We adapt SVAG technique to simulate our proposed SDEs, which requires additionally adjusting the moment averaging hyperparameters (i.e., $\beta$, $\beta_1$, $\beta_2$) and $\epsilon$.
\begin{definition}[SVAG Operator]\label{def:svag_op}
	Given an NGOS $\gG_{\sigma} = (f, \mSigma, \DatZ_\sigma)$ with scale $\sigma$ (\Cref{def:NGOS}) and hyperparameter $\ell \ge 1$,
	the SVAG operator transforms $\gG_{\sigma}$ into an NGOS $\widehat{\gG}_{\ell \sigma} = (f, \mSigma,
	\widehat{\DatZ}_{\ell \sigma})$ with scale $\ell \sigma$.
	The NGOS $\widehat{\gG}_{\ell \sigma}$ takes an input $\vtheta$ and returns $\hat{\vg} = r_1(\ell) \vg_1 + r_2(\ell)
	\vg_2$, where $\vg_1, \vg_2$ are two stochastic gradients from $\gG_{\sigma}(\vtheta)$ and 
		$r_i(\ell) =\frac{1}{2}(1+(-1)^i\sqrt{2\ell^2-1})$ for $i \in \{1, 2\}$.
	 The probability distribution
	$\widehat{\DatZ}_{\ell \sigma}$ is defined such that $\hat{\vg}$ has the same distribution as $\nabla f(\vtheta) + \ell \sigma \vz$ when $\vz \sim \widehat{\DatZ}_{\ell \sigma}(\vtheta)$.
\end{definition}
\Cref{lem:app_svag_first_two} verifies that $\widehat{G}_{\ell \sigma}$ does indeed compute stochastic
gradients for $f$ with covariance $\mSigma$. 
Applying the SVAG operator to mini-batch training amplifies the
noise scale by $\ell$. We then apply the square root scaling rule to adjust
the learning rates and other hyperparameters accordingly, which yields the SVAG
algorithm.
\begin{definition}[SVAG Algorithm] \label{def:svag_alg}
	For a loss $f$, SVAG operator hyperparameter $\ell>0$, and optimization hyperparameters
	$\eta,\beta,\beta_1,\beta_2,$ and $\epsilon$,
	compute the stochastic gradient as
	$\hat{\vg} = r_1(\ell) \vg_{\gamma_1} + r_2(\ell) \vg_{\gamma_2}$, where $r_1$ and $r_2$ are defined as in
	\Cref{def:svag_op}, and scale the optimization hyperparameters:
	\begin{enumerate}[leftmargin=*]
		\item For RMSprop, set $\eta \gets \eta/\ell$, $\beta \gets 1 - (1-\beta)/\ell^2$, and $\epsilon \gets \epsilon \ell$ and apply updates as in \Cref{def:rmsprop}.
		\item For Adam, set $\eta \gets \eta/\ell$, $\beta_1 \gets 1 - (1-\beta_1)/\ell^2$, $\beta_2 \gets 1 - (1-\beta_2)/\ell^2$ and $\epsilon \gets \epsilon \ell$ and apply updates as in \Cref{def:adam}.
	\end{enumerate}
\end{definition}
The SVAG algorithm describes a discrete trajectory that is a 1st-order approximation of the corresponding SDE (\Cref{def:rmsprop_sde,def:adam_sde}), thereby providing an efficient simulation of the SDEs.
\begin{theorem}[SVAG algorithm approximates SDE]\label{thm:svag}
	Assume the NGOS is well-behaved and satisfies the bounded moments condition (\Cref{as:f,def:bound-moment}).
	\begin{enumerate}
		\item Let $\vX_t$ be the state of the RMSprop SDE (\Cref{def:rmsprop_sde}) with hyperparameters $\eta$, $\beta$, and $\epsilon$. Let $\vx_k$ be the state of the analogous discrete SVAG algorithm with hyperparameter $\ell$.
		\item Let $\vX_t$ be the state of the Adam SDE (\Cref{def:adam_sde}) with hyperparameters $\eta$, $\beta_1$, $\beta_2$, and $\epsilon$. Let $\vx_k$ be the state of the analogous discrete SVAG algorithm with hyperparameter $\ell$.
	\end{enumerate}
	In both 1 and 2, following holds for $g$ and $T$ as in \Cref{def:weak_approx}.
	\[
		\max_{k=0,...,\lfloor \ell^2 T/\eta^2\rfloor} | \E g(\vx_k) - \E g(\vX_{k\eta^2/\ell^2}) | \leq C\eta^2 / \ell^2
	\]
\end{theorem}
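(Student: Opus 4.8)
The plan is to recognize that running the SVAG algorithm with parameter $\ell$ is nothing but running the original adaptive algorithm, with a rescaled step size, on a noise-amplified oracle, and that the hyperparameter rescaling in \Cref{def:svag_alg} is engineered precisely so that the limiting SDE is left unchanged. This lets me invoke \Cref{thm:rmsprop_sde,thm:adam_sde} as black boxes rather than redoing the single-step error and accumulation analysis. First I would make the identification precise: the SVAG algorithm for RMSprop is exactly RMSprop (\Cref{def:rmsprop}) run on the transformed oracle $\widehat{\gG}_{\ell\sigma}$ of \Cref{def:svag_op} with effective hyperparameters $\tilde\eta = \eta/\ell$, $\tilde\beta = 1-(1-\beta)/\ell^2$, $\tilde\epsilon = \epsilon\ell$ and effective scale $\tilde\sigma = \ell\sigma$ (and analogously for Adam, additionally rescaling $\beta_1,\beta_2$). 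By \Cref{lem:app_svag_first_two}, $\widehat{\gG}_{\ell\sigma}$ has the correct mean $\nabla f(\vtheta)$ and covariance $\mSigma(\vtheta)$ at scale $\ell\sigma$, using $r_1(\ell)+r_2(\ell)=1$ and $r_1(\ell)^2+r_2(\ell)^2=\ell^2$.

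Next I would verify that this rescaling fixes every SDE parameter. A direct computation gives $\tilde\sigma_0 = \tilde\sigma\tilde\eta = \sigma\eta = \sigma_0$, $\tilde\epsilon_0 = \tilde\epsilon\tilde\eta = \epsilon\eta = \epsilon_0$, and $\tilde c_2 = (1-\tilde\beta)/\tilde\eta^2 = (1-\beta)/\eta^2 = c_2$ (and $\tilde c_1 = c_1$ in the Adam case). Hence the SDE governing the SVAG trajectory is literally the SDE of \Cref{def:rmsprop_sde} (resp.\ \Cref{def:adam_sde}), and the only quantity that changes relative to \Cref{thm:rmsprop_sde,thm:adam_sde} is the effective step size, which shrinks from $\eta$ to $\tilde\eta = \eta/\ell$.

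Before applying those theorems I must check that $\widehat{\gG}_{\ell\sigma}$ satisfies their hypotheses uniformly in $\ell$. Well-behavedness (\Cref{as:f}) is immediate since $f$ and $\mSigma$ are untouched. Writing the normalized noise as $\hat\vz = (r_1(\ell)\vz_1 + r_2(\ell)\vz_2)/\ell$ with $\vz_1,\vz_2$ i.i.d.\ from the base oracle, Minkowski's inequality together with $\abssm{r_1(\ell)}+\abssm{r_2(\ell)} = \sqrt{2\ell^2-1} \le \sqrt{2}\,\ell$ yields the bounded moments bound (\Cref{def:bound-moment}) with a constant independent of $\ell$. The crucial and least routine point is the low-skewness condition (\Cref{def:low-skew}), which \Cref{thm:rmsprop_sde,thm:adam_sde} require but which is not assumed here: by independence and zero mean, $\E[\hat\vz^{\otimes 3}] = \tfrac{r_1(\ell)^3+r_2(\ell)^3}{\ell^3}\,\E[\vz^{\otimes 3}]$, and since $r_1(\ell)^3+r_2(\ell)^3 = \tfrac{3\ell^2-1}{2}$ one obtains $(\ell\sigma)\,\abssm{\E[\hat\vz^{\otimes 3}]} \le \tfrac{3\sigma}{2}\,\E[\normtwosm{\vz}^3]$, a function of polynomial growth independent of $\ell$. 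Thus low skewness holds for $\widehat{\gG}_{\ell\sigma}$ as a free consequence of the bounded moments assumption alone—this is exactly the reason for the particular choice of $r_1,r_2$ in \Cref{def:svag_op}, which makes the noise progressively more Gaussian as $\ell$ grows.

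Finally I would invoke \Cref{thm:rmsprop_sde} (resp.\ \Cref{thm:adam_sde}) for the transformed oracle at step size $\tilde\eta = \eta/\ell$: the error bound $C\tilde\eta^2$ becomes $C\eta^2/\ell^2$, the horizon $\lfloor T/\tilde\eta^2\rfloor$ becomes $\lfloor \ell^2 T/\eta^2\rfloor$, and the time-scaling $k\mapsto k\tilde\eta^2$ becomes $k\mapsto k\eta^2/\ell^2$, which is exactly the claimed statement. I expect the low-skewness verification to be the main obstacle, since it is what licenses dropping the low-skewness hypothesis on the base oracle; for Adam one additionally has to carry the warm-start offset $t_0$ of \Cref{thm:adam_sde} through the $1/\ell^2$ time rescaling, but this only reindexes the starting iterate and leaves the bound unaffected.
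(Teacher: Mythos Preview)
Your proposal is correct and follows essentially the same approach as the paper: verify that the SVAG operator yields a well-behaved NGOS with bounded moments and, crucially, low skewness (this is \Cref{lem:app_svag_low_skew} in the paper, whose computation of $r_1^3+r_2^3$ matches yours), and then invoke \Cref{thm:rmsprop_sde,thm:adam_sde} at effective step size $\eta/\ell$. Your remark that the Adam warm-start offset $t_0$ must be carried through the $1/\ell^2$ rescaling is a nice touch of care that the paper's statement glosses over.
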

\begin{proof}
    The main idea of the proof is to show that the SVAG operator transforms the
    noise distribution of a well-behaved NGOS satisfying the bounded moments
    condition into one that is well-behaved and satisfies the
    bounded moments and the low skewness conditions
    (\Cref{lem:app_svag_low_skew}).
    With these three conditions satisfied, we can directly apply \Cref{thm:rmsprop_sde,thm:adam_sde} to complete the proof.
\end{proof}

\begin{figure*}[t]
 \begin{minipage}{1\textwidth}
      \centering
      \begin{minipage}{\linewidth}
          \begin{figure}[H]
              \centerline{\includegraphics[width=1.25\linewidth]{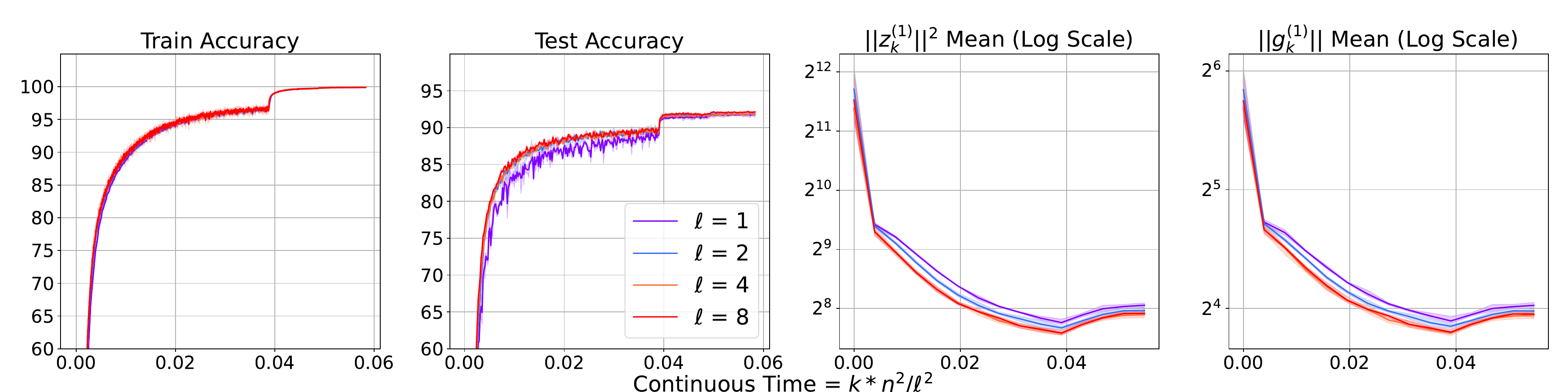}}
    \caption{SVAG on the Adam trajectory when training ResNet-50 on CIFAR-10 matches the discrete trajectory ($\ell=1$) on various test functions (see \cref{sec:app_exp_config} for details). 
    The closeness of the trajectories with respect to various test functions for different values of $\ell$ implies the SDE approximation (\Cref{def:adam_sde}) is a 1st-order weak approximation of Adam (\Cref{thm:adam_sde}).}
                \label{fig:adamsvag}
          \end{figure}
      \end{minipage}
  \end{minipage}
\end{figure*}  

Because the SDE scales time as $k = t/\eta^2$, we must run SVAG for $\ell^2$ steps
to match a single step of the discrete trajectories. 
Nevertheless, we note that in our setting and in \cite{li2021validity}, the approximation guarantee is strong enough for small $\ell$, so this simulation is still more efficient than Euler-Maruyama.

\paragraph{Experiments.} \Cref{fig:adamsvag} compares the Adam SVAG trajectories (\Cref{def:svag_alg}) up to $\ell=8$ to the discrete one ($\ell=1$) on CIFAR-10~\citep{cifar10} with ResNet-50~\citep{he2015delving}. 
We use $\Tr(\mSigma(\vtheta_k))$ and $\|\vg_k\|$ as mathematically well-behaved test functions to test the approximation strength (see \Cref{def:weak_approx}).
We also measure the train and test accuracies, even though they are not differentiable (and hence, not covered by the theory).
The converged SVAG trajectories are close to the discrete ones under these test functions, suggesting the SDE approximations are applicable to realistic deep learning settings.
Additional details and settings, including large language models, are in \cref{sec:app_exp_svag}.




\section{Conclusion}\label{sec:conclusion}
We derive SDEs that are provable 1st-order approximations of the RMSprop and Adam trajectories, immediately yielding formal derivations of square root scaling rules: increase the learning rate by $\sqrt{\kappa}$ and adjust the adaptive hyperparameters when increasing batch size by $\kappa$.
Experiments in the vision and language domains verify that applying these rules ensures that the values of several test functions, including test accuracy, are preserved.
We furthermore design an efficient simulation for the SDEs, allowing us to directly validate the applicability of these SDEs to common vision and language settings.
These SDEs can lead to a deeper understanding of how adaptivity and stochasticity impact optimization and generalization, and we hope to extend our results to formal identification of necessary and sufficient conditions for the approximation and its consequences to hold.

\section*{Acknowledgements}
We thank Zhiyuan Li and Chao Ma for helpful discussion. We also thank Tianyu Gao and Alexander Wettig for helping us run language modeling experiments.
This work is funded by NSF, ONR, Simons Foundation, DARPA and SRC. 

\bibliography{bibliography}

\newpage
\section*{Checklist}

\begin{enumerate}

\item For all authors...
\begin{enumerate}
  \item Do the main claims made in the abstract and introduction accurately reflect the paper's contributions and scope?
    \answerYes
  \item Did you describe the limitations of your work?
    \answerYes{Conditions for our results are outlined in \Cref{as:f,def:low-skew,def:bound-moment}. In particular, we discuss the limited applicability of our work to cases in which the gradient noise is heavy-tailed, though our empirical success suggests that the gradient noise satisfies our assumptions in many realistic vision and language settings.}
  \item Did you discuss any potential negative societal impacts of your work?
    \answerYes{Our work is primarily theoretical in nature, but we discuss the broader impacts of our work in \Cref{sec:app_broader_impact}.}
  \item Have you read the ethics review guidelines and ensured that your paper conforms to them?
    \answerYes{}
\end{enumerate}

\item If you are including theoretical results...
\begin{enumerate}
  \item Did you state the full set of assumptions of all theoretical results?
    \answerYes{\Cref{sec:prelim} has a discussion of the assumptions on the gradient noise and the general requirements needed to be able to construct a continuous approximation of the discrete algorithms. Moreover, \Cref{sec:linear_warmup} motivates the assumption that the gradient noise dominates the gradient. We validate this assumption experimentally in \Cref{sec:app_noise_dominates}. Our assumptions do not stray from prior works on SDEs \citep{li2019stochastic,li2021validity}.}
  \item Did you include complete proofs of all theoretical results?
	\answerYes{Our proofs are written clearly in the appendix.}
\end{enumerate}

\item If you ran experiments...
\begin{enumerate}
  \item Did you include the code, data, and instructions needed to reproduce the main experimental results (either in the supplemental material or as a URL)?
    \answerYes{We include the code for the vision experiments in the supplementary material. For the NLP experiments, we use the code of \cite{wettig2022should}. }
  \item Did you specify all the training details (e.g., data splits, hyperparameters, how they were chosen)?
    \answerYes{\cref{sec:app_exp_config} contains the training details of all the experiments.}
        \item Did you report error bars (e.g., with respect to the random seed after running experiments multiple times)?
    \answerNo{Due to computational constraints, we didn't include the error bars for all the experiments. However, we have made sure that the random seed for different runs in each experimental setting is the same.}
        \item Did you include the total amount of compute and the type of resources used (e.g., type of GPUs, internal cluster, or cloud provider)?
    \answerYes{We ran our experiments on a cluster of 34 GPUs, where 24 are RTX 2080 GPUs and 10 are A5000 GPUs. Each experiment on CIFAR-10 required a single RTX 2080 GPU, each experiment on ImageNet required a single A5000 GPU, each pretraining experiment on GPT required a set of 4 RTX 2080 GPUs, each pretraining experiment on RoBERTa required a set of 8 RTX 2080 GPUs, and each finetuning experiment on RoBERTa required a single RTX 2080 GPU.}
\end{enumerate}

\item If you are using existing assets (e.g., code, data, models) or curating/releasing new assets...
\begin{enumerate}
  \item If your work uses existing assets, did you cite the creators?
    \answerYes{We cite \cite{wettig2022should} for our language model experiments. We also cite the papers of all the deep learning architectures and datasets that we use for our experiments.}
  \item Did you mention the license of the assets?
    \answerYes{The github repository\footnote{\url{https://github.com/princeton-nlp/DinkyTrain}} of \cite{wettig2022should} has MIT license.}
  \item Did you include any new assets either in the supplemental material or as a URL?
    \answerYes{We include our code for the vision experiments in the supplementary material.}
  \item Did you discuss whether and how consent was obtained from people whose data you're using/curating?
    \answerNA{The data we used for our experiments are publicly available.}
  \item Did you discuss whether the data you are using/curating contains personally identifiable information or offensive content?
    \answerNA{The data we used for our experiments are publicly available.}
\end{enumerate}

\item If you used crowdsourcing or conducted research with human subjects...
\begin{enumerate}
  \item Did you include the full text of instructions given to participants and screenshots, if applicable?
    \answerNA{}
  \item Did you describe any potential participant risks, with links to Institutional Review Board (IRB) approvals, if applicable?
    \answerNA{}
  \item Did you include the estimated hourly wage paid to participants and the total amount spent on participant compensation?
    \answerNA{}
\end{enumerate}

\end{enumerate}

\newpage
\appendix

\section{Contextualizing our Work}
\subsection{Additional Recent Works}\label{sec:app_addl_works} Variations of
Adam have been proposed to improve its speed of convergence, generalization, and
stability during training. \citet{amsgrad} observed that Adam does not collect
long-term memory of past gradients and therefore the effective learning rate
could be increasing in some cases. Hence, they propose AMSGrad that maintains a
maximum over the exponential running average of the squared gradients.
\citet{yogi} proposed a more controlled increase in the effective learning rate
by switching to additive updates, using a more refined version of
AdaGrad~\citep{duchi2011adagrad}. \citet{padam} unified the generalization ability
of vanilla SGD and the convergence speed of Adam by introducing a new adaptive
parameter $p \in (0, \frac{1}{2}]$ that can be hypertuned for each setting.
Other variations include (a) Nadam~\citep{dozat2016incorporating} that uses
Nesterov momentum, (b) AdamW~\citep{loshchilov2018decoupled} that decouples the
weight decay from the optimization step, (c) AdaBound~\citep{luo2018adaptive}
that maintains  a dynamic upper and lower bound on the step size, (d)
AdaBelief~\citep{zhuang2020adabelief} uses a decaying average of estimated
variance in the gradient in place of the running average of the squared
gradients, (e) QHAdam~\citep{ma2018quasihyperbolic} that replaces both momentum
estimators in Adam with quasi-hyperbolic terms, etc.
LAMB~\citep{you2020lamb} used a layerwise adaptive version of Adam to pretrain large language models efficiently. 

\subsection{Broader Impact}\label{sec:app_broader_impact}
Our work is primarily theoretical in nature, but we discuss its broader impacts here. 
\cite{strubell2019energy} highlighted the environmental impact of training large language models.
Formal scaling rules remove the need to grid search over hyperparameters: in the case of adaptive algorithms, the grid search is over an even larger space because of the additional adaptivity hyperparameters.
Hence, our work reduces the number of times researchers need to train very large models to find ones that work just as well as their smaller, slower counterparts.
At the same time, we recognize that the presence of a formal scaling rule may encourage people who were otherwise discouraged by the daunting grid search to train very large models.

\section{SDE Approximation Theorem}


In this section, we introduce a theorem in aid of our analysis
for approximating Stochastic Gradient Algorithm
(SGA) with Stochastic Differential Equation (SDE).

We consider Stochastic Gradient Algorithm (SGA) of the following form:
\begin{equation}
    \vx_{k+1} = \vx_k + \eeta \vh_k(\vx_k, \vxi_k, \eeta),
\end{equation}
where $\vx_k \in \R^D$ is the parameter vector, $\eeta$ is the learning rate,
$\vh_k$ is a vector-valued function that can depend on the current step $k$, the
current parameter $\vx_k$, a random vector $\vxi_k$,
and the learning rate $\eeta$.
The random vector $\vxi_k$ is sampled from a certain distribution $\DatXi(\vx_k, \eeta)$ in every step.

We consider Stochastic Differential Equation (SDE) of the following form:
\begin{equation} \label{eq:sde-general}
    \dd \vX_t = \vb(X_t, t) \dd t + \msigma(X_t, t) \dd \vW_t,
\end{equation}
where $\vb: \R^D \times \R \to \R^D$ is the drift vector function, $\msigma: \R^D \times \R \to \R^{D \times D}$ is the diffusion matrix function.

Let $\prdX(\vx, s, t)$ be the probability distribution of $\vX_t$ in \eqref{eq:sde-general} when the initial condition is $\vX_s = \vx$.
Now we define the following random variables to characterize the one-step changes of SGA and SDE.
\begin{align}
    \vDelta(\vx, k) &:= \eeta \vh_k(\vx, \vxi, \eeta), & & \text{where} \quad \vxi \sim \DatXi(\vx, \eeta). \label{eq:def-delta} \\
    \tilde{\vDelta}(\vx, k) &:= \vX_{(k+1)\eeta} - \vx, & & \text{where} \quad \vX_{(k+1)\eeta} \sim \prdX(\vx, k\eeta, (k+1)\eeta). \label{eq:def-tilde-delta}
\end{align}

The following regularity condition is needed for our main theorem.
\begin{definition}
   A function $g \colon \R^d \to \R$ is said to have {\em polynomial growth} if there exist positive integers $\kappa_1,\kappa_2>0$ such that 
	\[
		|g(\vx)| \leq \kappa_1(1+\normtwosm{\vx}^{2\kappa_2}),
	\]
	for all $\vx \in \R^d$. We let $G$ denote the set of all such functions. For each integer $\alpha\geq 1$, $G^\alpha$ denotes the set of $\alpha$-times continuously differentiable functions $\sR^d\to\sR$ which, together with its partial derivatives up to and including order $\alpha$, belong to $G$. If $g$ depends on additional parameters, we say $g\in G^\alpha$ uniformly if the constants $\kappa_1,\kappa_2$ are independent of those parameters. 
	\label{def:app_polygrowth_fn}
\end{definition}

Now we are ready to introduce the main theorem for the order of approximation by
modeling SGA as SDE. We follow the same proof strategy as
\cite{li2019stochastic}. First, we show that the moments of the one-step
difference for the SGA and for the SDE of our interest are close to each other.
Then, under some regularity conditions, we translate the one-step error to an
error over a finite interval of time to get the final result. The following is
our main theorem to achieve such translation, and it is adapted from Theorem 3
in \citet{li2019stochastic} with slightly different conditions to match our need
in the analysis of adaptive gradient methods.
\begin{theorem}[Adaption of Theorem 3 in \citet{li2019stochastic}]
    \label{thm:app_many_step} Let $T > 0$, $\eeta \in (0, 1 \land T)$ and $N =
    \lfloor T / \eeta \rfloor$. Consider an SDE with drift vector $\vb(\vx,
    \eeta)$ and diffusion matrix $\msigma(\vx, \eeta)$, and a stochastic
    gradient algorithm with initial point $\vx_0 \in \R^D$ and update rule
    $\vx_{k+1} = \vx_k + \eeta \vh(\vx_k, \vxi_k, \eeta)$. Let $\gX_k$ be the
    support of the random variable $\vx_k$ given $\vx_0$. Assume the following
    conditions hold:
    \begin{enumerate}[(1).]
        \item The drift function $\vb$ and diffusion function $\msigma$ are
        Lipschitz and belong to $G^4$;
        \item The moments of $\vDelta\in\sR^D$ and $\tilde{\vDelta}\in\sR^D$
        satisfy the following. There is a function $K_1 \in G$ (independent of
        $\eeta$) such that for all $\vx \in \gX_k$ and $1 \le i, j, l \le D$,
        \begin{align*}
            \abs{\E[\Delta_i(\vx, k) - \tilde{\Delta}_i(\vx, k)]} &\le K_1(\vx) \eeta^2, \\
            \abs{\E[\Delta_i(\vx, k)\Delta_j(\vx, k) - \tilde{\Delta}_i(\vx, k) \tilde{\Delta}_j(\vx, k)]} & \le K_1(\vx) \eeta^2, \\
            \abs{\E[\Delta_i(\vx, k)\Delta_j(\vx, k)\Delta_l(\vx, k) - \tilde{\Delta}_i(\vx, k) \tilde{\Delta}_j(\vx, k) \tilde{\Delta}_l(\vx, k)]} &\le K_1(\vx) \eeta^2.
        \end{align*}
        \item There exists a subset $P$ of the index set $\{1, 2, \dots, D\}$ such that the following holds.
        Below we use the notations $\normPsm{\vx} := \sqrt{\sum_{i \in P} x_i^2}$ and $\normRsm{\vx} := \sqrt{\sum_{i \notin P} x_i^2}$.
        \begin{itemize}
            \item There are constants $C_1 > 0, \omega_1 > 0$ (independent of $\eeta$) so that for all $k \le N$ and $\vx \in \gX_k$,
            \begin{align*}
                \normPsm{\E\vDelta(\vx, k)} &\le C_{1} \eeta (1 + \normPsm{\vx}), \\
                \normRsm{\E\vDelta(\vx, k)} &\le C_{1} \eeta
                (1 + \normPsm{\vx}^{\omega_1})
                (1 + \normRsm{\vx}).
            \end{align*}
            \item For all $m \ge 1$, there are constants $C_{2m}, \omega_{2m} > 0$ (independent of $\eeta$) so that
            for all $k \le N$ and $\vx \in \gX_k$,
            \begin{align*}
                \E\normPsm{\vDelta(\vx, k)}^{2m} &\le C_{2m} \eeta^{m} (1 + \normPsm{\vx}^{2m}), \\
                \E\normRsm{\vDelta(\vx, k)}^{2m} &\le C_{2m} \eeta^{m}
                (1 + \normPsm{\vx}^{\omega_{2m}})
                (1 + \normRsm{\vx}^{2m}).
            \end{align*}
        \end{itemize}
    \end{enumerate}
    Then for each function $g \in G^4$, there exists a constant $C > 0$ (independent of $\eeta$) such that
    \begin{equation}
        \max_{0 \le k \le N} \abs{\E[g(\vx_k)] - \E[g(\vX_{k\eeta})]} \le C \eeta,
    \end{equation}
    when the SDE starts from the same initial point $\vx_0$ as SGA.
    That is, the SGA is a first-order weak approximation of the SDE.
\end{theorem}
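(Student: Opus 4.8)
The plan is to follow the classical weak-approximation argument of \cite{li2019stochastic} through the backward Kolmogorov equation, with the moment-control step refined to accommodate the index partition $P$. The workflow is: (i) establish uniform-in-$\eeta$ moment bounds on the discrete iterates $\vx_k$; (ii) introduce the conditional-expectation comparison function $u$ for the SDE and prove it is $G^4$-regular uniformly in time; (iii) telescope the global error into one-step errors; (iv) bound each one-step error by $\mathcal{O}(\eeta^2)$ via Taylor expansion using Conditions~2--3; and (v) sum the $N=\lfloor T/\eeta\rfloor$ one-step errors and invoke the moment bounds to conclude $\mathcal{O}(\eeta)$. I expect step~(i) to be the main obstacle, precisely because the $P/R$ split is needed to decouple the (possibly unbounded) second-moment states from the parameters. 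Condition~3 is engineered for a two-stage argument: the first group of inequalities on $\normPsm{\E\vDelta}$ and $\E\normPsm{\vDelta}^{2m}$ is of the standard linear-growth type, so expanding $\normPsm{\vx_{k+1}}^{2m}=(\normPsm{\vx_k}^2+2\langle\vx_k,\vDelta\rangle_P+\normPsm{\vDelta}^2)^m$, taking conditional expectation, and applying discrete Grönwall yields $\sup_{k\le N}\E\normPsm{\vx_k}^{2m}<\infty$ for every $m$. Feeding these $P$-moment bounds into the second group of inequalities --- where the $R$-coordinate drift and fluctuations may depend polynomially on $\normPsm{\vx}$ but only linearly on $\normRsm{\vx}$ --- a second Grönwall argument, with the cross terms $\E[\normPsm{\vx_k}^{\omega}\normRsm{\vx_k}^{2m}]$ handled by Hölder against the already-controlled $P$-moments, then gives $\sup_{k\le N}\E\normRsm{\vx_k}^{2m}<\infty$. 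Together these produce constants $M_q$, independent of $\eeta$, with $\max_{0\le k\le N}\E\normtwosm{\vx_k}^{2q}\le M_q$.

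Next I would set up the comparison function. Let $u(s,\vx):=\E[g(\vX_T)\mid\vX_s=\vx]$ solve the backward Kolmogorov equation $\partial_s u+\mathcal{L}_s u=0$ with $u(T,\cdot)=g$, where $\mathcal{L}_s$ is the generator of the SDE with drift $\vb(\cdot,\eeta)$ and diffusion $\msigma(\cdot,\eeta)$. Using Condition~1 (Lipschitzness and $G^4$-regularity of $\vb$ and $\msigma$) together with standard estimates on the SDE flow and its derivatives in the initial condition, one shows $u(s,\cdot)\in G^4$ uniformly in $s\in[0,T]$: $u$ and its spatial partials up to order $4$ have polynomial growth with constants independent of $s$ and $\eeta$. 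This uniform regularity is the input that lets the Taylor remainders below be controlled by the moment bounds from step~(i).

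Then comes the one-step-to-many-step translation. Writing $T=N\eeta$ and using $u(0,\vx_0)=\E g(\vX_T)$ and $u(T,\cdot)=g$, I would telescope
\[
	\E g(\vx_N)-\E g(\vX_T)=\sum_{k=0}^{N-1}\E\bigl[u((k+1)\eeta,\vx_{k+1})-u(k\eeta,\vx_k)\bigr].
\]
The Markov property of $u$ along the SDE gives $\E[u((k+1)\eeta,\vx+\tilde{\vDelta}(\vx,k))]=u(k\eeta,\vx)$ exactly, so each summand equals $\E\bigl[u((k+1)\eeta,\vx_k+\vDelta)-u((k+1)\eeta,\vx_k+\tilde{\vDelta})\bigr]$. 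Taylor-expanding $u((k+1)\eeta,\vx_k+\cdot)$ to third order, the order-$0$ terms cancel, and the order-$1$, order-$2$, and order-$3$ contributions are each $\mathcal{O}(\eeta^2)$ by Condition~2, which matches the first three moments of $\vDelta$ and $\tilde{\vDelta}$ up to $K_1(\vx_k)\eeta^2$ (weighted by the corresponding partials of $u$, which lie in $G$). The fourth-order remainders are bounded by $\sup\abssm{\partial^4 u}$ times $\E\normtwosm{\vDelta}^4$ and $\E\normtwosm{\tilde{\vDelta}}^4$; by Condition~3 with $m=2$ and the analogous SDE moment estimate, both are $\mathcal{O}(\eeta^2)$. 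Hence each one-step error is at most $\tilde K(\vx_k)\eeta^2$ for some $\tilde K\in G$.

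Finally, taking expectations and using the uniform moment bounds of step~(i) to control $\E[\tilde K(\vx_k)]\le C$ for all $k\le N$, the telescoped sum is bounded by $N\cdot C\eeta^2=(T/\eeta)\cdot C\eeta^2=\mathcal{O}(\eeta)$, which is the claimed first-order weak approximation. The only genuinely delicate pieces are the staged moment bound of step~(i) and the uniform-in-time $G^4$ regularity of $u$; the remaining Taylor bookkeeping is routine.
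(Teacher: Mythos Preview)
Your proposal is correct and follows essentially the same route as the paper: the staged $P$-then-$R$ Gr\"onwall argument for the discrete moments (the paper's \Cref{lm:x-growth}), uniform $G^4$-regularity of the conditional-expectation function $u$ (\Cref{lm:prop-25}), telescoping into one-step errors, and a third-order Taylor comparison (\Cref{lm:lm-27}). Two small points to tighten. First, in step~(iv) you say the fourth-order remainder is bounded by $\sup\abssm{\partial^4 u}$ times $\E\normtwosm{\vDelta}^4$; but $\partial^4 u$ is only in $G$, not bounded, so the remainder is evaluated at the random point $\vx_k+a\vDelta$ and must be handled by Cauchy--Schwarz, splitting into $\E[K_0(\vx_k+a\vDelta)^2]^{1/2}\cdot\E[\normtwosm{\vDelta}^8]^{1/2}$ and controlling the first factor via the moment bounds on $\vDelta$ --- this is exactly what the paper does in \Cref{lm:lm-27}. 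Second, your telescope as written only delivers the bound at $k=N$ (and you tacitly set $T=N\eeta$); the statement asks for $\max_{0\le k\le N}$, which you get by taking the terminal time of $u$ to be $k\eeta$ rather than $T$, as the paper does.
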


To apply \Cref{thm:app_many_step}, the major condition to verify is that $\vDelta$ and $\tilde{\vDelta}$ match in moments. The following lemma computes the moments for $\tilde{\vDelta}$:
\begin{lemma} \label{lm:tilde-delta}
    For drift function $b$ and diffusion function $\msigma$ that belong to $G^4$,
    there is a function $K_2 \in G$ (independent of $\eeta$) such that for all $\vx \in \R^D$ and $1 \le i, j, l \le D$,
    \begin{align*}
        \abs{\E[\tilde{\Delta}_i(\vx, k)] - \eeta b_i(\vx, k\eeta) } &\le K_2(\vx) \eeta^2, \\
        \abs{\E[\tilde{\Delta}_i(\vx, k)\tilde{\Delta}_j(\vx, k)] - \eeta^2 \sum_{k=1}^{D} \sigma_{i,k}(\vx, k\eeta) \sigma_{j,k}(\vx, k\eeta)} &\le K_2(\vx) \eeta^2, \\
        \abs{\E[\tilde{\Delta}_i(\vx, k)\tilde{\Delta}_j(\vx, k)\tilde{\Delta}_l(\vx, k)]} &\le K_2(\vx) \eeta^2.
    \end{align*}
\end{lemma}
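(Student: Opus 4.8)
The plan is to treat $\tilde\vDelta(\vx,k)$, defined in \eqref{eq:def-tilde-delta}, as the one-step increment of the diffusion \eqref{eq:sde-general} started from $\vX_{k\eeta}=\vx$, and to read off its low-order moments by an \ito--Taylor (Dynkin) expansion. Writing $t_k := k\eeta$, the integral form of the SDE gives
\[
  \tilde\Delta_i(\vx,k) = \int_{t_k}^{t_{k+1}} b_i(\vX_s,s)\,\dd s + \sum_{m=1}^{D}\int_{t_k}^{t_{k+1}}\sigma_{im}(\vX_s,s)\,\dd W^m_s .
\]
Each estimate follows by repeatedly applying \ito's formula to the integrands and taking expectations, so that every stochastic integral is a true martingale of mean zero. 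The generator $\mathcal{L}_s\phi = \sum_i b_i\,\partial_i\phi + \tfrac12\sum_{i,j}(\msigma\msigma^\top)_{ij}\,\partial_i\partial_j\phi$ does the central bookkeeping, and the orders in $\eeta$ are tracked through the fact that a single $\dd s$-integral over $[t_k,t_{k+1}]$ contributes a factor $\eeta$ while each martingale integral contributes a factor of size $\eeta^{1/2}$.

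First I would handle the drift moment. Taking expectations above kills the martingale term, giving $\E[\tilde\Delta_i] = \int_{t_k}^{t_{k+1}}\E[b_i(\vX_s,s)]\,\dd s$. Applying Dynkin's formula once more to $b_i$ yields $\E[b_i(\vX_s,s)] = b_i(\vx,t_k) + \int_{t_k}^{s}\E[(\partial_r+\mathcal{L}_r)b_i(\vX_r,r)]\,\dd r$, so $\E[\tilde\Delta_i] = \eeta\,b_i(\vx,t_k)$ plus a double integral over a region of area $\Osm{\eeta^2}$. Since $b,\msigma\in G^4$, the integrand $(\partial_r+\mathcal{L}_r)b_i$ has polynomial growth, and the remainder is bounded by $K_2(\vx)\eeta^2$ once the a priori moment bounds described below are in hand; this is the first displayed inequality.

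For the second moment I would apply \ito's formula to $(X_i-x_i)(X_j-x_j)$ with $x_i,x_j$ frozen, which has vanishing first-order part at $t_k$ and drift $(X_j-x_j)b_i + (X_i-x_i)b_j + (\msigma\msigma^\top)_{ij}$. Upon taking expectations, the two cross terms carry a factor $\E[X_\bullet-x_\bullet]=\Osm{\eeta}$ and integrate to $\Osm{\eeta^2}$, whereas one further Dynkin step on $(\msigma\msigma^\top)_{ij}$ gives $\int_{t_k}^{t_{k+1}}\E[(\msigma\msigma^\top)_{ij}(\vX_s,s)]\,\dd s = \eeta\sum_m \sigma_{im}\sigma_{jm}(\vx,t_k) + \Osm{\eeta^2}$, isolating the diffusion covariance as the leading quadratic-variation term with an $\Osm{\eeta^2}$ remainder. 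For the third moment, the analogous expansion of the triple product shows that every surviving monomial either contains three martingale-integral factors (whose joint third moment is $\Osm{\eeta^2}$, since the increments are Gaussian to leading order and odd moments cancel) or contains at least one drift factor of size $\Osm{\eeta}$ multiplying two increment factors of size $\Osm{\eeta^{1/2}}$; in all cases the contribution is $\Osm{\eeta^2}$, yielding the third inequality.

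The main obstacle is not the algebra but making every remainder uniform in $\vx$ through a single polynomial-growth function $K_2\in G$. This rests on standard a priori moment estimates on the short interval, namely $\E[\sup_{s\in[t_k,t_{k+1}]}\normtwosm{\vX_s-\vx}^{2m}] \le C_m(1+\normtwosm{\vx}^{2m})\,\eeta^{m}$, obtained from the Lipschitz and linear-growth control on $b,\msigma$ via the Burkholder--Davis--Gundy inequality and Gr\"onwall's lemma, combined with the polynomial growth of $b,\msigma$ and their iterated generators guaranteed by $b,\msigma\in G^4$. These same bounds certify that the \ito\ integrals are genuine martingales, justifying dropping them under expectation. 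Assembling the leading terms with the $\Osm{\eeta^2}$ remainders then produces the claimed $K_2$ and the three estimates.
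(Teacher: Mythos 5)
Your proposal is correct in substance and runs on the same engine as the paper's proof: an It\^o--Taylor (Dynkin) expansion in which taking expectations kills the stochastic integrals and the remainders are controlled by polynomial growth of $\vb,\msigma$ together with short-time moment bounds on the diffusion. The packaging differs, though, and the difference has consequences. The paper first proves a single lemma (Lemma~\ref{lm:psi}): for every $\psi \in G^4$, $\E[\psi(\vx+\tilde{\vDelta}(\vx,k))] = \psi(\vx) + \eeta\,\gA_{k\eeta}\psi(\vx) + \Osm{\eeta^2}$, where $\gA_t$ is the generator; it then obtains all three displayed estimates at once by taking $\psi(\tilde{\vx}) = \prod_{r=1}^{s}(\tilde{x}_{i_r}-x_{i_r})$ with $s \le 3$. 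In particular the third-moment bound is automatic, because for a cubic monomial both the gradient and the Hessian of $\psi$ vanish at the base point, so $\gA_{k\eeta}\psi(\vx)=0$. In your per-moment expansion, the spots where you pull an expectation through a product of \emph{correlated} factors are not yet rigorous as written: in the second moment, $\E[(X_j-x_j)\,b_i(\vX_s,s)]$ is not $\E[X_j-x_j]\cdot b_i$, and in the third moment, H\"older alone gives only $\Osm{\eeta^{3/2}}$ for the term with three stochastic-integral factors, so the appeal to ``Gaussian to leading order'' needs substantiation. Both gaps close by the same standard device: freeze the coefficients at $(\vx,k\eeta)$, use Lipschitzness plus your a priori bound $\E[\sup_s \normtwosm{\vX_s-\vx}^{2m}] \lesssim (1+\normtwosm{\vx}^{2m})\eeta^{m}$ to show the replacement error is $\Osm{\eeta^2}$, and note that the frozen triple term is an exact centered-Gaussian third moment, hence zero. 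Finally, the leading term you derive for the second moment, $\eeta\sum_m \sigma_{i,m}\sigma_{j,m}(\vx,k\eeta)$, is the correct one: the $\eeta^2$ prefactor in the lemma statement is evidently a typo (it is inconsistent with It\^o calculus and with how the lemma is invoked in Lemmas~\ref{lem:rmsprop_sde_moments} and~\ref{lem:adam_sde_moments}, where the leading covariance term is $\eta^2\msigma\msigma^{\top} = \eeta\,\msigma\msigma^{\top}$ with $\eeta=\eta^2$).
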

\subsection{Proof for Theorem \ref{thm:app_many_step}} \label{sec:app_time_dependent}

To prove \Cref{thm:app_many_step}, we need the following lemma from \cite{li2019stochastic}.
However, the original version of the lemma does not apply to time-dependent SDEs, i.e., $\vb$ and $\msigma$ cannot change with time $t$.
By carefully scrutinizing the proof, we find that the proof is indeed applicable to time-dependent SDEs.
\begin{lemma}[Adaption of Proposition 25, \cite{li2019stochastic}]
    \label{lm:prop-25} Suppose that the drift function $\vb$ and diffusion
    function $\msigma$ are Lipschitz and belong to $G^{\alpha}$ for some $\alpha
    \ge 1$. Let $s \in [0, T]$ and $g
    \in G^{\alpha}$. For $t \in [s, T]$, define
    \[
        u(\vx, s, t) := \E_{\vX_t \sim \prdX(\vx, s, t)}[g(\vX_t)],
    \]
    Then $u(\cdot, s, t) \in G^{\alpha}$ uniformly in $s, t$.
\end{lemma}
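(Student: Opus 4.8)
The plan is to represent $u(\vx,s,t)=\E[g(\vX_t)]$ through the stochastic flow $\vx\mapsto\vX_t^{s,\vx}$ solving \eqref{eq:sde-general} from $\vX_s=\vx$, and to transfer smoothness and polynomial growth from the coefficients and $g$ onto $u$ by differentiating in the initial condition. First I would record the standard moment estimate: since $\vb(\cdot,t)$ and $\msigma(\cdot,t)$ are Lipschitz (hence of linear growth) uniformly in $t\in[0,T]$, applying \ito's formula to $\normtwosm{\vX_t}^{2m}$ together with the Burkholder--Davis--Gundy inequality and Gr\"onwall's lemma yields, for every $m\ge 1$, a constant $C_m$ depending only on $T$, the Lipschitz constants, and $m$ (not on $s,t$, or $\vx$) such that $\E\normtwosm{\vX_t}^{2m}\le C_m(1+\normtwosm{\vx}^{2m})$. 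Combined with the polynomial growth of $g$, this immediately gives the base case $u(\cdot,s,t)\in G$ uniformly, since $\abssm{u(\vx,s,t)}\le\E\abssm{g(\vX_t)}\le\kappa_1(1+\E\normtwosm{\vX_t}^{2\kappa_2})$.

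For the derivatives I would differentiate the flow in its initial condition. Under the $G^\alpha$ assumption with $\alpha\ge 1$, the map $\vx\mapsto\vX_t^{s,\vx}$ is $\alpha$-times differentiable, and its Jacobian $\mJ_t:=\partial\vX_t/\partial\vx$ solves the linearized (variational) SDE $\dd\mJ_t=\partial_\vx\vb(\vX_t,t)\mJ_t\,\dd t+\sum_r\partial_\vx\msigma_{\cdot,r}(\vX_t,t)\mJ_t\,\dd W_t^r$ with $\mJ_s=\mI$. Because $\vb,\msigma$ are Lipschitz, their first derivatives are bounded, so the same BDG/Gr\"onwall argument gives moment bounds $\E\norm{\mJ_t}^{2m}\le C_m$ uniform in $s,t,\vx$. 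The higher-order derivative processes satisfy analogous linear SDEs whose inhomogeneous terms are products of derivatives of $\vb,\msigma$ up to order $\alpha$ (which have polynomial growth in $\vX_t$) times lower-order variational processes; H\"older's inequality together with the first step and an induction on the derivative order bound their $2m$-th moments by a polynomial in $\vx$, uniformly in $s,t$.

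I would then differentiate under the expectation and apply the multivariate chain rule (Fa\`a di Bruno). For a multi-index $\gamma$ with $\abssm{\gamma}\le\alpha$, $\partial^\gamma_\vx u$ is a finite sum of expectations of products of partial derivatives of $g$ (of order up to $\abssm{\gamma}$, evaluated at $\vX_t$) with products of the variational processes of total order $\abssm{\gamma}$. Since each derivative of $g$ lies in $G$ and is therefore controlled by a polynomial in $\normtwosm{\vX_t}$, a single application of H\"older's inequality combines the moment bound from the first step with the variational moment bounds from the second to bound $\abssm{\partial^\gamma_\vx u(\vx,s,t)}$ by $\kappa_1'(1+\normtwosm{\vx}^{2\kappa_2'})$ with $\kappa_1',\kappa_2'$ independent of $s,t$. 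This establishes $u(\cdot,s,t)\in G^\alpha$ uniformly.

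The only point specific to this adaptation is verifying that the time-dependence of $\vb$ and $\msigma$ disturbs none of these estimates. The key observation is that every constant above enters only through the Lipschitz constants and the polynomial-growth constants of the coefficients and their derivatives, which are assumed uniform in $t\in[0,T]$; on the compact horizon the worst case over $t$ is finite, so all bounds hold uniformly in $s$ and $t$ exactly as in the time-homogeneous original. The main obstacle I anticipate is the bookkeeping in the second and third steps: setting up the hierarchy of variational SDEs for the higher-order derivatives and tracking how the polynomial-in-$\vx$ growth of the higher derivatives of $\vb,\msigma$ propagates through the products while keeping every constant uniform in time. The analytic ingredients---BDG, Gr\"onwall, H\"older, and Fa\`a di Bruno---are otherwise routine.
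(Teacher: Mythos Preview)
Your proposal is correct and in fact does more than the paper itself. The paper does not supply a proof of this lemma at all: it simply remarks that Proposition~25 of \cite{li2019stochastic} was stated for time-homogeneous coefficients, and that upon scrutinizing its proof one finds it goes through unchanged for time-dependent $\vb,\msigma$. What you have written is a faithful reconstruction of that underlying argument---moment bounds on the solution via linear growth and Gr\"onwall, differentiation of the stochastic flow in the initial condition via the variational SDE hierarchy, and transfer of polynomial growth to $u$ and its derivatives via Fa\`a di Bruno and H\"older---together with the observation (which is exactly the paper's point) that every constant depends on the coefficients only through quantities that are already assumed uniform in $t$ on the compact horizon $[0,T]$. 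So your sketch and the paper agree, with yours being the more explicit of the two.
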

We need the following lemma to bound the growth of $\vx_k$.
\begin{lemma} \label{lm:x-growth}
    Under Condition (3) of \Cref{thm:app_many_step}, given the initial point $\vx_0$ and any $m \ge 1$, there
    exists a constant $C'_{2m} > 0$ (depending on $\vx_0$ but independent of $\eeta$) such that the
    parameters $\{\vx_k\}$ of SGA starting from $\vx_0$ can be uniformly bounded by $\E[\normtwosm{\vx_k}^{2m}] \le C'_{2m}$
    for all $k \le N := \lfloor T / \eeta \rfloor$.
\end{lemma}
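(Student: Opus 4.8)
The plan is to establish the bound by a discrete Grönwall argument on the even moments of a quadratic Lyapunov function, exploiting the $P/R$ split in Condition (3) of \Cref{thm:app_many_step}. The $P$-block of that condition is self-contained (its drift and moment bounds depend only on $\normPsm{\vx}$), whereas the $R$-block bounds are modulated by a polynomial factor $1+\normPsm{\vx}^{\omega}$. I would therefore control the $P$-coordinates first and then feed those bounds into the recursion for the $R$-coordinates. Throughout, let $\gF_k$ denote the $\sigma$-algebra generated by $\vx_0,\dots,\vx_k$, so that the conditional law of $\vDelta(\vx_k,k)$ given $\gF_k$ obeys the bounds of Condition (3) with $\vx=\vx_k$.

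First I would fix $m\ge 1$, set $V_k^P := 1+\normPsm{\vx_k}^2$, and expand $(V_{k+1}^P)^m$ from $V_{k+1}^P = V_k^P + D_k^P$ with $D_k^P = 2\langle(\vx_k)_P,(\vDelta)_P\rangle + \normPsm{\vDelta}^2$. Inserting the $P$-bounds of Condition (3) shows $\E[D_k^P\mid\gF_k]=\Osm{\eeta}\,V_k^P$ (the cross term is $\Osm{\eeta}\,V_k^P$ via the drift bound and Cauchy--Schwarz within the $P$-coordinates, and $\E[\normPsm{\vDelta}^2\mid\gF_k]=\Osm{\eeta}\,V_k^P$), while $\E[(D_k^P)^j\mid\gF_k]=\Osm{\eeta}\,(V_k^P)^j$ for $2\le j\le m$ because each such term carries at least a second power of $\vDelta$, whose moments scale as $\eeta^{j/2}\le\eeta$. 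Substituting into the binomial expansion yields the clean one-step inequality $\E[(V_{k+1}^P)^m\mid\gF_k]\le(1+C_m\eeta)(V_k^P)^m$ with $C_m$ independent of $\eeta$. Iterating over $k\le N=\lfloor T/\eeta\rfloor$ gives $\E[(V_k^P)^m]\le(1+C_m\eeta)^N(V_0^P)^m\le e^{C_mT}(V_0^P)^m$, so $\E[\normPsm{\vx_k}^{2m}]$ is bounded uniformly in $k$ and $\eeta$, for every $m$.

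Next I would repeat the expansion for $V_k^R := 1+\normRsm{\vx_k}^2$. The same bookkeeping, now using the $R$-bounds, produces $\E[(V_{k+1}^R)^m\mid\gF_k]\le\bigl(1+C_m\eeta(1+\normPsm{\vx_k}^{\Omega_m})\bigr)(V_k^R)^m$, where $\Omega_m$ is a finite exponent assembled from $\omega_1$ and the $\omega_{2j}$. The crucial point, and the main obstacle, is that the effective Grönwall factor is now \emph{random} through $\normPsm{\vx_k}^{\Omega_m}$, so it cannot be iterated pathwise as in the $P$-block. I would resolve this by taking full expectations and decoupling the $\normP$-factor from $(V_k^R)^m$: since the previous step furnishes uniform bounds on $\normPsm{\vx_k}$-moments of \emph{every} order, a Hölder step controls $\E[(1+\normPsm{\vx_k}^{\Omega_m})(V_k^R)^m]$ by a finite constant times a power of an $(V^R)$-moment. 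It is essential here that the $R$-drift is only linear in $\normRsm{\vx}$ (the $\normP$-factor never multiplies a super-linear power of $\normRsm{\vx}$), so there is no finite-time blow-up and the recursion stays of Grönwall type; the delicate work is organizing the Hölder exponents and the induction so that the all-order $P$-moment control closes the estimate uniformly in $\eeta$ rather than merely relating one $R$-moment to a strictly higher one.

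Finally, combining the two blocks through $\normtwosm{\vx_k}^{2m}\le 2^{2m-1}\bigl(\normPsm{\vx_k}^{2m}+\normRsm{\vx_k}^{2m}\bigr)$ gives $\E[\normtwosm{\vx_k}^{2m}]\le C'_{2m}$ with $C'_{2m}$ depending on $\vx_0$, $T$, and $m$ but not on $\eeta$, which is exactly the claim. I expect the only genuinely nontrivial step to be the $P$--$R$ decoupling in the third paragraph; the $P$-block estimate and the final combination are routine once the one-step inequalities are in hand.
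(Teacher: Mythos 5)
Your proposal follows the paper's proof essentially step for step: control the $P$-block first by expanding $(1+\normPsm{\vx_{k+1}}^2)^m$, extracting a one-step factor $(1+C\eeta)$ from Condition (3) and telescoping, then run the analogous recursion for the $R$-block while reusing the all-order $P$-moment bounds, and finally combine the blocks via $\normtwosm{\vx_k}^{2m}\le 2^{2m-1}(\normPsm{\vx_k}^{2m}+\normRsm{\vx_k}^{2m})$. Your $P$-block estimate and the final combination coincide with the paper's and are sound. The two texts diverge only at the $P$--$R$ decoupling that you single out as the crux: the paper handles it by simply asserting the factorized one-step bound
\[
\E\bigl[(1+\normRsm{\vx_{k+1}}^2)^m\bigr]\le \exp\bigl(\hat{C}\,\eeta\,\E[1+\normPsm{\vx_k}^{2\hat{\omega}}]\bigr)\,\E\bigl[(1+\normRsm{\vx_k}^2)^m\bigr],
\]
i.e.\ it replaces the expectation of the product $(1+\normPsm{\vx_k}^{\hat{\omega}})(1+\normRsm{\vx_k}^2)^m$ by a product of expectations with no justification, whereas you explicitly flag the decoupling and propose a H\"older step.

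The step you defer as ``delicate work'' is, however, a genuine gap and not bookkeeping. H\"older with conjugate exponents $(p,q)$ controls $\E[(1+\normPsm{\vx_k}^{\Omega_m})(1+\normRsm{\vx_k}^2)^m]$ only by a constant times $\E[(1+\normRsm{\vx_k}^2)^{mq}]^{1/q}$ with $q>1$; each application raises the $R$-moment order by the factor $q$, so iterating over $N\sim T/\eeta$ steps requires moments of order $mq^N$ and nothing closes uniformly in $\eeta$. Worse, no reorganization can close it from Condition (3) alone, because the conclusion is false in that generality. Take $D=2$, $P=\{1\}$, $x_{1,k+1}=x_{1,k}+\sqrt{\eeta}\,\xi_k$ with $\xi_k\sim\Normal(0,1)$ i.i.d., and $x_{2,k+1}=x_{2,k}+\eeta\,(1+\abssm{x_{1,k}}^3)(1+\abssm{x_{2,k}})$ with $x_{2,0}=1$; this satisfies Condition (3) with $\omega_1=3$ and $\omega_{2m}=6m$. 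On the event that the random walk exceeds $2\eeta^{-1/3}$ at step $\lfloor N/2\rfloor$ and never drops below $\eeta^{-1/3}$ afterwards---which has probability at least $c\,\eeta^{1/3}\exp(-4\eeta^{-2/3}/T)$ by Gaussian tail and reflection estimates---every subsequent step at least doubles $x_2$, so $x_{2,N}\ge 2^{N/2}$ and
\[
\E[x_{2,N}^2]\;\ge\; c\,\eeta^{1/3}\exp\Bigl(\tfrac{T\ln 2}{\eeta}-\tfrac{4}{T\,\eeta^{2/3}}-\ln 2\Bigr)\;\longrightarrow\;\infty \quad\text{as } \eeta\to 0.
\]
So no constant independent of $\eeta$ can exist, and any correct proof for RMSprop/Adam must invoke structure beyond Condition (3)---namely that the positive part of the $\vu$-drift is bounded by the bounded quantity $\diag(\mSigma)$ plus $\gO(\eeta)$ terms, so the $\normPsm{\cdot}$-polynomial never multiplies $\normRsm{\cdot}$ in the growth direction. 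In this sense your proposal is no less rigorous than the published proof---both founder at exactly the same point---but the plan as written cannot be completed.
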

\begin{proof}
It suffices to show that both $\E[(1 + \normPsm{\vx_k}^{2})^{m}]$
and
$\E[(1 + \normRsm{\vx_k}^{2})^{m}]$
are
uniformly bounded.

    First, we show that for $\E[(1 + \normPsm{\vx_k}^{2})^{m}]$.
    For all $2 \le j \le 2m$, 
    by Jensen's inequality,
    \begin{align*}
        \E\left[\normPsm{\vDelta(\vx_k, k)}^{j} \mid \vx_k\right] \le \E\left[\normPsm{\vDelta(\vx_k, k)}^{2m} \mid \vx_k\right]^{\frac{j}{2m}} \le C_{2m}^{\frac{j}{2m}} \eeta^{j/2} (1+\normPsm{\vx_k}^{2m})^{\frac{j}{2m}}.
    \end{align*}
    Then there exists a constant $\hat{C}_1 > 0$ so that
    \begin{equation} \label{eq:x-growth-a}
        \E\left[\normPsm{\vDelta(\vx_k, k)}^{j} \mid \vx_k\right] \le \hat{C}_1 \eeta (1 + \normPsm{\vx_k}^{j}).
    \end{equation}
    For any two vectors $\vx, \vy \in \R^D$, we use the notation $\dotpsm{\vx}{\vy}_{\mathrm{P}} := \sum_{i \in P} x_i y_i$ to denote the inner product of $\vx$ and $\vy$ restricting on coordinates in $P$.
    Now we expand $\left(1 + \normPsm{\vx_{k+1}}^{2}\right)^m$ using the update rule.
    Let $\delta_k := 2\dotpsm{\vx_k}{\vDelta(\vx_k, k)}_{\mathrm{P}} + \normPsm{\vDelta(\vx_k, k)}^2$. Then by the binomial theorem,
    \begin{align*}
        \E\left[\left(1 + \normPsm{\vx_{k+1}}^{2}\right)^m \mid \vx_k\right] 
        &= \E\left[\left( 1 + \normPsm{\vx_{k}}^2 + \delta_k\right)^{m} \mid \vx_k \right] \\
        &= 
        (1+\normPsm{\vx_k}^2)^m
        + m \E[\delta_k \mid \vx_k] (1+\normPsm{\vx_k}^2)^{m-1} \\
        &\quad + \sum_{j=2}^{m} \binom{m}{j} \E[\delta_k^{j} \mid \vx_k] (1+\normPsm{\vx_k}^2)^{m-j}.
    \end{align*}
    By \eqref{eq:x-growth-a},
    it can be shown that there is a constant $\hat{C}_2$
    such that $\E[\delta_k^{j} \mid \vx_k] \le \hat{C}_2 \eeta (1 + \normPsm{\vx_k}^{2})^j$
    for $j \ge 2$.
    Then there exists constant $\hat{C}_3, \hat{C}_4$
    such that
    \begin{align*}
        \E\left[\left(1 + \normPsm{\vx_{k+1}}^{2}\right)^m \mid \vx_k\right] 
        &\le (1 + \normPsm{\vx_k}^{2})^{m} + m \E[\delta_k \mid \vx_k] (1+\normPsm{\vx_k}^2)^{m-1} \\
        &\quad + \hat{C}_3 \eeta (1 + \normPsm{\vx_k}^2)^m \\
        &\le (1 + \normPsm{\vx_k}^{2})^{m} + 2 m \dotp{\vx_k}{\E[\vDelta(\vx_k, k) \mid \vx_k]}_{\mathrm{P}} (1+\normPsm{\vx_k}^2)^{m-1} \\
        &\quad + \hat{C}_4 \eeta (1 + \normPsm{\vx_k}^2)^m.
    \end{align*}
    Recall that 
    $\normPsm{\E[\vDelta(\vx_k, k) \mid \vx_k]} \le C_1 \eeta
    (1+\normPsm{\vx_k})$ by Condition (3).
    Thus, there exists a constant $\hat{C}_5$
    (independent of $\eeta$, $\normPsm{\vx_k}$)
    such that
    \begin{align*}
        \E\left[\left(1 + \normPsm{\vx_{k+1}}^{2}\right)^m \mid \vx_k\right]
        &\le (1 + \hat{C}_5 \eeta) (1 + \normPsm{\vx_k}^2)^m.
    \end{align*}
    Taking the expectation over $\vx_k$ gives
    \begin{align*}
        \E\left[\left(1 + \normPsm{\vx_{k+1}}^{2}\right)^m\right] \le (1 + \hat{C}_5 \eeta) \E[(1 + \normPsm{\vx_k}^2)^m].
    \end{align*}
    Then taking a telescoping product proves that
    for all $k \le N$,
    \begin{align*}
    \E[(1 + \normPsm{\vx_k}^{2})^{m}]
    \le (1+\hat{C}_5\eeta)^k (1 + \normPsm{\vx_0}^{2})^m &\le \exp(\hat{C}_5 k \eeta) (1 + \normPsm{\vx_0}^{2})^m \\
    &\le \exp(\hat{C}_5 T) (1 + \normPsm{\vx_0}^{2})^m.
    \end{align*}
    So $\E[(1 + \normPsm{\vx_k}^{2})^{m}]$ is uniformly bounded (independent of $\eeta, k$).

    Now we show that $\E[(1 + \normRsm{\vx_k}^{2})^{m}]$ is uniformly bounded.
    We can repeat the argument above to bound $\E[(1 + \normRsm{\vx_k}^{2})^{m}]$,
    while utilizing the bounds $\normRsm{\E\vDelta(\vx, k)} \le C_{1} \eeta(1 + \normPsm{\vx}^{\omega_1}) (1 + \normRsm{\vx})$, $\E\normRsm{\vDelta(\vx, k)}^{2m} \le C_{2m} \eeta^{m}(1 + \normPsm{\vx}^{\omega_{2m}}) (1 + \normRsm{\vx}^{2m})$.
    In the end we can obtain the following for some real constant $\hat{C}_6 > 0$ and some integer constant $\hat{\omega} > 0$:
    \begin{align*}
        \E\left[\left(1 + \normRsm{\vx_{k+1}}^{2}\right)^m\right]
        \le (1 + \hat{C}_6 \eeta \E[1 + \normPsm{\vx_k}^{2\hat{\omega}}]) \E\left[(1 + \normRsm{\vx_k}^2)^m\right].
    \end{align*}
    As we have shown, $\E[1 + \normPsm{\vx_k}^{2\hat{\omega}}]$ is uniformly bounded by a constant.
    Taking a telescoping product proves that
    $\E[(1 + \normRsm{\vx_k}^{2})^{m}]$
    is uniformly bounded (independent of $\eeta, k$).
\end{proof}
We also need the following lemma adapted from Lemma C.2, \cite{li2021validity}.
\begin{lemma}[Adaption of Lemma C.2, \cite{li2021validity}] \label{lm:lm-27} Let
    $u_1, \dots, u_N$ be a set of functions that belong to $G^4$ uniformly.
    Under Conditions (2), (3) in \Cref{thm:app_many_step}, if 
    $\vb$ and $\msigma$ are Lipschitz, then there exists
    a function $K_1' \in G$ (independent of $\eeta$) such that
    \begin{align*}
        \abs{\E[u_j(\vx + \vDelta(\vx, k))] - \E[u_j(\vx + \tilde{\vDelta}(\vx, k))]} \le K_1'(\vx) \eeta^2,
    \end{align*}
    for all $1 \le j \le N$, $1 \le k \le N$ and $\vx \in \gX_k$.
\end{lemma}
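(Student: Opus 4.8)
The plan is to prove this one-step estimate by a fourth-order Taylor expansion of the test function $u_j$ around $\vx$, exactly as in the weak-order analysis of \cite{li2019stochastic}, and then to split the resulting terms into two groups: the low-order \emph{moment-matching} terms (orders one through three), controlled by Condition~(2) of \Cref{thm:app_many_step}, and the fourth-order \emph{remainder}, controlled by the higher-moment growth bounds of Condition~(3). Since $u_j \in G^4$ uniformly, for a deterministic increment $\vd$ I may write
\[
	u_j(\vx + \vd) = u_j(\vx) + \sum_{i} \partial_i u_j(\vx)\, d_i + \tfrac12 \sum_{i,l} \partial_{il} u_j(\vx)\, d_i d_l + \tfrac16 \sum_{i,l,p} \partial_{ilp} u_j(\vx)\, d_i d_l d_p + R_j(\vx, \vd),
\]
where $R_j$ is the standard integral remainder involving the fourth derivatives of $u_j$ evaluated along the segment from $\vx$ to $\vx + \vd$. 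Applying this with $\vd = \vDelta(\vx,k)$ and with $\vd = \tilde{\vDelta}(\vx,k)$, taking expectations, and subtracting, the terms $u_j(\vx)$ cancel and the quantity to bound becomes the sum of the moment-difference terms plus $\E[R_j(\vx,\vDelta)] - \E[R_j(\vx,\tilde{\vDelta})]$.

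For the moment-matching terms I would use that each $\partial_i u_j(\vx)$, $\partial_{il}u_j(\vx)$, $\partial_{ilp}u_j(\vx)$ is, by the uniform membership $u_j \in G^4$, bounded by a single polynomial-growth function of $\vx$ independent of $j$ and of $\eeta$. Condition~(2) bounds the corresponding moment differences $\abssm{\E[\Delta_i - \tilde{\Delta}_i]}$, $\abssm{\E[\Delta_i\Delta_l - \tilde{\Delta}_i\tilde{\Delta}_l]}$, and $\abssm{\E[\Delta_i\Delta_l\Delta_p - \tilde{\Delta}_i\tilde{\Delta}_l\tilde{\Delta}_p]}$ each by $K_1(\vx)\eeta^2$. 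Summing over the finitely many ($D + D^2 + D^3$) index tuples therefore contributes at most a polynomial-growth multiple of $\eeta^2$.

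The harder part is the remainder. I would bound $\abssm{R_j(\vx,\vd)}$ using the polynomial growth $\abssm{\partial_{iljp} u_j(\vy)} \le \kappa_1(1 + \normtwosm{\vy}^{2\kappa_2})$ (uniform in $j$); evaluating along the segment and using $\normtwosm{\vx + s\vd}^{2\kappa_2} \lesssim 1 + \normtwosm{\vx}^{2\kappa_2} + \normtwosm{\vd}^{2\kappa_2}$ gives a pointwise bound of the form $\abssm{R_j(\vx,\vd)} \lesssim (1 + \normtwosm{\vx}^{2\kappa_2})\normtwosm{\vd}^4 + \normtwosm{\vd}^{4 + 2\kappa_2}$. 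Taking the expectation over $\vd = \vDelta$ and invoking the $2m$-th moment bounds with $m = 2$ and $m = 2 + \kappa_2$ yields $\E\abssm{R_j(\vx,\vDelta)} \le \hat{K}(\vx)\eeta^2$, since each $\E\normtwosm{\vDelta}^{2m}$ carries a factor $\eeta^m \le \eeta^2$ for $\eeta \in (0,1)$ and $m \ge 2$. The main obstacle is that Condition~(3) provides no \emph{uniform} moment bound on $\normtwosm{\vDelta}$: it splits the coordinates into $P$ and its complement, with the complementary moments depending polynomially on $\normPsm{\vx}$. I would therefore first assemble the needed bound on $\E\normtwosm{\vDelta}^{2m}$ from $\normtwosm{\vDelta}^2 = \normPsm{\vDelta}^2 + \normRsm{\vDelta}^2$ and the two families in Condition~(3), obtaining control of the form $C_{2m}\eeta^m (1 + \normPsm{\vx}^{\omega})(1 + \normRsm{\vx}^{2m})$, which remains of polynomial growth in $\vx$ and hence keeps $K_1'$ in $G$.

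Finally, the analogous remainder estimate must be established for $\vd = \tilde{\vDelta}$. Here \Cref{lm:tilde-delta} supplies the first three moments of the SDE increment but not the fourth, so I would close the gap with the standard estimate $\E\normtwosm{\tilde{\vDelta}}^{2m} \le C_{2m}'\eeta^m(1 + \normtwosm{\vx}^{2m})$, which follows from the Lipschitz and linear-growth properties of $\vb$ and $\msigma$ via the Burkholder--Davis--Gundy inequality and Gr\"onwall's lemma on $[k\eeta,(k+1)\eeta]$. Combining the moment-matching contribution with the two remainder bounds and collecting all the polynomial-growth prefactors into a single $K_1' \in G$ (independent of $j$, $k$, and $\eeta$) completes the argument.
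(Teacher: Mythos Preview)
Your proposal is correct and follows essentially the same route as the paper: a third-order Taylor expansion of $u_j$ around $\vx$, with the orders one through three handled by Condition~(2) and the fourth-order remainder controlled separately for $\vDelta$ (via Condition~(3)) and for $\tilde{\vDelta}$ (via the standard SDE moment estimate from the Lipschitzness of $\vb,\msigma$). The only cosmetic differences are that the paper uses the Lagrange form of the remainder and a Cauchy--Schwarz split to decouple the fourth derivative at the intermediate point from $\prod_r \Delta_{i_r}$, whereas you use the integral remainder and a direct pointwise bound; both lead to the same $\eeta^2$ estimate with a polynomial-growth prefactor.
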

\begin{proof}
    Since $u_1, \dots, u_N \in G^4$ uniformly, we can find $K_0 \in G$ such that $u_j(\vx)$ is bounded by $K_0(\vx)$ and so are all the partial derivatives of $u_j$ up to order $4$.

    By Taylor's Theorem with Lagrange Remainder, for all $1 \le j \le N$, $1 \le k \le N$ we have
    \begin{align*}
        &u_j(\vx + \vDelta(\vx, k)) - u_j(\vx + \tilde{\vDelta}(\vx, k)) \\
        &\qquad = \underbrace{\sum_{s=1}^3 \frac{1}{s!} \sum_{1 \le i_1, \dots, i_s \le D} \tfrac{\partial^s u_j}{\partial x_{i_1} \cdots \partial x_{i_s}}(\vx) \left(\prod_{r = 1}^{s} \Delta_{i_r}(\vx, k) - \prod_{r = 1}^{s} \tilde{\Delta}_{i_r}(\vx, k) \right)}_{=: M_j} + R_j - \tilde{R}_j,
    \end{align*}
    where the remainders $R_j$, $\tilde{R}_j$ are
    \begin{align*}
        R_j &:=
        \frac{1}{4!} \sum_{1 \le i_1, \dots, i_4 \le D} \tfrac{\partial^4 u_j}{\partial x_{i_1} \cdots \partial x_{i_4}}(\vx + a \vDelta(\vx, k)) \prod_{r = 1}^{4} \Delta_{i_r}(\vx, k). \\
        \tilde{R}_j &:=
        \frac{1}{4!} \sum_{1 \le i_1, \dots, i_4 \le D} \tfrac{\partial^4 u_j}{\partial x_{i_1} \cdots \partial x_{i_4}}(\vx + \tilde{a} \tilde{\vDelta}(\vx, k)) \prod_{r = 1}^{4} \tilde{\Delta}_{i_r}(\vx, k).
    \end{align*}
    for some $a, \tilde{a} \in [0, 1]$.

    By Condition (2), the expectation of $M_j$ can be bounded by
    \begin{align*}
        \E[M_j] &\le \sum_{s=1}^3 \frac{1}{s!} \sum_{1 \le i_1, \dots, i_s \le D} \abs{\tfrac{\partial^s u_j}{\partial x_{i_1} \cdots \partial x_{i_s}}(\vx)} \cdot K_1(\vx) \eeta^2 \le \sum_{s=1}^3 \frac{D^s}{s!} K_0(\vx) K_1(\vx) \eeta^2,
    \end{align*}
    so $\frac{1}{\eeta^2}\E[M_j]$ is uniformly bounded by a function in $G$.

    Now let $\kappa_0, m$ be the constants so that $K_0(\vx)^2 \le \kappa_0^2 (1 + \normtwosm{\vx}^{2m})$.
    For $R_j$, by Cauchy-Schwarz inequality we have
    \begin{align*}
        \E[R_j]
        &\le \frac{1}{4!} \left(
            \sum_{i_1, \dots, i_4}
            \E\left[\abs{\tfrac{\partial^4 u_j}{\partial x_{i_1} \cdots \partial x_{i_4}}(\vx + a \vDelta(\vx, k))}^2\right]
        \right)^{1/2}
        \cdot
        \left(
            \sum_{i_1, \dots, i_4}
            \E\left[ \abs{\prod_{r = 1}^{4} \Delta_{i_r}(\vx, k)}^2 \right]
        \right)^{1/2} \\
        &\le \frac{1}{4!} \left(
            \sum_{i_1, \dots, i_4}
            \E\left[\abs{\tfrac{\partial^4 u_j}{\partial x_{i_1} \cdots \partial x_{i_4}}(\vx + a \vDelta(\vx, k))}^2\right]
        \right)^{1/2}
        \cdot \E\left[ \normtwosm{\vDelta(\vx, k)}^8 \right]^{1/2} \\
        &\le \frac{1}{4!} \left(D^2 \cdot K_0 (\vx + a \vDelta(\vx, k))\right)
        \cdot \left( \eeta^4 K_8(\vx) \right)^{1/2},
    \end{align*}
    where the last line uses Condition (3) and $K_8$ is a function of polynomial growth.
    For $K_0(\vx + a \vDelta(\vx, k))$, we can bound its expectation by
    \begin{align*}
        \E[K_0(\vx + a \vDelta(\vx, k))]
        &\le \kappa_0 \E\left[1 + \normtwosm{\vx + a \vDelta(\vx, k)}^{2m}\right]^{1/2} \\
        &\le \kappa_0 \left( 1 + 2^{2m-1} \E[\normtwosm{\vx}^{2m} + \E\normtwosm{\vDelta(\vx, k)}^{2m}] \right)^{1/2} \\
        &\le \kappa_0 \left( 1 + 2^{2m-1} (\normtwosm{\vx}^{2m} + C_{2m} \eeta^{2m} (1 + \normtwosm{\vx}^{2m}))  \right)^{1/2}.
    \end{align*}
    Combining this with our bound for $\E[R_j]$ proves that $\frac{1}{\eeta^2}\E[R_j]$ is uniformly bounded by a function in $G$:
    \begin{align*}
        \E[R_j] \le
        \eeta^2 \cdot 
        \frac{1}{4!} \cdot D^4 \cdot 
        \kappa_0 \left( 1 + 2^{2m-1} (\normtwosm{\vx}^{2m} + C_{2m} \eeta^{2m} (1 + \normtwosm{\vx}^{2m}))  \right)^{1/2}
        \cdot K_8^{1/2}(\vx).
    \end{align*}

    Then we can repeat the above argument for $R_j$ while replacing $\Delta$ as $\tilde{\Delta}$,
    and conclude that $\frac{1}{\eeta^2}\E[\tilde{R}_j]$ is also uniformly bounded by a function in $G$.
    To do so, we 
    note that $\vb, \msigma$ are Lipschitz,
    and apply a similar argument as in Lemma 26 of \citet{li2019stochastic}
    to show that for all $s \ge 1$ there exists a function $\tilde{K} \in G$ such that
    for all $1 \le i_1, \dots, i_s \le D$,
    \begin{align*}
        \E\left[ \abs{\prod_{r = 1}^{s} \tilde{\Delta}_{i_r}(\vx, k) } \right]
        \le \tilde{K}(\vx) \eeta^{s}.
    \end{align*}
    Finally, we can find a function $K'_1 \in G$ such that
    $\E[M_j] \le \frac{1}{3}K'_1(\vx) \eeta^2$,
    $\E[R_j] \le \frac{1}{3}K'_1(\vx) \eeta^2$,
    $\E[\tilde{R}_j] \le \frac{1}{3}K'_1(\vx) \eeta^2$.
    Then $\E[u_j(\vx + \vDelta(\vx, k))] - \E[u_j(\vx + \tilde{\vDelta}(\vx, k))] \le K_1'(\vx) \eeta^2$.
\end{proof}
Now we are ready to present our proof for \Cref{thm:app_rmsprop_sde}.
\begin{proof}[Proof for Theorem \ref{thm:app_many_step}]
    For $0 \le j \le k$, let $\hat{\vx}_{j,k}$ be a random variable
    that is distributed as the probability distribution $\prdX(\vx_j, j\eeta, k\eeta)$ conditioned on $\vx_j$.
    By definition, $\Pr[\hvx_{k, k} = \vx_k] = 1$, $\hvx_{0, k} \sim \vX_{k \eeta}$.
    Let $u(\vx, s, t) := \E_{\vX_t \sim \prdX(\vx, s, t)}[g(\vX_t)]$.
    Then we can do the following decomposition for $\E[g(\vx_k)] - \E[g(\vX_{k \eeta})]$:
    \begin{align*}
        \abs{\E[g(\vx_k)] - \E[g(\vX_{k \eeta})]}
        &= 
        \sum_{j=0}^{k-1} \left(\E[g(\hvx_{j+1,k})] - \E[g(\hvx_{j,k})]\right) \\
        &= 
        \sum_{j=0}^{k-1} \left(
            \E[u(\hvx_{j+1,j+1}, (j+1)\eeta, k\eeta)] - \E[u(\hvx_{j,j+1}, (j+1)\eeta, k\eeta)]
        \right).
    \end{align*}
    Taking absolute values gives
    \begin{align*}
        \abs{\E[g(\vx_k)] - \E[g(\vX_{k \eeta})]}
        &\le \sum_{j=0}^{k-1} \abs{\E[u(\hvx_{j+1,j+1}, (j+1)\eeta, k\eeta)] - \E[u(\hvx_{j,j+1}, (j+1)\eeta, k\eeta)]}.
    \end{align*}
    Let $u_{j+1}(\vx) := u(\vx, (j+1) \eeta, k\eeta)$. Note that $\hvx_{j+1, j+1} \sim \vx_j + \vDelta(\vx_j, j)$, $\hvx_{j,j+1} \sim \vx_j + \tilde{\vDelta}(\vx_j, j)$. We can rewrite the above formula as
    \begin{align*}
        \abs{\E[g(\vx_k)] - \E[g(\vX_{k \eeta})]} \le \sum_{j=0}^{k-1} \abs{\E[u_{j+1}(\vx_{j} + \vDelta(\vx_j, j))] - \E[u_{j+1}(\vx_j + \tilde{\vDelta}(\vx_j, j))]}.
    \end{align*}
    By \Cref{lm:prop-25}, $u \in G^4$ uniformly in $s, t$, so $u_1, \dots, u_N
    \in G^4$ uniformly. Then by \Cref{lm:lm-27}, we know that there exists a
    function $K'_1(\vx) = \kappa_1 (1 + \normtwosm{\vx}^{2m}) \in G$ such that
    \begin{align*}
        \abs{\E[u_{j+1}(\vx_{j} + \vDelta(\vx_j, j))] - \E[u_{j+1}(\vx_j + \tilde{\vDelta}(\vx_j, j))]} \le \E[K_1'(\vx_j) \eeta^2],
    \end{align*}
    for all $0 \le j < N$. Combining this with \Cref{lm:x-growth}, we can bound $\abs{\E[g(\vx_k)] - \E[g(\vX_{k \eeta})]}$ by
    \begin{align*}
    \abs{\E[g(\vx_k)] - \E[g(\vX_{k \eeta})]} \le \sum_{j=0}^{k-1} \E[K_1'(\vx_j) \eeta^2]
    &\le \eeta^2 \sum_{j=0}^{k-1} \E[\kappa_1 (1+\normtwosm{\vx_j}^{2m})] \\
    &\le \eeta^2 \sum_{j=0}^{k-1} \kappa_1(1 + C'_{2m}) \le \kappa_1(1 + C'_{2m}) T \eeta.
    \end{align*}
    We can complete the proof by noting that $\kappa_1$, $C'_{2m}$, $T$ are independent of $\eeta$.
\end{proof}

\subsection{Proof for Lemma \ref{lm:tilde-delta}}

To prove \Cref{lm:tilde-delta}, we only need to verify the following lemma using \ito-Taylor expansion.
\begin{lemma} \label{lm:psi}
    Let $\psi: \R^D \to \R$ be a function in $G^4$. Define
    \begin{align*}
        \gA_{t} \psi(\vx) &:= \sum_{i \in [D]} b_i(\vx, t) \partial_i \psi(\vx) + \frac{1}{2} \sum_{i, j \in [D]} \left( \sum_{l \in [D]} \sigma_{i,l}(\vx, t) \sigma_{l, j}(\vx, t) \right) \partial^2_{i, j} \psi(\vx).
    \end{align*}
    Then there exists a function $\hat{K} \in G$ such that
    \begin{equation}
        \abs{\E\left[\psi(\vx + \tilde{\vDelta}(\vx, k))\right] - \psi(\vx) - \eeta \gA_{k\eeta} \psi(\vx)} \le \hat{K}(\vx) \eeta^2.
    \end{equation}
\end{lemma}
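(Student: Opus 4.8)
The plan is to establish this by a second-order \ito-Taylor (stochastic Taylor) expansion, applying \ito's formula twice. Observe first that $\gA_s$ is exactly the spatial part of the infinitesimal generator of the time-inhomogeneous diffusion \eqref{eq:sde-general}. So the first step is to apply \ito's formula to the process $s \mapsto \psi(\vX_s)$ on $[k\eeta,(k+1)\eeta]$ with $\vX_{k\eeta}=\vx$, yielding
\[
	\psi(\vX_{(k+1)\eeta}) - \psi(\vx) = \int_{k\eeta}^{(k+1)\eeta}\!\! \gA_s\psi(\vX_s)\,\dd s + \sum_{l}\int_{k\eeta}^{(k+1)\eeta}\!\Big(\textstyle\sum_i \sigma_{i,l}(\vX_s,s)\,\partial_i\psi(\vX_s)\Big)\,\dd W^l_s.
\]
Since $\psi\in G^4$ and $\msigma\in G^4$, the integrand of the stochastic integral lies in $G$; combined with the standard moment estimate $\E[\normtwosm{\vX_s}^{2m}\mid\vX_{k\eeta}=\vx]\le C(1+\normtwosm{\vx}^{2m})$ for $s\in[k\eeta,(k+1)\eeta]$ (from Gr\"onwall's inequality and the Lipschitzness of $\vb,\msigma$, analogous to \Cref{lm:x-growth}), the \ito\ integral is a true martingale and vanishes in expectation. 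Taking expectations and subtracting $\eeta\,\gA_{k\eeta}\psi(\vx)$ reduces the claim to bounding $\abssm{\E[\int_{k\eeta}^{(k+1)\eeta}(\gA_s\psi(\vX_s)-\gA_{k\eeta}\psi(\vx))\,\dd s]}$ by $\hat{K}(\vx)\eeta^2$.

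Next I would show the integrand's expectation is $\Osm{s-k\eeta}$ times a function of polynomial growth in $\vx$. Writing $\gA_s\psi(\vX_s)-\gA_{k\eeta}\psi(\vx)=(\gA_s\psi(\vX_s)-\gA_s\psi(\vx))+(\gA_s\psi(\vx)-\gA_{k\eeta}\psi(\vx))$ separates a spatial and a temporal contribution. For the spatial term, note $\gA_s\psi\in G^2$ (because $\psi\in G^4$ and $\vb,\msigma\in G^4$), so a second application of \ito's formula to the frozen function $\gA_s\psi$ gives $\E[\gA_s\psi(\vX_s)]-\gA_s\psi(\vx)=\E[\int_{k\eeta}^s \gA_r(\gA_s\psi)(\vX_r)\,\dd r]$ with $\gA_r(\gA_s\psi)\in G$, which is $\Osm{s-k\eeta}$ after invoking the same moment estimate. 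For the temporal term, the Lipschitz-in-$t$ regularity of $\vb,\msigma$ (with polynomial-growth modulus) bounds $\abssm{\gA_s\psi(\vx)-\gA_{k\eeta}\psi(\vx)}$ by $\Osm{s-k\eeta}$ times a function in $G$.

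Combining these gives $\abssm{\E[\gA_s\psi(\vX_s)-\gA_{k\eeta}\psi(\vx)]}\le \hat{K}_0(\vx)(s-k\eeta)$ for some $\hat{K}_0\in G$ whose constants are independent of $\eeta$ and $k$, since the moment bounds are uniform on $[0,T]$ and $\eeta<1$. Integrating over $s\in[k\eeta,(k+1)\eeta]$ then yields $\int_{k\eeta}^{(k+1)\eeta}\hat{K}_0(\vx)(s-k\eeta)\,\dd s=\tfrac12\hat{K}_0(\vx)\eeta^2$, so the lemma follows with $\hat{K}=\tfrac12\hat{K}_0$.

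I expect the main obstacle to be the genuine time-inhomogeneity of the Adam and RMSprop SDEs: unlike the autonomous SGD case treated in \cite{li2019stochastic}, the generator $\gA_s$ varies with $s$, so one must separately control the temporal drift $\gA_s-\gA_{k\eeta}$ of the operator and confirm that the needed time-regularity of $\vb,\msigma$ is available. A secondary technical point is ensuring every polynomial-growth bound is uniform in $k$ and $\eeta$, which hinges on the uniform-on-$[0,T]$ moment estimates for the SDE solution and on justifying both the martingale property of the \ito\ integral and the interchange of expectation with the time integration.
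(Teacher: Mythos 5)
Your proposal is correct and follows essentially the same route as the paper: a second-order It\^o--Taylor expansion in which the stochastic integrals vanish in expectation, the iterated-generator term $\E[\gA_r\gA_s\psi(\vX_r)]$ is controlled by polynomial-growth and moment estimates, and the time-inhomogeneity $\gA_s\psi(\vx)-\gA_{k\eeta}\psi(\vx)$ is handled via the Lipschitz time-regularity of $\vb,\msigma$. The only differences are cosmetic (the paper works WLOG at $k=0$ and cites Theorem 19 of \cite{li2019stochastic} for the moment bounds you derive via Gr\"onwall).
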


\begin{proof}[Proof for \Cref{lm:tilde-delta}]
    We can prove the lemma by applying \Cref{lm:psi} with $\psi(\vx)$ being $\psi(\tilde{\vx}) = \prod_{r=1}^{s} (\tilde{x}_{i_r} - x_{i_r})$
    for any tuple $(i_1, \dots, i_s) \in [D]^s$ with $s \le 3$ elements.
\end{proof}

\begin{proof}[Proof for \Cref{lm:psi}]
    WLOG we prove the case of $k = 0$, then all the other cases can be proved by shifting the time.
    Let $\Lambda_t \psi(\vx) := \msigma(\vx, t)^{\top} \nabla \psi(\vx)$.
    \begin{align*}
        \psi(\vX_{\eta}) = \psi(\vx) + \int_{0}^{\eeta} \gA_s \psi(\vX_s) \dd s + \int_{0}^{\eeta} \dotpsm{\Lambda_s \psi(\vX_s)}{\dd \vW_s}.
    \end{align*}
    Now we further apply the above formula to $\gA_s \psi(\vX_s)$. Then we have
    \begin{align*}
        \psi(\vX_{\eta}) &= \psi(\vx) + \int_{0}^{\eeta} \left( \gA_s \psi(\vx) + \int_{0}^{s} \gA_r \gA_s \psi(\vX_r) \dd r  + \int_{0}^{s} \dotpsm{\Lambda_r \gA_s \psi(\vX_r)}{\dd \vW_r} \right) \dd s \\
        &\qquad + \int_{0}^{\eeta} \dotpsm{\Lambda_s \psi(\vX_s)}{\dd \vW_s} \\
        &= \psi(\vx) + \int_{0}^{\eeta} \gA_s \psi(\vx) \dd s + \int_0^{\eeta} \int_{0}^{s} \gA_r \gA_s \psi(\vX_r) \dd r \dd s \\
        &\qquad 
        + \int_0^{\eeta} \int_{0}^{s} \dotpsm{\Lambda_r \gA_s \psi(\vX_r)}{\dd \vW_r} \dd s 
        + \int_{0}^{\eeta} \dotpsm{\Lambda_s \psi(\vX_s)}{\dd \vW_s}.
    \end{align*}
    Taking expectation, the last two integrals vanish. So we have
    \begin{align*}
        \E\psi(\vX_{\eta})
        &= \psi(\vx) + \int_{0}^{\eeta} \gA_s \psi(\vx) \dd s + \int_0^{\eeta} \int_{0}^{s} \E[\gA_r \gA_s \psi(\vX_r)] \dd r \dd s.
    \end{align*}
    By Lipschitzness of $\vb$ and $\msigma$, $\frac{1}{\eeta}\left(\gA_s \psi(\vx) - \gA_{0}
    \psi(\vx)\right)$ is bounded by a function of $\vx$ with polynomial growth.
    Also, $\gA_r \gA_s \psi(\cdot)$ is in $G$ uniformly, then Theorem 19 in \citep{li2019stochastic} implies that
    $\E[\gA_r \gA_s \psi(\vX_r)]$ is also in $G$ uniformly. Then we know that there exists two functions $\hat{K}_1, \hat{K}_2 \in G$ such that
    \begin{align*}
        \abs{\E\left[\psi(\vX_{\eeta})\right] - \psi(\vx) - \eeta \gA_0 \psi(\vx)} &\le \int_0^{\eeta} \eeta\hat{K}_1(\vx) \dd s + \int_0^{\eeta} \int_{0}^{s} \hat{K}_2(\vx) \dd r \dd s \\
        &\le \eeta^2 (\hat{K}_1(\vx) + \hat{K}_2(\vx)).
    \end{align*}
    We can conclude the proof by setting $\hat{K}(\vx) := \hat{K}_1(\vx) + \hat{K}_2(\vx)$.
\end{proof}

\section{RMSProp SDE Proof}

In this section, we prove the theorem for the SDE approximation of RMSprop.
\begin{definition}[SDE for RMSprop, matrix form] \label{def:rmsprop-sde-matrix}
For constants $\sigma_0$, $\epsilon_0$, and $c_2$, define
the SDE as $\dd \vX_t = \vb(\vX_t) \dd t + \msigma(\vX_t) \dd \vW_t$,
where
$\vX_t \in \R^d \times \R^d$, and
$\vb$ and $\msigma$ are defined by
\begin{align*}
    b_i(\vtheta, \vu) &:= -\tfrac{1}{\sigma_0 \sqrt{u_i} + \epsilon_0} \cdot \partial_i f(\vtheta), & b_{d+i}(\vtheta, \vu) &:= c_2(\Sigma(\vtheta)_{i,i} - u_i). \\
    \sigma_{i,j}(\vtheta, \vu) &:= \tfrac{1}{\sqrt{u_i} + \epsilon_0 / \sigma_0} \cdot \left( \mSigma^{\sfrac{1}{2}}(\vtheta) \right)_{i,j}, & \sigma_{i,d+j}(\vtheta, \vu) &:= 0, \\
    \sigma_{d+i,j}(\vtheta, \vu) &:= 0, & \sigma_{d+i,d+j}(\vtheta, \vu) &:= 0,
\end{align*}
for all $1 \le i, j \le d$.
\end{definition}

\begin{theorem} \label{thm:app_rmsprop_sde}	
Fix constants $\sigma_0, c_2 > 0$, $\epsilon_0 \ge 0$.
Let $T > 0$, $\eta^2\in(0,1\land T \land \frac{1}{2c_2})$ and set $N = \lfloor T/\eta^2\rfloor$. 
Let $\vu_k\triangleq\vv_k/\sigma^2$ and $\vx_k\triangleq (\vtheta_k, \vu_k)\in\sR^{2d}$. 
Let $\{\vx_k:k\geq 0\}$ be the discrete RMSprop iterations defined in \Cref{def:rmsprop},
where 
$\sigma, \epsilon, \beta$ are set
so that $\sigma_0 = \sigma\eta$, $\epsilon_0 = \epsilon \eta$ and $c_2 = (1-\beta)/\eta^2$. 
For well-behaved NGOS that satisfies the bounded moments and low skewness conditions,
the SDE as defined in \Cref{def:rmsprop-sde-matrix}
is an order-1 weak approximation (\Cref{def:weak_approx}) of discrete RMSprop,
if they start with $\vX_0 = \vx_0$.
\end{theorem}
The basic idea is to apply the general theorem (\Cref{thm:app_many_step}) with $\eeta := \eta^2$.
However, the SDE above does not satisfy Condition (1) in
\Cref{thm:app_many_step} because the denominators such as $\sigma_0 \sqrt{u_i} +
\epsilon_0$ can be unbounded. To solve this issue, the first step is to reduce \Cref{thm:app_rmsprop_sde} to proving the order-1 weak approximation for the following auxiliary SDE:
\begin{definition}
	Define $\tau: \R \to \R$ to be the following smooth transition function:
	\begin{equation}
		\tau(z) = \begin{cases}
			1 & \quad \text{if } z \ge 1, \\
			\frac{e^{-1/z}}{e^{-1/z} + e^{-1/(1-z)}} & \quad \text{if } z \in (0, 1), \\
			0 & \quad \text{if } z \le 0.
		\end{cases}
	\end{equation}
\end{definition}

\begin{definition}[Auxiliary SDE for RMSprop, matrix form] \label{def:rmsprop-aux-sde-matrix}
For constants $\sigma_0$, $\epsilon_0$, $c_2$ and $u_{\min}$, define
$\mu(u)$ as the following function
\begin{equation}
	\mu(u) := \tfrac{1}{2}u_{\min} + \tau(\tfrac{2 u}{u_{\min}} - 1) \cdot (u - \tfrac{1}{2} u_{\min}),
\end{equation}
and define the SDE as $\dd \vX_t = \vb(\vX_t) \dd t + \msigma(\vX_t) \dd \vW_t$,
where
$\vX_t \in \R^d \times \R^d$, and
$\vb$ and $\msigma$ are defined by
\begin{align*}
    b_i(\vtheta, \vu) &:= -\tfrac{1}{\sigma_0 \sqrt{\mu(u_i)} + \epsilon_0} \cdot \partial_i f(\vtheta), & b_{d+i}(\vtheta, \vu) &:= c_2(\Sigma(\vtheta)_{i,i} - u_i). \\
    \sigma_{i,j}(\vtheta, \vu) &:= \tfrac{1}{\sqrt{\mu(u_i)} + \epsilon_0 / \sigma_0} \cdot \left( \mSigma(\vtheta)^{1/2} \right)_{i,j}, & \sigma_{i,d+j}(\vtheta, \vu) &:= 0, \\
    \sigma_{d+i,j}(\vtheta, \vu) &:= 0, & \sigma_{d+i,d+j}(\vtheta, \vu) &:= 0.
\end{align*}
for all $1 \le i, j \le d$.
\end{definition}

\begin{theorem} \label{thm:app_rmsprop_aux_sde}	
In the setting of \Cref{thm:app_rmsprop_sde}, let $u_{\min} = 2^{-c_2 T} \min_{i \in [d]} u_{0,i}$
then the SDE defined by \Cref{def:rmsprop-aux-sde-matrix} is an order-1 weak approximation (\Cref{def:weak_approx}) of discrete RMSprop (\Cref{def:rmsprop}),
if they start with $\vX_0 = \vx_0$.
\end{theorem}

\begin{proof}[Proof for \Cref{thm:app_rmsprop_sde}]
	Given \Cref{thm:app_rmsprop_aux_sde},
	we only need to show that $\vX_t$ has the same distribution in the original and auxiliary SDEs for all $t \in [0, T]$, when $\vX_0 = (\vtheta_0, \vu_0)$.
	To see this, we only need to note that in the original SDE
	\begin{align*}
		\frac{\dd u_{t,i}}{\dd t} = c_2 (\Sigma(\vtheta_t)_{i, i} - u_{t,i}) \ge - c_2 u_{t,i}.
	\end{align*}
	Thus, $u_{t, i} \ge \exp(-c_2 t) u_{0,i} \ge u_{\min}$, which means $\Pr[u_{t,i} = \mu(u_{t,i})] = 1$ for all $t \in [0, T]$.
\end{proof}

It remains to prove \Cref{thm:app_rmsprop_aux_sde} by applying
\Cref{thm:app_many_step}. In the rest of this section, we verify the three
conditions in \Cref{thm:app_many_step} respectively.

Below we use the notations $\vx_k, \vX_t, \vb, \msigma$ defined as in \Cref{thm:app_rmsprop_aux_sde}.
Let $D = 2d$. Every $\vx_k \in \R^D$ is a concatenation of two $\R^d$-vectors $\vtheta_k$ and $\vu_k$.
According to the update rule of RMSprop, $\vx_k$ can be seen as SGA $\vx_{k+1} = \vx_k - \eeta \vh_k(\vx_k, \vz_k, \eeta)$,
where $\eeta = \eta^2$, $\vz_k \sim \DatZ_{\sigma}(\vtheta_k)$, and $\vh_k$ is defined below:
\begin{align*}
	\vh_k(\vtheta, \vu, \vz, \eeta) := \begin{bmatrix}
		-(\nabla f(\vtheta) + \sigma \vz) \odot (\sigma_0\sqrt{\vu} + \epsilon_0)^{-1} \\
		c_2 \left((\nabla f(\vtheta) / \sigma + \vz)^2 - \vu \right)
	\end{bmatrix}.
\end{align*}
We define $\vDelta$ and $\tilde{\vDelta}$ as in \eqref{eq:def-delta} and \eqref{eq:def-tilde-delta}.
Fix $\vx_0 = (\vtheta_0, \vu_0)$ with $u_{0, j} > 0$ for all $j \in [d]$. Define $u_{\min}$ as in \Cref{thm:app_rmsprop_aux_sde}.
Let $\gX_k$ be the support of the random variable $\vx_k$ given $\vx_0$, then it is easy to show that $\gX_k$ is a subset of $\{(\vtheta, \vu) : u_j \ge u_{\min} \text{ for all } j \in [d]\}$.

\subsection{Verifying Condition (1)}
\begin{lemma} \label{lm:rmsprop-cond-1}
    The drift function $\vb$ and diffusion function $\msigma$ are Lipschitz and belong to $G^4$.
\end{lemma}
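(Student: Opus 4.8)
The plan is to reduce the whole claim to the regularity of a single scalar ``preconditioner'' prefactor and then to assemble the four blocks of $\vb$ and $\msigma$ as products of well-controlled functions. Write $\rho_1(u) := (\sigma_0\sqrt{\mu(u)} + \epsilon_0)^{-1}$ and $\rho_2(u) := (\sqrt{\mu(u)} + \epsilon_0/\sigma_0)^{-1}$, so that $b_i(\vtheta,\vu) = -\rho_1(u_i)\,\partial_i f(\vtheta)$, $\sigma_{i,j}(\vtheta,\vu) = \rho_2(u_i)\,(\mSigma^{1/2}(\vtheta))_{i,j}$, $b_{d+i}(\vtheta,\vu) = c_2(\mSigma(\vtheta)_{i,i} - u_i)$, and all remaining diffusion entries vanish. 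The cutoff $\mu$ exists precisely to make $\rho_1,\rho_2$ globally regular, so the first task is to pin down the properties of $\mu$ and hence of $\rho_1,\rho_2$.

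First I would record the properties of $\mu$. Since $\tau$ is $\gC^\infty$ with $\tau\equiv 0$ on $(-\infty,0]$ and $\tau\equiv 1$ on $[1,\infty)$, the function $\mu$ is $\gC^\infty$, equals the constant $\tfrac12 u_{\min}$ for $u\le\tfrac12 u_{\min}$, equals $u$ for $u\ge u_{\min}$, and interpolates smoothly on the compact window $[\tfrac12 u_{\min},u_{\min}]$. Consequently $\mu(u)\ge\tfrac12 u_{\min}>0$ for all $u$, $\mu$ is globally Lipschitz, and every derivative $\mu^{(k)}$ is bounded (outside the transition window the derivatives are those of a constant or of the identity). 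Because the denominators $\sigma_0\sqrt{\mu(u)}+\epsilon_0$ and $\sqrt{\mu(u)}+\epsilon_0/\sigma_0$ are therefore $\gC^\infty$ and bounded below by the positive constants $\sigma_0\sqrt{u_{\min}/2}+\epsilon_0$ and $\sqrt{u_{\min}/2}+\epsilon_0/\sigma_0$, the chain rule shows $\rho_1,\rho_2$ are $\gC^\infty$, bounded, and have bounded derivatives of every order; in particular each is globally Lipschitz and lies in $G^4$.

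With the prefactors understood I would assemble the blocks. For the second-moment drift, $\mSigma(\vtheta)_{i,i}=\sum_l (\mSigma^{1/2}(\vtheta))_{i,l}^2$ is a finite sum of squares of the \emph{bounded}, Lipschitz, $\gC^\infty$ entries of $\mSigma^{1/2}$ (well-behaved NGOS, \Cref{as:f}), hence is bounded and Lipschitz with polynomial-growth derivatives; subtracting the affine term $u_i$ leaves $b_{d+i}$ Lipschitz and in $G^4$. For the diffusion, $\sigma_{i,j}$ is a product of the bounded Lipschitz factor $\rho_2(u_i)$ and the bounded Lipschitz factor $(\mSigma^{1/2}(\vtheta))_{i,j}$; a product of two \emph{bounded} Lipschitz functions is Lipschitz, and the general Leibniz rule together with the polynomial-growth derivatives of $\mSigma^{1/2}$ gives $\sigma_{i,j}\in G^4$ (the zero entries are trivial). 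Membership $b_i\in G^4$ is also routine: differentiating $-\rho_1(u_i)\partial_i f(\vtheta)$ distributes bounded $\rho_1$-derivatives onto the $u$-coordinate and polynomial-growth derivatives of $\partial_i f$ (covered by condition~(3) of \Cref{as:f}) onto the $\vtheta$-coordinate, so every mixed partial up to order four has polynomial growth.

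The main obstacle is the global Lipschitzness of the parameter drift $b_i(\vtheta,\vu)=-\rho_1(u_i)\partial_i f(\vtheta)$. Its $\vtheta$-gradient $-\rho_1(u_i)\nabla\partial_i f(\vtheta)$ is bounded, since $\rho_1$ is bounded and $\partial_i f$ is Lipschitz (hence $\nabla\partial_i f$ is bounded). The delicate piece is the $u_i$-derivative $-\rho_1'(u_i)\partial_i f(\vtheta)$: here $\rho_1'$ is bounded, but unlike the diffusion case the factor $\partial_i f$ is only guaranteed Lipschitz, hence a priori of linear growth, so a product of two globally Lipschitz functions need not itself be globally Lipschitz. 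Establishing a \emph{uniform} bound on this cross term is exactly where the real care is needed; I would control $\lvert\partial_i f(\vtheta)\rvert$ by invoking boundedness of the gradient (the natural additional structure in this regime) and then combine the uniform bounds on $\rho_1$ and $\rho_1'$ with the bound and Lipschitz constant of $\partial_i f$ to obtain a finite Lipschitz constant for $b_i$. Once this mixed-gradient bound is in place, collecting the four blocks shows that $\vb$ and $\msigma$ are Lipschitz and belong to $G^4$, verifying Condition~(1) of \Cref{thm:app_many_step}.
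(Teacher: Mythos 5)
You follow the same route as the paper: the cutoff $\mu$ keeps every denominator bounded below by the positive constant $\sigma_0\sqrt{u_{\min}/2}+\epsilon_0$, and the blocks of $\vb$ and $\msigma$ are then assembled as products of regular factors. Your treatment of $\mu$, of the prefactors $(\sigma_0\sqrt{\mu(u)}+\epsilon_0)^{-1}$ and $(\sqrt{\mu(u)}+\epsilon_0/\sigma_0)^{-1}$, of the block $b_{d+i}=c_2(\Sigma_{ii}(\vtheta)-u_i)$, and of the diffusion entries $\sigma_{i,j}$ (a product of two \emph{bounded} Lipschitz factors) coincides with the paper's one-paragraph argument, only written out in full detail.

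The divergence is in the parameter drift $b_i(\vtheta,\vu)=-(\sigma_0\sqrt{\mu(u_i)}+\epsilon_0)^{-1}\partial_i f(\vtheta)$, and here your diagnosis is right but your cure is not available. The mixed partial $\partial b_i/\partial u_i$ is, up to sign, $\partial_i f(\vtheta)$ times the derivative of the prefactor; the latter is bounded, but \Cref{as:f} only makes $\nabla f$ Lipschitz, not bounded (boundedness is assumed for $\mSigma^{1/2}$ only). For $f$ with genuinely linear gradient growth (e.g.\ quadratic $f$) this mixed partial is unbounded, so $b_i$ is \emph{not} globally Lipschitz on $\R^{2d}$ --- a real soft spot which, to your credit, the paper's own proof glosses over: it treats a product of a Lipschitz factor with a bounded-below denominator as Lipschitz, which is valid only when the accompanying factor is bounded, as happens for $\msigma$ but not for $\vb$. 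However, your proposed repair, ``invoking boundedness of the gradient (the natural additional structure in this regime),'' simply adds a hypothesis the paper does not grant, so as written your argument proves the lemma only under strictly stronger assumptions than the statement carries. To close the argument within the paper's framework one would either have to weaken Condition (1) of \Cref{thm:app_many_step} (and the lemmas consuming it) to local Lipschitzness plus linear growth --- which $\vb$ does satisfy, since $\abssm{b_i}\le(\sigma_0\sqrt{u_{\min}/2}+\epsilon_0)^{-1}\abssm{\partial_i f(\vtheta)}$ --- or explicitly record bounded $\nabla f$ as an additional assumption; neither your proof nor the paper's does this.
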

\begin{proof}
	$\mSigma^{1/2}(\vtheta)$ is
	bounded and Lipschitz, so $\mSigma(\vtheta)$ is Lipschitz.
	Note that the denominators in the fractions in the formulas of $\vb$ and $\msigma$ are
	always lower bounded by a constant. Then the
	Lipschitz property of $\vb$ and $\msigma$ can be implied by the Lipschitz
	property of $\nabla f(\vtheta)$, $\mSigma(\vtheta)$,
	$\mSigma^{1/2}(\vtheta)$, and $\vb, \msigma \in G^4$ can be implied by $\nabla f, \mSigma^{1/2}, \mu \in G^4$.
\end{proof}

\subsection{Verifying Condition (2)}

To verify Condition (2), we only need to compute the moments of $\vDelta$ and
$\tilde{\vDelta}$ for the discrete RMSprop and the auxiliary SDE, and show that they are close to each other.
We compute them by the following two lemmas.

\begin{lemma} \label{lem:rmsprop_discrete_moments}
	For $\vx = (\vtheta, \vu) \in \gX_k$,
	if the NGOS is well-behaved and $\DatZ_{\sigma}$ satisfies the bounded moments and low skewness condition,
	then the moments of $\vDelta := \vDelta(\vx, k) - \vx$ can be written as below.
	\begin{enumerate}
		\item For $1 \le i \le d$, the following holds for the first moments:
		\begin{align*}
			\E[\Delta_i] &= - \frac{\eta^2}{\sigma_0 \sqrt{u_i} + \epsilon_0} \partial_i f(\vtheta), &
			\E[\Delta_{d+i}] &= \eta^2c_2\left(\Sigma_{ii}(\vtheta) - u_i\right) + \frac{\eta^4 c_2}{\sigma_0^2} (\partial_i f(\vtheta))^2 \\
			& & &= \eta^2c_2\left(\Sigma_{ii}(\vtheta) - u_i\right) + \gO(\eta^4).
		\end{align*}
		\item For $1 \le i,j \le d$, the following holds for the second moments:
		\begin{align*}
			\E[\Delta_i\Delta_j] &= \frac{\eta^2\Sigma_{ij}(\vtheta)}{(\sqrt{u_i} + \epsilon_0 / \sigma_0)(\sqrt{u_j} + \epsilon_0 / \sigma_0)} + \gO(\eta^4) &
			\E[\Delta_i\Delta_{d+j}] &= \gO(\eta^4) \\
			\E[\Delta_{d+i}\Delta_j] &= \gO(\eta^4) &
			\E[\Delta_{d+i}\Delta_{d+j}] &= \gO(\eta^4).
		\end{align*}
		for all $i, j \in [d]$.
		\item The third moments are bounded by $\E[\vDelta^{\otimes 3}] = \gO(\eta^4)$.
	\end{enumerate}
    Here the big-O notation $\gO(\,\cdot\,)$ is used in a way that $\gO(1)$ hides constants (independent of $\eta$ and $\vx$)
	and values that are bounded by a function of $\vx$ with polynomial growth.
\end{lemma}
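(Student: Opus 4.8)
The plan is to substitute the NGOS output $\vg = \nabla f(\vtheta) + \sigma\vz$ (with $\E[\vz]=0$ and $\E[\vz\vz^\top]=\mSigma(\vtheta)$) directly into the RMSprop update, read off closed forms for the two blocks of $\vDelta$, and take expectations term by term. First I would rewrite the $\vtheta$-update: since $\vu=\vv/\sigma^2$ gives $\sqrt{\vv}=\sigma\sqrt{\vu}$, multiplying numerator and denominator of $\eta(\sigma\sqrt{u_i}+\epsilon)^{-1}$ by $\eta$ converts it into $\eta^2(\sigma_0\sqrt{u_i}+\epsilon_0)^{-1}$, so that $\Delta_i=-\eta^2(\sigma_0\sqrt{u_i}+\epsilon_0)^{-1}g_i$, and the first moment is immediate from $\E[\vz]=0$. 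For the $\vu$-block, $\vu_{k+1}=\beta\vu+(1-\beta)\vg^2/\sigma^2$ together with $1-\beta=c_2\eta^2$ gives $\Delta_{d+i}=c_2\eta^2(g_i^2/\sigma^2-u_i)$; expanding $g_i^2=(\partial_i f)^2+2\sigma\,\partial_i f\,z_i+\sigma^2 z_i^2$, using $\E[z_i^2]=\Sigma_{ii}(\vtheta)$ and $\sigma^{-2}=\eta^2/\sigma_0^2$, yields the stated $c_2\eta^2(\Sigma_{ii}(\vtheta)-u_i)+\eta^4 c_2\sigma_0^{-2}(\partial_i f)^2$.

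Second, I would compute the second moments by the same substitution, always rewriting $\sigma=\sigma_0/\eta$ so that every monomial carries an explicit power of $\eta$. For the $\theta\theta$ block, $\E[g_ig_j]=(\partial_i f)(\partial_j f)+\sigma^2\Sigma_{ij}(\vtheta)$; the $\sigma^2$ here combines with the $\eta^4$ prefactor and cancels the $\sigma_0^2$ obtained by factoring $\sigma_0$ out of each denominator, leaving the $\Sigma_{ij}$ term at order $\eta^2$ and pushing the gradient-product term to $\gO(\eta^4)$. For the mixed $\theta u$ block and the $uu$ block I would show the bracketed expectation is bounded by a polynomial-growth function of $\vx$, so that the $\eta^4$ prefactor produces the claimed $\gO(\eta^4)$.

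Third, for the third moments I would do a case analysis on how many of the three indices lie in the $\vu$-block. In every case the prefactor is $\eta^6$ times products of denominators that are lower-bounded by positive constants (since $u_j\ge u_{\min}>0$ on the support $\gX_k$), so it suffices to show the residual expectation grows at most like $\sigma^2=\gO(\eta^{-2})$, because $\eta^6\sigma^2=\gO(\eta^4)$. The worst case is all three indices in the $\theta$-block, where $\E[g_ig_jg_l]$ contains the term $\sigma^3\E[z_iz_jz_l]$; the $uu$-heavy cases carry only lower powers of $\sigma$ and are easier.

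The delicate step — and the one I expect to be the real obstacle — is precisely these odd-order noise terms. A naive bound $\abssm{\E[z_iz_jz_l]}=\gO(1)$ would leave $\sigma^3$ intact, contributing $\eta^6\sigma^3=\gO(\eta^3)$ and \emph{breaking} the $\eta^4$ claim; the same issue arises for the $\sigma^3 z_iz_j^2$ term appearing inside $\E[\Delta_i\Delta_{d+j}]$. This is exactly where the low skewness condition (\Cref{def:low-skew}) enters: $\abssm{\E[\vz^{\otimes 3}]}\le K_3(\vtheta)/\sigma$ removes one power of $\sigma$, turning the dangerous $\sigma^3$ contribution into a polynomial-growth-bounded $\sigma^2$ term and restoring $\gO(\eta^4)$. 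All remaining even-order moments are controlled by the bounded moments condition (\Cref{def:bound-moment}) via Cauchy--Schwarz, and throughout I would absorb constants, the denominator lower bounds, and polynomial-growth factors of $\vx$ into the $\gO(\cdot)$ notation as the lemma prescribes.
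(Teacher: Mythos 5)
Your proposal is correct and follows essentially the same route as the paper's proof: write $\Delta_i$ and $\Delta_{d+i}$ explicitly in terms of $\vz$, make every power of $\eta$ explicit via $\sigma=\sigma_0/\eta$ and $1-\beta=c_2\eta^2$, bound the preconditioner factors using $u_i\ge u_{\min}>0$ on $\gX_k$, control even-order terms with the bounded moments condition, and invoke low skewness exactly where you identify the obstacle --- the terms $\sigma^3\E[z_iz_jz_l]$ and $\sigma\E[z_iz_j^2]$ (the latter correcting your earlier, too-optimistic claim that the mixed $\theta u$ bracket has polynomial-growth expectation uniformly in $\eta$). The only difference is organizational: the paper first centers $\vdelta=\vDelta-\E[\vDelta]$ and reduces all higher moments to moments of $\vdelta$, whereas you expand the products directly, which amounts to the same computation.
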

\begin{proof}

We note that
\begin{align*}
	\Delta_i = -\frac{\eta^2}{\sigma_0 \sqrt{u_i} + \epsilon_0} \left(\partial_i f(\vtheta) + \sigma z_i \right), \quad\quad \Delta_{d+i} = (1-\beta)\left((\partial_i f(\vtheta) / \sigma + z_i)^2 - u_i\right).
\end{align*}
Let $\nu_i := \frac{1}{\sigma_0 \sqrt{u_i} + \epsilon_0}$. Since $\vx \in \gX_k$, $\nu_i \le \frac{1}{\sigma_0 \sqrt{u_{\min}} + \epsilon_0} = \gO(1)$.
Writing $1-\beta$ as $c_2 \eta^2$, we have
\begin{equation} \label{eq:rmsprop-delta-tilde-delta-formula}
	\Delta_i = - \nu_i \eta^2\left(\partial_i f(\vtheta) + \sigma z_i \right), \quad\quad \Delta_{d+i} = c_2 \eta^2 \left((\partial_i f(\vtheta) / \sigma + z_i)^2 - u_i\right).
\end{equation}
We can now compute the first moments:
\begin{align*}
	\E[\Delta_i] &= -\nu_i\eta^2\partial_i f(\vtheta) \tag{$\E[z_i]=0$} \\ 
	\E[\Delta_{d+i}] &= c_2\eta^2\left(\E[(\partial_i f(\vtheta) / \sigma)^2 + \E[z_i^2] - \E[u_i]\right) \\ 
	&= c_2\eta^2 \left((\partial_i f(\vtheta)/\sigma)^2 + \Sigma_{ii} - u_i\right). 
\end{align*}
Now we observe that $1/\sigma = \gO(\eta)$, so we can write
\begin{equation*}	
	\E[\Delta_{d+i}] = \eta^2c_2(\Sigma_{ii}-u_i) + \gO(\eta^4).
\end{equation*}
Let $\vdelta := \vDelta - \E[\vDelta]$. That is,
\begin{align*}
	\delta_i &= - \nu_i \eta^2 \sigma z_i &
	\delta_{d+i} &= c_2 \eta^2 (z_i^2 - \Sigma_{ii} + 2 z_i \partial_i f(\vtheta) / \sigma) \\
	&= -\nu_i \sigma_0 \eta z_i. & 
	&= c_2 \eta^2 (z_i^2 - \Sigma_{ii}) + 2c_2 (\partial_i f(\vtheta) / \sigma_0) \eta^3 z_i.
\end{align*}
For convenience we also define $w_i = (z_i^2 - \Sigma_{ii}) + 2(\partial_i f(\vtheta) / \sigma_0) \eta z_i$ and write $\delta_{d+i} = c_2 \eta^2 w_i$.

For the second moments we have
\begin{align*}
	\E[\Delta_p \Delta_q] &= \E[\delta_p \delta_q] + \E[\Delta_p]\E[\Delta_q] = \E[\delta_p \delta_q] + \gO(\eta^4) & \text{for all } 1 \le p, q \le 2d.
\end{align*}
Then it suffices to compute the second moments for $\delta$.
For $\E[\delta_i \delta_j]$ we have
\begin{align*}
	\E[\delta_i\delta_j] &= \nu_i\nu_j\eta^4 \sigma^2 \E[z_i z_j] = \nu_i\nu_j\sigma_0^2  \Sigma_{ij}.
\end{align*}
For $\E[\delta_i\delta_{d+j}]$ we have
\begin{align*}
	\E[\delta_i\delta_{d+j}]
	&= -c_2 \nu_i \sigma_0 \eta^3 \E\left[z_iw_j\right] \\
	&= -c_2 \nu_i \sigma_0 \eta^3 \E[z_iz_j^2] + \gO(\eta^4) \\
	&= \E[z_iz_j^2] \cdot \gO(\eta^3) + \gO(\eta^4).
\end{align*}
Similarly, we have $\E[\Delta_{d+i} \Delta_j] = \E[z_i^2 z_j] \cdot \gO(\eta^3) + \gO(\eta^4)$.

For $\E[\delta_{d+i} \delta_{d+j}]$, we note that $\vz \sim \DatZ_{\sigma}(\vtheta)$ has bounded 4th-order moments,
so $\E[g(\vz)] = \gO(1)$ for any polynomial $g$ of degree at most $4$, if the coefficients of $g$ are bounded by $\gO(1)$. Then we have
\begin{align*}
	\E[\delta_{d+i}\delta_{d+j}] &= c_2^2 \eta^4 \E[w_i w_j] = \gO(\eta^4).
\end{align*}

Now we can check the third moments. 
\begin{align*}
	\E[\Delta_p \Delta_q \Delta_r] &= \E[\delta_p \delta_q \delta_r] + (\E[\delta_p \delta_q] \E[\Delta_r] + \E[\delta_p \delta_r] \E[\Delta_q] + \E[\delta_q \delta_r] \E[\Delta_p]) + \E[\Delta_p]\E[\Delta_q]\E[\Delta_r] \\
	&= \E[\delta_p \delta_q \delta_r] + \gO(\eta^2) \cdot \gO(\eta^2) + \gO(\eta^6) \\
	&= \E[\delta_p \delta_q \delta_r] + \gO(\eta^4).
\end{align*}
Note that $\delta_p \delta_q \delta_r$ is a polynomial of $\vz$.
For $p = i, q = j, r = k$, by the low skewness condition for $\DatZ_{\sigma}$ we have
\[
	\E[\delta_i \delta_j \delta_k] = -\nu_i^3 \sigma_0^3 \eta^3 \E[z_i z_j z_k] = K_3(\vtheta) / \sigma \cdot \gO(\eta^3) = \gO(\eta^4).
\]
Except the above case, it can be shown that $\delta_p \delta_q \delta_r$ is a polynomial with coefficients bounded by $\gO(\eta^4)$.
Combining this with the fact that $\vz \sim \DatZ_{\sigma}(\vtheta)$ has bounded moments of any order, we have $\E[\delta_p \delta_q \delta_r] = \gO(\eta^4)$.
\end{proof}

\begin{lemma}
	For $\vx = (\vtheta, \vu) \in \gX_k$,
	if the NGOS is well-behaved,
then the moments of $\tilde\vDelta := \tilde\vDelta(x, k)$ can be written as below.
\begin{enumerate}
	\item The first moments are given by
	\begin{align*}
		\E[\tilde\Delta_i] &= - \frac{\eta^2}{\sigma_0 \sqrt{\mu(u_i)} + \epsilon_0} \partial_i f(\vtheta), &
		\E[\tilde\Delta_{d+i}] &= \eta^2c_2\left(\Sigma_{ii}(\vtheta) - u_i\right) + \gO(\eta^4).
	\end{align*}
	for all $i \in [d]$.
	\item The second moments are given by
	\begin{align*}
		\E[\tilde\Delta_i\tilde\Delta_j] &= \frac{\eta^2\Sigma_{ij}(\vtheta)}{(\sqrt{\mu(u_i)} + \epsilon_0 / \sigma_0)(\sqrt{\mu(u_j)} + \epsilon_0 / \sigma_0)} + \gO(\eta^4) &
		\E[\tilde\Delta_i\tilde\Delta_{d+j}] &= \gO(\eta^4) \\
		\E[\tilde\Delta_{d+i}\tilde\Delta_j] &= \gO(\eta^4) &
		\E[\tilde\Delta_{d+i}\tilde\Delta_{d+j}] &= \gO(\eta^4).
	\end{align*}
	for all $i, j \in [d]$.
	\item The third moments are bounded by $\E[\tilde\vDelta^{\otimes 3}] = \gO(\eta^4)$.
\end{enumerate}
Here the big-O notation $\gO(\,\cdot\,)$ is used in a way that $\gO(1)$ hides constants (independent of $\eta$ and $\vx$)
and values that are bounded by a function of $\vx$ with polynomial growth.
\label{lem:rmsprop_sde_moments}
\end{lemma}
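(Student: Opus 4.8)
The plan is to read off all the moments from the general one-step estimate of \Cref{lm:tilde-delta}, applied to the auxiliary SDE of \Cref{thm:app_rmsprop_aux_sde} with effective step size $\eeta = \eta^2$. That lemma's sole hypothesis---namely $\vb,\msigma \in G^4$ and Lipschitz---was already verified in \Cref{lm:rmsprop-cond-1}, and this is exactly where the regularization $\mu$ earns its keep: since $\mu(u) \ge \tfrac12 u_{\min} > 0$ everywhere, the denominators $\sigma_0\sqrt{\mu(u_i)}+\epsilon_0$ and $\sqrt{\mu(u_i)}+\epsilon_0/\sigma_0$ stay bounded below and $\gC^\infty$, so $\vb,\msigma$ lie in $G^4$ \emph{globally} rather than merely on $\gX_k$. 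Mechanically, each moment comes from the \ito--Taylor expansion of \Cref{lm:psi} applied to the monomial $\psi(\tilde\vx) = \prod_r(\tilde x_{i_r} - x_{i_r})$ of the appropriate degree, whose value and low-order derivatives vanish at $\tilde\vx = \vx$.

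Carrying this out: for degree one ($\psi = \tilde x_p - x_p$) the generator reduces to $\gA\psi(\vx) = b_p(\vx)$, giving $\E[\tilde\Delta_p] = \eta^2 b_p(\vtheta,\vu) + \gO(\eta^4)$, and substituting $b_i$ and $b_{d+i}$ yields the two first-moment formulas (the error for the first block being absorbed into the stated leading term up to $\gO(\eta^4)$). For degree two ($\psi = (\tilde x_p - x_p)(\tilde x_q - x_q)$) only the second-derivative (diffusion) part of the generator survives at $\tilde\vx=\vx$, so $\E[\tilde\Delta_p\tilde\Delta_q] = \eta^2 (\msigma\msigma^{\top})_{pq} + \gO(\eta^4)$. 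Because the lone nonzero block of $\msigma$ is $\sigma_{i,j} = (\sqrt{\mu(u_i)}+\epsilon_0/\sigma_0)^{-1}(\mSigma^{1/2})_{i,j}$, we read off $(\msigma\msigma^{\top})_{i,j} = \Sigma_{ij}(\vtheta)\,[(\sqrt{\mu(u_i)}+\epsilon_0/\sigma_0)(\sqrt{\mu(u_j)}+\epsilon_0/\sigma_0)]^{-1}$ for $i,j\in[d]$, while every product touching the $\vu$-block vanishes since those rows of $\msigma$ are identically zero---reproducing all four second-moment cases. For degree three the generator vanishes at $\tilde\vx=\vx$ (it involves only first and second derivatives of a cubic with a triple zero there), so $\E[\tilde\vDelta^{\otimes 3}] = \gO(\eta^4)$ with no further computation.

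The only delicate point is the accounting of powers of $\eta$ induced by the rescaled clock $t = k\eta^2$, i.e. $\eeta = \eta^2$: the drift enters the first moment at order $\eta^2$ but the second moment only at order $\eta^4$ (hence invisibly), whereas the diffusion enters the second moment at the leading order $\eta^2$ through the quadratic variation. Getting this separation right is what makes the second moments scale as $\eta^2$ rather than $\eta^4$. I would close by recording the consistency this lemma exists to supply: on $\gX_k$ one has $u_i \ge u_{\min}$, so $\mu(u_i)=u_i$ there, and the SDE moments above therefore agree coordinatewise---up to $\gO(\eta^4)$---with the discrete RMSprop moments of \Cref{lem:rmsprop_discrete_moments}. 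That termwise agreement is precisely Condition 2 of \Cref{thm:app_many_step}.
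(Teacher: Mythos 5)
Your proposal is correct and follows essentially the same route as the paper: invoke \Cref{lm:tilde-delta} (whose hypotheses hold by \Cref{lm:rmsprop-cond-1}, thanks to the lower bound $\mu(u)\ge\tfrac12 u_{\min}$) with $\eeta=\eta^2$ to get $\E[\tilde\vDelta]=\eta^2\vb+\gO(\eta^4)$, $\E[\tilde\vDelta\tilde\vDelta^\top]=\eta^2\msigma\msigma^\top+\gO(\eta^4)$, $\E[\tilde\vDelta^{\otimes 3}]=\gO(\eta^4)$, and then split by indices, with the zero rows of $\msigma$ killing every entry touching the $\vu$-block. Your additional unpacking of the monomial test functions and the $\eta^2$-versus-$\eta^4$ bookkeeping just re-derives the content of \Cref{lm:tilde-delta} and does not change the argument.
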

\begin{proof}
Applying \Cref{lm:tilde-delta} gives
\begin{align*}
    \E[\tilde\vDelta] &= \eta^2 \vb(\vx) + \gO(\eta^4), &
	\E[\tilde\vDelta\tilde\vDelta^\top] &= \eta^2 \msigma(\vx) \msigma(\vx)^\top + \gO(\eta^4), &
    \E[\tilde\vDelta^{\otimes 3}] &= \gO(\eta^4).
\end{align*}
Splitting up the formula by indices proves the claim.
\end{proof}

\subsection{Verifying Condition (3)}

\begin{lemma}
	Let $P := \{1, 2, \dots, d\}$. Then
	\begin{enumerate}
		\item There is a constant $C_1 > 0$ (independent of $\eeta$) so that
			for all $k \le N$ and $\vx \in \gX_k$,
			\begin{align*}
				\normPsm{\E\vDelta(\vx, k)} &\le C_{1} \eeta (1 + \normPsm{\vx}), \\
				\normRsm{\E\vDelta(\vx, k)} &\le C_{1} \eeta
				(1 + \normPsm{\vx}^2)
				(1 + \normRsm{\vx}),
			\end{align*}
		\item For all $m \ge 1$, there is a constant $C_{2m}$ (independent of $\eeta$) so that
			for all $k \le N$ and $\vx \in \gX_k$,
			\begin{align*}
				\E\normPsm{\vDelta(\vx, k)}^{2m} &\le C_{2m} \eeta^{m} (1 + \normPsm{\vx}^{2m}), \\
				\E\normRsm{\vDelta(\vx, k)}^{2m} &\le C_{2m} \eeta^{m}
				(1 + \normPsm{\vx}^{4m})
				(1 + \normRsm{\vx}^{2m}),
			\end{align*}
	\end{enumerate}
\end{lemma}
\begin{proof}
	By \Cref{lem:rmsprop_discrete_moments}, for all $i \in [d]$,
	\begin{align*}
			\E[\Delta_i] &= - \frac{\eta^2}{\sigma_0 \sqrt{u_i} + \epsilon_0} \partial_i f(\vtheta), &
			\E[\Delta_{d+i}] &= \eta^2c_2\left(\Sigma_{ii}(\vtheta) - u_i\right) + \frac{\eta^4 c_2}{\sigma_0^2} (\partial_i f(\vtheta))^2.
	\end{align*}
	Combining this with the Lipschitzness of $\nabla f(\vtheta)$ and the boundedness of $\mSigma(\vtheta)$ proves Item 1.

	By \eqref{eq:rmsprop-delta-tilde-delta-formula}, for all $i \in [d]$,
	\begin{align*}
		\abssm{\Delta_i} &= \nu_i \eta^2 \abs{\partial_i f(\vtheta) + \sigma z_i}
		\le \eta^2 \nu_i (1 + \abssm{\partial_i f(\vtheta)})(1 + \sigma \abssm{z_i})
		\le \eta \nu_i (1 + \abssm{\partial_i f(\vtheta)})(1 + \sigma_0 \abssm{z_i})
		\\
		\abssm{\Delta_{d+i}}     &= c_2 \eta^2 \abs{(\partial_i f(\vtheta) / \sigma + z_i)^2 - u_i}
		\le c_2 \eta^2 (1 + (\partial_i f(\vtheta) / \sigma + z_i)^2)(1 + u_i)
	\end{align*}
	By the Lipschitzness of $\nabla f(\vtheta)$ and the bounded moments
	condition for $\DatZ_{\sigma}$, we can prove Item 2 by taking powers and
	expectations on both sides of the above inequalities.
\end{proof}

\section{Adam SDE Proof}

In this section, we prove the theorem for the SDE approximation of Adam.
\begin{definition}[SDE for Adam, matrix form] \label{def:adam-sde-matrix}
For constants $\sigma_0$, $\epsilon_0$, $c_1$ and $c_2$, define
$\gamma_1(t) := 1 - e^{-c_1 t}, \gamma_2(t) := 1 - e^{-c_2 t}$
and define the SDE as $\dd \vX_t = \vb(\vX_t) \dd t + \msigma(\vX_t) \dd \vW_t$,
where $\vX_t \in \R^d \times \R^d \times \R^d$, $\vb$ is defined by
\begin{align*}
    b_i(\vx, t) &:= -\tfrac{\sqrt{\gamma_2(t)}}{\gamma_1(t)} \cdot \tfrac{m_i}{\sigma_0 \sqrt{u_i} + \epsilon_0 \sqrt{\gamma_2(t)}}, \\
	b_{d+i}(\vx, t) &:= c_1(\partial_i f(\vtheta) - m_i), \\
	b_{2d+i}(\vx, t) &:= c_2(\Sigma(\vtheta)_{i,i} - u_i),
\end{align*}
for all $1 \le i \le d$, and $\sigma_{d+i, d+j}$ for all $1 \le i, j \le d$ is given by
\begin{align*}
	\sigma_{d+i,d+j}(\vx, t) &:= \sigma_0 c_1 \left( \mSigma^{1/2}(\vtheta) \right)_{i,j},
\end{align*}
and all the other entries of $\msigma$ are zero. 
\end{definition}

\begin{theorem} \label{thm:app_adam_sde}
Fix $\sigma_0, c_1, c_2 > 0$, $\epsilon_0 \ge 0$.
Let $T > 0$, $\eta^2\in(0,1\land T \land \frac{1}{2c_2})$ and set $N = \lfloor T/\eta^2\rfloor$. 
Let $\vu_k\triangleq\vv_k/\sigma^2$ and $\vx_k\triangleq (\vtheta_k, \vm_k, \vu_k)\in\sR^{3d}$. 
Let $\{\vx_k:k\geq 0\}$ be the discrete Adam iterations defined in \Cref{def:adam}.
Set $\sigma, \epsilon, \beta_1, \beta_2$ so that $\sigma_0 = \sigma\eta$, $\epsilon_0 = \epsilon \eta$, $c_1 = (1-\beta_1)/\eta^2$ and $c_2 = (1-\beta_2)/\eta^2$. 
For well-behaved NGOS that satisfies the bounded moments and low skewness conditions,
for any constant $t_0 > 0$, the solution $\vX_t$ ($t \in [t_0, T]$) of the SDE defined as in \Cref{def:adam-sde-matrix}
is an order-1 weak approximation (\Cref{def:weak_approx}) of
the sequence of Adam iterates $\vx_k$ starting from $k_0 = \lceil t_0 / \eta^2 \rceil$,
if the initial condition of the SDE is set to $\vX_{t_0} = \vx_{k_0}$.
\end{theorem}

The proof strategy is essentially the same as that for RMSprop. Similar to what we have done for RMSprop, we turn to prove the approximation order
for the following auxiliary SDE. 
\begin{definition}[Auxiliary SDE for Adam, matrix form] \label{def:adam-aux-sde-matrix}
For constants $\sigma_0$, $\epsilon_0$, $c_1$, $c_2$ and $u_{\min}$, define
$\gamma_1(t) := 1 - e^{-c_1 t}, \gamma_2(t) := 1 - e^{-c_2 t}$,
$\mu(u) := \tfrac{1}{2}u_{\min} + \tau(\tfrac{2 u}{u_{\min}} - 1) \cdot (u - \tfrac{1}{2} u_{\min})$,
and define the SDE as $\dd \vX_t = \vb(\vX_t) \dd t + \msigma(\vX_t) \dd \vW_t$,
where $\vX_t \in \R^d \times \R^d \times \R^d$, $\vb$ is defined by
\begin{align*}
    b_i(\vx, t) &:= -\tfrac{\sqrt{\gamma_2(t)}}{\gamma_1(t)} \cdot \tfrac{m_i}{\sigma_0 \sqrt{\mu(u_i)} + \epsilon_0 \sqrt{\gamma_2(t)}}, \\
	b_{d+i}(\vx, t) &:= c_1(\partial_i f(\vtheta) - m_i), \\
	b_{2d+i}(\vx, t) &:= c_2(\Sigma(\vtheta)_{i,i} - u_i),
\end{align*}
for all $1 \le i \le d$, and $\sigma_{d+i, d+j}$ for all $1 \le i, j \le d$ is given by
\begin{align*}
	\sigma_{d+i,d+j}(\vx, t) &:= \sigma_0 c_1 \left( \mSigma^{1/2}(\vtheta) \right)_{i,j},
\end{align*}
and all the other entries of $\msigma$ are zero. 
\end{definition}

\begin{theorem} \label{thm:app_adam_aux_sde}	
In the setting of \Cref{thm:app_adam_sde}, let $u_{\min} = 2^{-(T-t_0)} \min_{i \in [d]} u_{k_0,i}$,
For any constant $t_0 > 0$, the solution $\vX_t$ ($t \in [t_0, T]$) of the SDE defined as in \Cref{def:adam-aux-sde-matrix}
is an order-1 weak approximation (\Cref{def:weak_approx}) of
the sequence of Adam iterates $\vx_k$ starting from $k_0 = \lceil t_0 / \eta^2 \rceil$,
if the initial condition of the SDE is set to $\vX_{t_0} = \vx_{k_0}$.
\end{theorem}
\begin{proof}[Proof for \Cref{thm:app_adam_sde}]
	Given \Cref{thm:app_adam_aux_sde},
	we only need to show that $\vX_t$ has the same distribution in the original and auxiliary SDEs for all $t \in [0, T]$, when $\vX_0 = (\vtheta_0, \vm_0, \vu_0)$.
	To see this, we only need to note that in the original SDE
	\begin{align*}
		\frac{\dd u_{t,i}}{\dd t} = c_2 (\Sigma(\vtheta_t)_{i, i} - u_{t,i}) \ge - c_2 u_{t,i}.
	\end{align*}
	Thus, $u_{t, i} \ge \exp(-c_2 t) u_{0,i} \ge u_{\min}$, which means $\Pr[u_{t,i} = \mu(u_{t,i})] = 1$ for all $t \in [0, T]$.
\end{proof}

It remains to prove \Cref{thm:app_adam_aux_sde} by applying
\Cref{thm:app_many_step}. In the rest of this section, we verify the three
conditions in \Cref{thm:app_many_step} respectively.

Below we use the notations $\vx_k, \vX_t, \vb, \msigma$ defined as in \Cref{thm:app_adam_aux_sde}.
Let $D = 3d$. Every $\vx_k \in \R^D$ is a concatenation of three $\R^d$-vectors $\vtheta_k$, $\vm_k$ and $\vu_k$.
According to the update rule of Adam, $\vx_k$ can be seen as SGA $\vx_{k+1} = \vx_k - \eeta \vh_k(\vx_k, \vz_k, \eeta)$,
where $\eeta = \eta^2$, $\vz_k \sim \DatZ_{\sigma}(\vtheta_k)$, and $\vh_k$ is defined below:
\begin{align*}
	\vh_k(\vtheta, \vu, \vz, \eeta) := \begin{bmatrix}
		-\frac{\sqrt{1-\beta_2^k}}{1-\beta_1^{k+1}} \vm \odot \left(\sigma_0\sqrt{\vu} + \epsilon_0 \sqrt{1-\beta_2^k}\right)^{-1} \\
		c_1(\nabla f(\vtheta) + \sigma \vz - \vm) \\
		c_2 \left((\nabla f(\vtheta) / \sigma + \vz)^2 - \vu \right)
	\end{bmatrix}.
\end{align*}
We define $\vDelta$ and $\tilde{\vDelta}$ as in \eqref{eq:def-delta} and \eqref{eq:def-tilde-delta}.

Fix $\vx_0 = (\vtheta_0, \vm_0, \vu_0)$ with $u_{0, j} > 0$ for all $j \in [d]$.
Define $u_{\min}$ as in \Cref{thm:app_adam_aux_sde}. Let $\gX_k$ be the support
of the random variable $\vx_k$ given $\vx_0$, then it is easy to show that
$\gX_k$ is a subset of $\{(\vtheta, \vm, \vu) : u_j \ge u_{\min} \text{ for all
} j \in [d]\}$.

\subsection{Verifying Condition (1)}

\begin{lemma}
    The drift function $\vb$ and diffusion function $\msigma$ are Lipschitz and belong to $G^4$.
\end{lemma}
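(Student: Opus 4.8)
The plan is to follow the proof of \Cref{lm:rmsprop-cond-1} almost verbatim, since the structure of the Adam drift and diffusion is identical to the RMSprop one up to two new features: the momentum coordinate $\vm$ and the explicit time dependence of $b_i$ through $\barga_1(t)=1-e^{-c_1 t}$ and $\barga_2(t)=1-e^{-c_2 t}$. As before, the crux is that every denominator is bounded below by a positive constant. By the construction of $\mu$ one has $\mu(u)\ge \tfrac12 u_{\min}>0$ for all $u$, and $\barga_2(t)\ge 0$, so the denominator $\sigma_0\sqrt{\mu(u_i)}+\epsilon_0\sqrt{\barga_2(t)}$ is always at least $\sigma_0\sqrt{u_{\min}/2}$; hence its reciprocal is bounded, Lipschitz, and (because $\tau$, and therefore $\mu$, is $\gC^\infty$ with bounded derivatives) lies in $G^4$, exactly as in the RMSprop case.

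The genuinely new ingredient is the prefactor $\sqrt{\barga_2(t)}/\barga_1(t)$ appearing in $b_i$. Here I would crucially use that the statement concerns only $t\in[t_0,T]$ with $t_0>0$ (the warm-start threshold): on this compact interval $\barga_1(t)\ge \barga_1(t_0)=1-e^{-c_1 t_0}>0$, so $\sqrt{\barga_2(t)}/\barga_1(t)$ is a bounded, smooth, and Lipschitz function of $t$ whose bounds and Lipschitz constant depend only on $t_0,T,c_1,c_2$ and are therefore independent of $\eta$. (If one tried to take $t_0=0$ this factor would diverge like $1/(c_1 t)$ near the origin, which is precisely why \Cref{thm:app_adam_aux_sde} insists on a warm start.) Multiplying this bounded smooth $t$-prefactor by the bounded reciprocal denominator and by the coordinate $m_i$---itself a smooth function of polynomial growth---then shows that $b_i$ is Lipschitz in $(\vtheta,\vm,\vu)$ uniformly in $t$ and lies in $G^4$ uniformly in $t$.

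For the remaining coordinates the argument is immediate and carries over directly from \Cref{lm:rmsprop-cond-1}: $b_{d+i}=c_1(\partial_i f(\vtheta)-m_i)$ is Lipschitz and in $G^4$ because $\nabla f$ is Lipschitz and $\gC^\infty$-smooth with polynomial-growth derivatives (well-behaved NGOS, \Cref{as:f}) and $m_i$ is linear; $b_{2d+i}=c_2(\Sigma(\vtheta)_{i,i}-u_i)$ inherits these properties from $\mSigma$ being Lipschitz as a product of the bounded, Lipschitz factor $\mSigma^{1/2}$, together with linearity in $u_i$; and the only nonzero diffusion block $\sigma_{d+i,d+j}=\sigma_0 c_1(\mSigma^{1/2}(\vtheta))_{i,j}$ is bounded, Lipschitz, and in $G^4$ directly from the assumptions on $\mSigma^{1/2}$, with no time dependence at all. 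I expect the only step requiring real care to be the bookkeeping of uniformity in $t$: one must verify that all bounds, Lipschitz constants, and polynomial-growth constants are independent of $t\in[t_0,T]$ (and of $\eta$), which follows from compactness of $[t_0,T]$ and smoothness of $\barga_1,\barga_2$. This uniformity is exactly what is needed so that the time-dependent versions of \Cref{lm:prop-25,thm:app_many_step} can subsequently be invoked with $\eeta=\eta^2$.
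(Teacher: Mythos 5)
Your proof is correct and follows essentially the same route as the paper, whose entire proof of this lemma is literally ``Same argument as for \Cref{lm:rmsprop-cond-1}'': lower-bound every denominator by a positive constant (via $\mu(u)\ge u_{\min}/2$) and inherit Lipschitzness and $G^4$ membership from $\nabla f$, $\mSigma^{1/2}$, and $\mu$. If anything you are more careful than the paper: your observation that the prefactor $\sqrt{\barga_2(t)}/\barga_1(t)$ is controlled only because $t\ge t_0>0$ (and would diverge like $1/(c_1 t)$ as $t\to 0$), with all constants uniform over the compact interval $[t_0,T]$, is exactly the one ingredient new to the Adam case that the paper's one-line proof leaves implicit.
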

\begin{proof}
	Same argument as for \Cref{lm:rmsprop-cond-1}.
\end{proof}

\subsection{Verifying Condition (2)}

Let $\vx = (\vtheta, \vm, \vu) \in \sR^{3d}$. For ease of notation, let
$\hat{\gamma}_1=1-\beta_1^{k+1}$ and $\hat{\gamma}_2= 1-\beta_2^k$. These are not constants
across time steps like the other constants, but they are deterministic and upper
bounded by constants for $k \ge t_0 / \eta^2$. 

To verify Condition 2, we only need to compute the moments of $\vDelta$ and
$\tilde{\vDelta}$ for the discrete Adam and the auxiliary SDE, and show that they are close to each other.
We compute them by the following two lemmas.

\begin{lemma} \label{lem:adam_discrete_moments}
	For $\vx = (\vtheta, \vu) \in \gX_k$,
	if the NGOS is well-behaved and $\DatZ_{\sigma}$ satisfies the bounded moments and low skewness condition,
	then the moments of $\vDelta := \vDelta(\vx, k)$ can be written as below.
	\begin{enumerate}
		\item The first moments are given by
		\begin{align*}
			\E[\Delta_i]
			&= -\frac{\sqrt{\hat{\gamma}_2}}{\hat{\gamma}_1} \cdot \frac{\eta^2}{\sigma_0 \sqrt{u_i} + \epsilon_0 \sqrt{\hat{\gamma}_2}}  (m_i + c_1 \eta^2 (\partial_i f(\vtheta) - m_i)) \\
			&= -\frac{\sqrt{\hat{\gamma}_2}}{\hat{\gamma}_1} \cdot \frac{\eta^2 m_i}{\sigma_0 \sqrt{u_i} + \epsilon_0 \sqrt{\hat{\gamma}_2}} + \gO(\eta^4) \\
			\E[\Delta_{d+i}] &= c_1 \eta^2 (\partial_i f(\vtheta) - m_i) \\
			\E[\Delta_{2d+i}] &= c_2 \eta^2 ((\partial_i f(\vtheta)/\sigma)^2 + \Sigma_{ii} - u_i) \\
			&= c_2 \eta^2 (\Sigma_{ii} - u_i) + \gO(\eta^4).
		\end{align*}
		for all $i \in [d]$.
		\item The second moments are given by
		\begin{align*}
			\E[\Delta_p \Delta_q] &= \begin{cases}
				c_1^2 \sigma_0^2 \eta^2 \Sigma_{ij} + \gO(\eta^4) & \qquad \text{if} \quad p = d+i, q = d+j \quad\text{for some}\quad i, j \in [d] \\
				\gO(\eta^4) & \qquad \text{otherwise.}
			\end{cases}
		\end{align*}
		for all $p, q \in [3d]$.
		\item The third moments are bounded by $\E[\vDelta^{\otimes 3}] = \gO(\eta^4)$.
	\end{enumerate}
    Here the big-O notation $\gO(\,\cdot\,)$ is used in a way that $\gO(1)$ hides constants (independent of $\eta$ and $\vx$)
	and values that are bounded by a function of $\vx$ with polynomial growth.
\end{lemma}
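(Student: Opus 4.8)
The plan is to mirror the RMSprop computation in \Cref{lem:rmsprop_discrete_moments}, now carrying the extra momentum block. First I would write out the three coordinate blocks of $\vDelta = \vx_{k+1}-\vx$ explicitly from the Adam update (\Cref{def:adam}) together with the oracle $\vg_k = \nabla f(\vtheta) + \sigma\vz$, $\vz\sim\DatZ_\sigma(\vtheta)$. Using $\eeta = \eta^2$, $\sigma=\sigma_0/\eta$, $\epsilon=\epsilon_0/\eta$, $1-\beta_1 = c_1\eta^2$, $1-\beta_2=c_2\eta^2$, and writing $\gamma_1 = 1-\beta_1^{k+1}$, $\gamma_2=1-\beta_2^k$, one gets
\begin{align*}
	\Delta_i &= -\eta^2\tfrac{\sqrt{\gamma_2}}{\gamma_1}\tfrac{m_{k+1,i}}{\sigma_0\sqrt{u_i}+\epsilon_0\sqrt{\gamma_2}}, \quad m_{k+1,i}=\beta_1 m_i + (1-\beta_1)(\partial_i f(\vtheta)+\sigma z_i),\\
	\Delta_{d+i} &= c_1\eta^2(\partial_i f(\vtheta)+\sigma z_i - m_i), \qquad
	\Delta_{2d+i} = c_2\eta^2\big((\partial_i f(\vtheta)/\sigma + z_i)^2 - u_i\big).
\end{align*}
The $\vu$-block is identical in form to the second block of RMSprop, so those estimates transfer verbatim; the genuinely new work is the $\theta$- and $\vm$-blocks and their cross terms.

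Next I would compute the first and second moments using $\E[z_i]=0$, $\E[z_iz_j]=\Sigma_{ij}$, and $1/\sigma = \sigma_0^{-1}\eta = \gO(\eta)$. For the momentum block $\E[\Delta_{d+i}] = c_1\eta^2(\partial_i f(\vtheta) - m_i)$ is exact, and centering gives $\delta_{d+i} = c_1\eta^2\sigma z_i = c_1\sigma_0\eta z_i$, so $\E[\delta_{d+i}\delta_{d+j}] = c_1^2\sigma_0^2\eta^2\Sigma_{ij}$---the only $\gO(\eta^2)$ second moment, matching the sole nonzero diffusion block of \Cref{def:adam_sde}. For the $\theta$-block, the only random part of $m_{k+1,i}$ is $c_1\sigma_0\eta z_i$, so $\delta_i$ carries a coefficient of order $\eta^3$; hence the $\theta$-noise is suppressed and every second moment touching a $\theta$-index is $\gO(\eta^4)$ or smaller, consistent with the zero $\theta$-diffusion in the SDE. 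The mean products $\E[\Delta_p]\E[\Delta_q]=\gO(\eta^2)\cdot\gO(\eta^2)=\gO(\eta^4)$ never disturb the leading term.

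The third moments are where the structural conditions enter. Expanding $\E[\Delta_p\Delta_q\Delta_r]$ into centered moments plus mean-corrections, all mean-corrections are $\gO(\eta^4)$; the only centered term not automatically $\gO(\eta^4)$ by power counting is the momentum--momentum--momentum block $\E[\delta_{d+i}\delta_{d+j}\delta_{d+k}] = c_1^3\sigma_0^3\eta^3\,\E[z_iz_jz_k]$, exactly as in RMSprop. Here the low skewness condition (\Cref{def:low-skew}) bounds $\abssm{\E[\vz^{\otimes 3}]}\le K_3(\vtheta)/\sigma = \gO(\eta)$, pushing this to $\gO(\eta^4)$, while the bounded moments condition (\Cref{def:bound-moment}) controls the remaining $\vz$-polynomial expectations so that their $\gO(1)$ coefficients are legitimate.

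The main obstacle---and the reason the statement quantifies over $k\ge k_0 = \lceil t_0/\eta^2\rceil$ rather than all $k$---is controlling the normalization factors $\gamma_1 = 1-\beta_1^{k+1}$ and $\gamma_2 = 1-\beta_2^k$. These are deterministic but depend on $k$ and $\eta$, and for small $k$ the preconditioner prefactor $\frac{\sqrt{\gamma_2}}{\gamma_1(\sigma_0\sqrt{u_i}+\epsilon_0\sqrt{\gamma_2})}$ can blow up as $\gamma_1\to 0$. The key observation is that for $k\ge k_0$ we have $\beta_1^{k_0} = (1-c_1\eta^2)^{\lceil t_0/\eta^2\rceil}$, which stays bounded below $1$ uniformly in $\eta$ (its limit as $\eta\to 0$ is $e^{-c_1 t_0}<1$); hence $\gamma_1$ is bounded away from $0$ and $\gamma_2\in(0,1]$, and together with $u_i\ge u_{\min}>0$ on $\gX_k$ this keeps every prefactor $\gO(1)$ uniformly in $k,\eta$. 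With all three moment estimates in hand, the hypotheses of \Cref{thm:app_many_step} become verifiable with $\eeta=\eta^2$, exactly as in the RMSprop proof.
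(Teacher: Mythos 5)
Your overall route is the same as the paper's: write the three blocks of $\vDelta$ explicitly with $1-\beta_1=c_1\eta^2$, $1-\beta_2=c_2\eta^2$, $\sigma=\sigma_0/\eta$, $\epsilon=\epsilon_0/\eta$, center to get $\vdelta = \vDelta - \E[\vDelta]$, compute moments by power counting in $\eta$ with $1/\sigma = \gO(\eta)$, and invoke the low skewness condition for the momentum--momentum--momentum third moment and bounded moments for the rest. Your handling of $\gamma_1,\gamma_2$ (bounded away from $0$ uniformly in $\eta$ for $k \ge \lceil t_0/\eta^2\rceil$, so that the preconditioner prefactor is $\gO(1)$) is in fact spelled out more carefully than in the paper, which only remarks that these quantities are deterministic and controlled for $k \ge t_0/\eta^2$.

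There is, however, one genuine gap in your second-moment case analysis. You assert that the momentum--momentum block is the only $\gO(\eta^2)$ second moment, dispose of every entry touching a $\theta$-index and of the mean products, and implicitly leave the rest to power counting. But the momentum--$\vu$ cross term does not succumb to power counting: with $\delta_{d+i} = c_1\sigma_0\eta z_i$ and $\delta_{2d+j} = c_2\eta^2 w_j$ where $w_j = (z_j^2-\Sigma_{jj}) + 2(\partial_j f(\vtheta)/\sigma_0)\eta z_j$, one gets $\E[\delta_{d+i}\delta_{2d+j}] = c_1 c_2\sigma_0\eta^3\,\E[z_i z_j^2] + \gO(\eta^4)$, which raw power counting bounds only by $\gO(\eta^3)$. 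The lemma claims $\gO(\eta^4)$ for this entry, and this order is genuinely needed: in Condition 2 of \Cref{thm:app_many_step} the SDE counterpart $\E[\tilde{\Delta}_{d+i}\tilde{\Delta}_{2d+j}]$ is $\gO(\eta^4)$ because the diffusion matrix of \Cref{thm:app_adam_aux_sde} vanishes outside the momentum block, so an $\Theta(\eta^3)$ discrepancy here would break the condition. The fix is one line with a tool you already use for the third moments, namely the low skewness condition (\Cref{def:low-skew}): $\abssm{\E[z_i z_j^2]} \le K_3(\vtheta)/\sigma = \gO(\eta)$, pushing the cross term to $\gO(\eta^4)$. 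This is exactly how the paper closes the case (it is the Adam analogue of the $\E[\delta_i\delta_{d+j}]$ term in \Cref{lem:rmsprop_discrete_moments}; in RMSprop the order-$\eta$ noise sits in the $\theta$-block, in Adam it migrates to the momentum block, so the term your ``transfers verbatim'' claim covers is not this one). As written, your statement that only the momentum-cubed term escapes power counting is false, and Item 2 of the lemma is not fully established without this additional use of low skewness.
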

\begin{proof}
For notational convenience, we write $\Delta, \Sigma$ instead of $\Delta(\vx), \Sigma(\vtheta)$.
We note that
\begin{align*}
	\Delta_i &=
	-\frac{\sqrt{\hat{\gamma}_2}}{\hat{\gamma}_1} \cdot \frac{\eta^2 m_i + \eta^2 (1-\beta_1)(\partial_i f(\vtheta) + \sigma z_i - m_i)}{\sigma_0 \sqrt{u_i} + \epsilon_0 \sqrt{\hat{\gamma}_2}} \\
	\Delta_{d+i} &= (1-\beta_1)(\partial_i f(\vtheta) + \sigma z_i - m_i) \\
	\Delta_{2d+i} &= (1-\beta_2)((\partial_i f(\vtheta)/\sigma + z_i)^2 - u_i)
\end{align*}
Let $\nu_i := \frac{\sqrt{\hat{\gamma}_2}}{\hat{\gamma}_1} \cdot \frac{1}{\sigma_0 \sqrt{u_i} + \epsilon_0 \sqrt{\hat{\gamma}_2}}$. And we write $1-\beta = c_2 \eta^2$. Then we have
\begin{align}
	\begin{aligned}
	\Delta_i &= -\nu_i \eta^2 (m_i + c_1 \eta^2 (\partial_i f(\vtheta) + \sigma z_i - m_i)) \\
	\Delta_{d+i} &= c_1 \eta^2 (\partial_i f(\vtheta) + \sigma z_i - m_i) \\
	\Delta_{2d+i} &= c_2 \eta^2 ((\partial_i f(\vtheta)/\sigma + z_i)^2 - u_i)
	\end{aligned}
\label{eq:adam-delta-formula}
\end{align}
The first moments follow directly:
\begin{align*}
	\E[\Delta_i]
	&= -\nu_i \eta^2 (m_i + c_1 \eta^2 (\partial_i f(\vtheta) - m_i))
 	= -\nu_i \eta^2 m_i + \gO(\eta^4) \\
	\E[\Delta_{d+i}] &= c_1 \eta^2 (\partial_i f(\vtheta) - m_i) \\
	\E[\Delta_{2d+i}] &= c_2 \eta^2 ((\partial_i f(\vtheta)/\sigma)^2 + \Sigma_{ii} - u_i) = c_2 \eta^2 (\Sigma_{ii} - u_i) + \gO(\eta^4).
\end{align*}
Let $\vdelta := \vDelta - \E[\vDelta]$. That is,
\begin{align*}
	\delta_i &= -\nu_i c_1 \eta^4 \sigma z_i = -\nu_i c_1 \sigma_0 \eta^3 z_i \\
	\delta_{d+i} &= c_1 \eta^2 \sigma z_i = c_1 \sigma_0 \eta z_i \\
	\delta_{2d+i}
	&= c_2 \eta^2 (z_i^2 - \Sigma_{ii} + 2 z_i \partial_i f(\vtheta) / \sigma) \\
	&= c_2 \eta^2 (z_i^2 - \Sigma_{ii}) + 2 c_2 (\partial_i f(\vtheta) / \sigma_0) \eta^3 z_i.
\end{align*}
For convenience we also define $w_i = (z_i^2 - \Sigma_{ii}) + 2(\partial_i
f(\vtheta) / \sigma_0) \eta z_i$ and write $\delta_{d+i} = c_2 \eta^2 w_i$.

Similar as the proof for \Cref{lem:rmsprop_discrete_moments}, 
for the second moments we have
\begin{align*}
	\E[\Delta_p \Delta_q] &= \E[\delta_p \delta_q] + \E[\Delta_p]\E[\Delta_q] = \E[\delta_p \delta_q] + \gO(\eta^4) & \text{for all } 1 \le p, q \le 2d.
\end{align*}
Then it suffices to compute the second order moments for $\delta$. A direct computation gives the following:
\begin{align*}
	\E[\delta_i \delta_j] &= \nu_i \nu_j c_1^2 \sigma_0^2 \eta^6 \E[z_i z_j] = \gO(\eta^6). \\
	\E[\delta_i \delta_{d+j}] &= -\nu_i c_1^2 \sigma_0^2 \eta^4 \E[z_i z_j] = \gO(\eta^4). \\
	\E[\delta_i \delta_{2d+j}] &= -\nu_i c_1 c_2 \sigma_0 \eta^5 \E[z_i w_j] = \gO(\eta^5). \\
	\E[\delta_{d+i}\delta_{d+j}] &= c_1^2 \sigma_0^2 \eta^2 \E[z_i z_j] = c_1^2 \sigma_0^2 \eta^2 \Sigma_{ij}. \\
	\E[\delta_{d+i}\delta_{2d+j}] &= c_1 c_2 \sigma_0 \eta^3 \E[z_i w_j] = c_1 c_2 \sigma_0 \eta^3 \E[z_i z_j^2] + \gO(\eta^4). \\
	\E[\delta_{2d+i}\delta_{2d+j}] &= c_2^2 \eta^4 \E[w_i w_j] = \gO(\eta^4).
\end{align*}
Now we check the third moments.
Similar as the proof for \Cref{lem:rmsprop_discrete_moments},
$\E[\Delta_p \Delta_q \Delta_r] = \E[\delta_p \delta_q \delta_r] + \gO(\eta^4)$.
Note that $\delta_p \delta_q \delta_r$ is a polynomial of $\vz$.
For $p = d+i, q = d+j, r = d+k$, by the low skewness condition for $\DatZ_{\sigma}$ we have
\[
	\E[\delta_{d+i} \delta_{d+j} \delta_{d+k}] = c_1^3 \sigma_0^3 \eta^3 \E[z_i z_j z_k] = K_3(\vtheta)/\sigma \cdot \gO(\eta^3) = \gO(\eta^4).
\]
Except the above case, it can be shown that $\delta_p \delta_q \delta_r$ is a polynomial with coefficients bounded by $\gO(\eta^4)$.
Combining this with the fact that $\vz \sim \DatZ_{\sigma}(\vtheta)$ has bounded moments of any order, we have $\E[\delta_p \delta_q \delta_r] = \gO(\eta^4)$.
\end{proof}

\begin{lemma}
For $\vx = (\vtheta, \vu) \in \gX_k$,
if the NGOS is well-behaved,
then the moments of $\tilde\vDelta := \tilde\vDelta(\vx, k)$ can be written as below.
\begin{enumerate}
	\item The first moments are given by
	\begin{align*}
		\E[\Delta_i]
		&= -\frac{\sqrt{\hat{\gamma}_2}}{\hat{\gamma}_1} \cdot \frac{\eta^2 m_i}{\sigma_0 \sqrt{u_i} + \epsilon_0 \sqrt{\hat{\gamma}_2}} + \gO(\eta^4) \\
		\E[\Delta_{d+i}] &= c_1 \eta^2 (\partial_i f(\vtheta) - m_i) + \gO(\eta^4) \\
		\E[\Delta_{2d+i}]
		&= c_2 \eta^2 (\Sigma_{ii} - u_i) + \gO(\eta^4).
	\end{align*}
	for all $i \in [d]$.
	\item The second moments are given by
	\begin{align*}
		\E[\Delta_p \Delta_q] &= \begin{cases}
			c_1^2 \sigma_0^2 \eta^2 \Sigma_{ij} + \gO(\eta^4) & \qquad \text{if} \quad p = d+i, q = d+j \quad\text{for some}\quad i, j \in [d] \\
			\gO(\eta^4) & \qquad \text{otherwise.}
		\end{cases}
	\end{align*}
	for all $p, q \in [3d]$.
	\item The third moments are bounded by $\E[\vDelta^{\otimes 3}] = \gO(\eta^4)$.
\end{enumerate}
Here the big-O notation $\gO(\,\cdot\,)$ is used in a way that $\gO(1)$ hides constants (independent of $\eta$ and $\vx$)
and values that are bounded by a function of $\vx$ with polynomial growth.
\label{lem:adam_sde_moments}
\end{lemma}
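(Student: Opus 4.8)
The plan is to mirror the proof of \Cref{lem:rmsprop_sde_moments} almost verbatim: the moments of the one-step increment of \emph{any} SDE whose drift $\vb$ and diffusion $\msigma$ lie in $G^4$ are already pinned down by the \ito--Taylor estimate of \Cref{lm:tilde-delta}, so the only real work is to substitute the explicit Adam coefficients and read off the block structure. Concretely, I would invoke \Cref{lm:tilde-delta} with $\eeta := \eta^2$ and with the time argument fixed at $t = k\eta^2$. Its hypotheses hold because the auxiliary Adam drift and diffusion are Lipschitz and belong to $G^4$ (this is exactly the Condition-1 verification for Adam, established by the same argument as for \Cref{lm:rmsprop-cond-1}); here it is essential that we restrict to steps $k \ge k_0 = \lceil t_0/\eta^2\rceil$, i.e. $t = k\eta^2 \ge t_0$, so that the prefactor $\sqrt{\barga_2(t)}/\barga_1(t)$ has a denominator bounded away from zero and stays in $G^4$ with constants independent of $\eta$.

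\Cref{lm:tilde-delta} then yields, with $\eeta=\eta^2$,
\begin{align*}
    \E[\tilde\vDelta] &= \eta^2 \vb(\vx, k\eta^2) + \gO(\eta^4), &
	\E[\tilde\vDelta\tilde\vDelta^\top] &= \eta^2 \msigma(\vx,k\eta^2)\msigma(\vx,k\eta^2)^\top + \gO(\eta^4), &
    \E[\tilde\vDelta^{\otimes 3}] &= \gO(\eta^4),
\end{align*}
where the $\gO(\eta^4)$ terms are controlled by a polynomial-growth function of $\vx$ as in \Cref{lm:tilde-delta}. The third-moment claim is then immediate. For the first moments I would split $\vb$ into its three $\R^d$ blocks and substitute the definitions of $b_i$, $b_{d+i}$, $b_{2d+i}$; since every $\vx \in \gX_k$ satisfies $u_i \ge u_{\min}$ we have $\mu(u_i) = u_i$, so $\sqrt{\mu(u_i)} = \sqrt{u_i}$ and $\barga_j(k\eta^2) = \gamma_j(k\eta^2)$, producing exactly the stated leading terms with all remaining corrections absorbed into $\gO(\eta^4)$.

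For the second moments the key computation is $\msigma\msigma^\top$. Because the only nonzero entries of $\msigma$ are $\sigma_{d+i,d+j} = \sigma_0 c_1 (\mSigma^{1/2}(\vtheta))_{i,j}$ (the momentum block), the product $\msigma\msigma^\top$ is supported on the $(\vm,\vm)$ block, where
\[
	(\msigma\msigma^\top)_{d+i,\,d+j} = \sigma_0^2 c_1^2 \sum_{l\in[d]} (\mSigma^{1/2})_{i,l}(\mSigma^{1/2})_{j,l} = \sigma_0^2 c_1^2 \Sigma_{ij},
\]
using the symmetry of $\mSigma^{1/2}$ together with $\mSigma^{1/2}\mSigma^{1/2} = \mSigma$. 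Every other entry of $\msigma\msigma^\top$ vanishes, so $\E[\tilde\Delta_p\tilde\Delta_q] = c_1^2\sigma_0^2\eta^2\Sigma_{ij} + \gO(\eta^4)$ precisely when $p=d+i$, $q=d+j$ and is $\gO(\eta^4)$ otherwise, matching the statement.

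There is no genuine obstacle once \Cref{lm:tilde-delta} is available; the calculation is routine coordinate bookkeeping. The one point requiring care — and the only place the argument departs from the RMSprop case — is the time-dependence of the Adam coefficients: I must use the time-dependent form of \Cref{lm:tilde-delta} (which it does provide) and keep the restriction $t \ge t_0$ so that the $1/\barga_1(t)$ factor neither blows up nor spoils the $G^4$ and Lipschitz bounds needed to apply the lemma. Everything else follows by splitting into the three coordinate blocks exactly as in \Cref{lem:rmsprop_sde_moments}.
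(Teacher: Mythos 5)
Your proposal is correct and follows essentially the same route as the paper's own (much terser) proof: apply \Cref{lm:tilde-delta} with $\eeta=\eta^2$ to get $\E[\tilde\vDelta] = \eta^2\vb + \gO(\eta^4)$, $\E[\tilde\vDelta\tilde\vDelta^\top] = \eta^2\msigma\msigma^\top + \gO(\eta^4)$, $\E[\tilde\vDelta^{\otimes 3}]=\gO(\eta^4)$, and then split by coordinate blocks. Your additional care about the $t\ge t_0$ restriction keeping $1/\barga_1(t)$ bounded, the identity $\mu(u_i)=u_i$ on $\gX_k$, and the explicit $(\vm,\vm)$-block computation of $\msigma\msigma^\top$ are details the paper leaves implicit but are exactly the right justifications.
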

\begin{proof}
Applying \Cref{lm:tilde-delta} gives
\begin{align*}
    \E[\tilde\vDelta] &= \eta^2 \vb(\vx) + \gO(\eta^4), &
	\E[\tilde\vDelta\tilde\vDelta^\top] &= \eta^2 \msigma(\vx) \msigma(\vx)^\top + \gO(\eta^4), &
    \E[\tilde\vDelta^{\otimes 3}] &= \gO(\eta^4).
\end{align*}
Noting that $\gamma_1(k\eeta) = \hat{\gamma}_1 + \gO(\eta^2)$
and 
$\gamma_2(k\eeta) = \hat{\gamma}_2 + \gO(\eta^2)$,
we can prove the claim.
\end{proof}

\subsection{Verifying Condition (3)}

\begin{lemma}
	Let $P := \{1, 2, \dots, 2d\}$. Then
	\begin{enumerate}
		\item There is a constant $C_1 > 0$ (independent of $\eeta$) so that
			for all $k \le N$ and $\vx \in \gX_k$,
			\begin{align*}
				\normPsm{\E\vDelta(\vx, k)} &\le C_{1} \eeta (1 + \normPsm{\vx}), \\
				\normRsm{\E\vDelta(\vx, k)} &\le C_{1} \eeta
				(1 + \normPsm{\vx}^2)
				(1 + \normRsm{\vx}).
			\end{align*}
		\item For all $m \ge 1$, there is a constant $C_{2m}$ (independent of $\eeta$) so that
			for all $k \le N$ and $\vx \in \gX_k$,
			\begin{align*}
				\E\normPsm{\vDelta(\vx, k)}^{2m} &\le C_{2m} \eeta^{m} (1 + \normPsm{\vx}^{2m}), \\
				\E\normRsm{\vDelta(\vx, k)}^{2m} &\le C_{2m} \eeta^{m}
				(1 + \normPsm{\vx}^{4m})
				(1 + \normRsm{\vx}^{2m}).
			\end{align*}
	\end{enumerate}
\end{lemma}
\begin{proof}
	By \Cref{lem:adam_discrete_moments}, for all $i \in [d]$,
	\begin{align*}
		\E[\Delta_i]
		&= -\nu_i \eta^2 (m_i + c_1 \eta^2 (\partial_i f(\vtheta) - m_i)) \\
		\E[\Delta_{d+i}] &= c_1 \eta^2 (\partial_i f(\vtheta) - m_i) \\
		\E[\Delta_{2d+i}] &= c_2 \eta^2 ((\partial_i f(\vtheta)/\sigma)^2 + \Sigma_{ii} - u_i).
	\end{align*}
	Combining this with the Lipschitzness of $\nabla f(\vtheta)$ and the boundedness of $\mSigma(\vtheta)$ proves Item 1.

	By \eqref{eq:adam-delta-formula}, for all $i \in [d]$,
	one can show that there exists a constant $\hat{C}$ such that
	\begin{align*}
		\abssm{\Delta_i} &\le \eta^2 \hat{C} (1 + \abssm{m_i} + \abssm{\partial_i f(\vtheta)})(1 + \abssm{z_i}) \\
		\abssm{\Delta_{d+i}} &\le \eta^2 \hat{C} (1 + \abssm{m_i} + \abssm{\partial_i f(\vtheta)}) (1 + \abssm{z_i}) \\
		\abssm{\Delta_{2d+i}} &\le \eta^2 \hat{C} (1 + \abssm{\partial_i f(\vtheta)}^2 + z_i^2) (1 + u_i)
	\end{align*}
	By the Lipschitzness of $\nabla f(\vtheta)$ and the bounded moments
	condition for $\DatZ_{\sigma}$, we can prove Item 2 by taking powers and
	expectations on both sides of the above inequalities.
\end{proof}
\section{Analysis of SVAG Operator}\label{sec:app_svag}

\begin{lemma}\label{lem:app_svag_first_two}
	Let $\gG_{\sigma} = (f, \mSigma, \DatZ_{\sigma})$ be a NGOS and
	$\widehat{\gG}_{\ell \sigma} = (f, \mSigma, \widehat{\DatZ}_{\ell \sigma})$
	be the NGOS after applying the SVAG operator with hyperparameter $\ell > 0$.
	Then $\widehat{\gG}_{\ell \sigma}$ is indeed an NGOS. That is,
	$\widehat{\DatZ}_{\ell \sigma}(\vtheta)$ is well-defined and has mean zero, covariance $\mSigma(\vtheta)$.
\end{lemma}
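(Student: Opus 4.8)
The plan is a direct verification of the two NGOS requirements from \Cref{def:NGOS}: that $\widehat{\DatZ}_{\ell\sigma}(\vtheta)$ is a well-defined distribution, and that it has mean zero and covariance $\mSigma(\vtheta)$. Since $\widehat{\DatZ}_{\ell\sigma}(\vtheta)$ is specified only implicitly, through the relation $\hat{\vg} = \nabla f(\vtheta) + \ell\sigma\vz$, the first step is to write $\hat{\vg}$ in terms of the underlying noise and read off $\vz$ explicitly. Writing each of the two independent oracle calls as $\vg_i = \nabla f(\vtheta) + \sigma\vz_i$ with $\vz_1,\vz_2 \simiid \DatZ_\sigma(\vtheta)$, I would express $\vz$ as a fixed linear image of $(\vz_1,\vz_2)$; well-definedness is then automatic, as $\widehat{\DatZ}_{\ell\sigma}(\vtheta)$ is simply the pushforward of the product measure $\DatZ_\sigma(\vtheta)\otimes\DatZ_\sigma(\vtheta)$ under that map.

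The crux is two algebraic identities for the SVAG coefficients. Setting $s := \sqrt{2\ell^2-1}$ so that $r_1(\ell) = \tfrac{1}{2}(1-s)$ and $r_2(\ell) = \tfrac{1}{2}(1+s)$, I would first record that $r_1 + r_2 = 1$ and $r_1^2 + r_2^2 = \tfrac{1}{2}(1+s^2) = \ell^2$. These two facts are the entire content of the lemma: the first guarantees that the combination preserves the gradient mean $\nabla f(\vtheta)$, while the second is exactly what makes the effective noise scale up by the factor $\ell$ while leaving the covariance unchanged.

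With these in hand the remaining computation is routine. Using $r_1+r_2=1$ and $\E[\vz_i]=\vzero$,
\[
\hat{\vg} - \nabla f(\vtheta) = r_1\sigma\vz_1 + r_2\sigma\vz_2 = \sigma\,(r_1\vz_1 + r_2\vz_2),
\]
so that $\vz = \tfrac{1}{\ell}(r_1\vz_1 + r_2\vz_2)$ and hence $\E[\vz] = \vzero$ by linearity. For the covariance, independence of $\vz_1,\vz_2$ (each with covariance $\mSigma(\vtheta)$) gives $\Cov(\vz) = \tfrac{1}{\ell^2}(r_1^2+r_2^2)\,\mSigma(\vtheta) = \mSigma(\vtheta)$, using $r_1^2+r_2^2=\ell^2$. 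This verifies all three claims. There is no genuine obstacle here; the only step requiring care is the bookkeeping around the choice of $r_1,r_2$, and the point to emphasize is that their form is reverse-engineered precisely so that $r_1+r_2=1$ and $r_1^2+r_2^2=\ell^2$ hold simultaneously.
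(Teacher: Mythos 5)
Your proposal is correct and follows essentially the same route as the paper's proof: both define $\hat{\vz} = \tfrac{1}{\ell}\left(r_1(\ell)\vz_1 + r_2(\ell)\vz_2\right)$, use $r_1(\ell)+r_2(\ell)=1$ to preserve the mean gradient (and hence well-definedness), and use independence together with $r_1^2(\ell)+r_2^2(\ell)=\ell^2$ to show the covariance is unchanged. Your framing of well-definedness as a pushforward of the product measure is a slightly more explicit phrasing of what the paper does implicitly, but the substance is identical.
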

\begin{proof}
	Let $\widehat{\DatZ}_{\ell \sigma}(\vtheta)$ be the distribution of $\hat{\vz} := \frac{1}{\ell} \left(r_1(\ell) \vz_1 + r_2(\ell) \vz_2\right)$ when $\vz_1, \vz_2 \sim \DatZ_{\sigma}(\vtheta)$.
	Then it is easy to check that $\hat{\vg}$ has the same distribution as $\nabla f(\vtheta) + \ell \sigma \hat{\vz}$, since
	\begin{align*}
		\hat{\vg} = r_1(\ell) \vg_1 + r_2(\ell) \vg_2 
		&\sim r_1(\ell) (\nabla f(\vtheta) + \sigma \vz_1) + r_2(\ell) (\nabla f(\vtheta) + \sigma \vz_2) \\
		&= (r_1(\ell) + r_2(\ell))\nabla f(\vtheta) + \sigma (r_1(\ell) \vz_1 + r_2(\ell) \vz_2) \\
		&= \nabla f(\vtheta) + \ell \sigma \hat{\vz},
	\end{align*}
	where the last equality uses the fact that $r_1(\ell) + r_2(\ell) = 1$.
	Hence $\widehat{\DatZ}_{\ell \sigma}(\vtheta)$ is well-defined.

	Now we check the mean and covariance of $\hat{\vz} \sim \widehat{\DatZ}_{\ell \sigma}(\vtheta)$.
	By linearity of expectation and linearity of variance (for independent variables), we have
	\begin{align*}
		\E[\hat{\vz}] &= \frac{1}{\ell} \left(r_1(\ell) \E[\vz_1] + r_2(\ell) \E[\vz_2]\right) = \frac{1}{\ell} \left( \vzero + \vzero \right) = \vzero, \\
		\Cov(\hat{\vz}) &= \frac{1}{\ell^2} \left(r_1^2(\ell) \Cov(\vz_1) + r_2^2(\ell) \Cov(\vz_2)\right) = \frac{1}{\ell^2}(r_1^2(\ell) + r_2^2(\ell)) \mSigma(\vtheta) = \mSigma(\vtheta),
	\end{align*}
	where the last equality is due to $r_1^2(\ell) + r_2^2(\ell) = \left(\frac{1+\sqrt{2\ell^2-1}}{2}\right)^2 + \left(\frac{1-\sqrt{2\ell^2-1}}{2}\right)^2 = \frac{1 + 2 \ell^2 - 1}{2} = \ell^2$.
\end{proof}

\begin{lemma}\label{lem:app_svag_low_skew}
	Let $\gG_{\sigma} = (f, \mSigma, \DatZ_{\sigma})$ be a NGOS with scale $\sigma$.
	Applying the SVAG operator with hyperparameter $\ell \ge 1$,
	we obtain $\widehat{\gG}_{\hat{\sigma}} = (f, \mSigma, \widehat{\DatZ}_{\hat{\sigma}})$ with scale $\hat{\sigma} = \ell \sigma$.
	Fixing $\sigma$ and changing $\ell$ produces $\widehat{\gG}_{\hat{\sigma}}$ for all scales $\hat{\sigma} \ge \sigma$.
	If $\gG_{\sigma}$ is well-behaved and satisfies the bounded moments condition,
	then $\widehat{\gG}_{\hat{\sigma}}$ is also well-behaved and satisfies the bounded moments condition.
	Furthermore, $\widehat{\gG}_{\hat{\sigma}}$ satisfies the low-skewness condition.
\end{lemma}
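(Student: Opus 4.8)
The plan is to prove the three claims in turn, noting that the range-of-scales and well-behavedness claims are essentially inherited from the hypotheses, while the low-skewness claim is where the SVAG construction does real work. For the range of scales, since $\hat{\sigma} = \ell \sigma$ with $\ell$ free to range over $[1,\infty)$, setting $\ell = \hat{\sigma}/\sigma$ realizes every $\hat{\sigma} \ge \sigma$, which is immediate. The well-behavedness claim is also immediate: by \Cref{def:svag_op} the SVAG operator leaves both $f$ and $\mSigma$ unchanged, and \Cref{as:f} depends only on $f$ and $\mSigma$, so $\widehat{\gG}_{\hat{\sigma}}$ inherits well-behavedness from $\gG_\sigma$.

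For the bounded moments condition, recall from \Cref{lem:app_svag_first_two} that $\hat{\vz} = \frac{1}{\ell}(r_1(\ell)\vz_1 + r_2(\ell)\vz_2)$ with $\vz_1,\vz_2$ i.i.d.\ from $\DatZ_\sigma(\vtheta)$. I would apply Minkowski's inequality in $L^{2m}$ to get
\[
  \E[\normtwosm{\hat{\vz}}^{2m}]^{1/(2m)} \le \tfrac{1}{\ell}\bigl(\abssm{r_1(\ell)} + \abssm{r_2(\ell)}\bigr)\,\E[\normtwosm{\vz}^{2m}]^{1/(2m)},
\]
then use the elementary identity $\abssm{r_1(\ell)} + \abssm{r_2(\ell)} = \sqrt{2\ell^2 - 1}$ (valid since $r_1(\ell)\le 0 \le r_2(\ell)$ for $\ell \ge 1$) together with $\sqrt{2\ell^2-1}/\ell \le \sqrt{2}$. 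Combined with the bounded moments bound $\E[\normtwosm{\vz}^{2m}]^{1/(2m)} \le C_{2m}(1+\normtwosm{\vtheta})$ for $\gG_\sigma$, this yields the same bound for $\widehat{\gG}_{\hat{\sigma}}$ with constant $\sqrt{2}\,C_{2m}$, independent of $\hat{\sigma}$ as required.

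The low-skewness condition is the crux. I would first compute the third-moment tensor $\E[\hat{\vz}^{\otimes 3}]$ by expanding $(r_1\vz_1 + r_2\vz_2)^{\otimes 3}$: since $\vz_1,\vz_2$ are independent and mean-zero, every term mixing $\vz_1$ and $\vz_2$ factors through the expectation of a single mean-zero coordinate and hence vanishes, leaving only the two pure terms, so
\[
  \E[\hat{\vz}^{\otimes 3}] = \frac{r_1^3(\ell) + r_2^3(\ell)}{\ell^3}\,\E[\vz^{\otimes 3}].
\]
Using $r_1 + r_2 = 1$ and $r_1^2 + r_2^2 = \ell^2$ (hence $r_1 r_2 = (1-\ell^2)/2$), I would evaluate $r_1^3 + r_2^3 = (r_1+r_2)(r_1^2 - r_1 r_2 + r_2^2) = (3\ell^2 - 1)/2$, so the prefactor is $(3\ell^2-1)/(2\ell^3) \le 3/(2\ell) = 3\sigma/(2\hat{\sigma})$. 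The essential point---and the reason the hypothesis need only assume bounded moments, not low skewness, of $\gG_\sigma$---is that $\E[\vz^{\otimes 3}]$ is bounded by a polynomial in $\normtwosm{\vtheta}$ uniformly in $\sigma$: each entry obeys $\abssm{\E[z_a z_b z_c]} \le \E[\normtwosm{\vz}^3] \le \E[\normtwosm{\vz}^4]^{3/4} \le C_4^3(1+\normtwosm{\vtheta})^3$ by the power-mean inequality and \Cref{def:bound-moment}. Setting $K_3(\vtheta) := \tfrac{3\sigma}{2}\cdot C_4^3(1+\normtwosm{\vtheta})^3$ (up to dimension-dependent constants from the tensor norm) then gives $\abssm{\E[\hat{\vz}^{\otimes 3}]} \le K_3(\vtheta)/\hat{\sigma}$, with $K_3$ of polynomial growth and independent of the scale $\hat{\sigma}$, as required by \Cref{def:low-skew}.

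The main obstacle is conceptual rather than technical: one must recognize that the cancellation of the mixed terms (forced by independence and zero mean) makes the skewness of $\hat{\vz}$ carry the decaying factor $(3\ell^2-1)/(2\ell^3) = \gO(1/\ell)$, so that amplifying the noise scale by $\ell$ simultaneously suppresses the third moment by $1/\ell$. This is precisely the mechanism that converts an NGOS with merely bounded moments into one that also satisfies the low-skewness condition, and it is what makes SVAG a valid simulation of the SDE via \Cref{thm:rmsprop_sde,thm:adam_sde}.
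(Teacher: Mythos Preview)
Your proof is correct and follows essentially the same approach as the paper's: inherit well-behavedness from the unchanged $f,\mSigma$; bound moments via Minkowski and the fact that $(\abssm{r_1}+\abssm{r_2})/\ell$ is bounded; and for low skewness, use independence and mean-zero to kill the mixed terms, leaving $(r_1^3+r_2^3)/\ell^3 = (3\ell^2-1)/(2\ell^3) = \gO(1/\ell)$ times $\E[\vz^{\otimes 3}]$, which is in turn controlled by the fourth-moment bound. Your constants are even slightly sharper (e.g., $\sqrt{2}$ versus the paper's $1+\sqrt{2}$), and your symmetric-function computation of $r_1^3+r_2^3$ is a cosmetic variant of the paper's direct expansion of $(1+r)^3+(1-r)^3$.
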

\begin{proof}
The loss function $f$ and covariance function $\mSigma$ are not changed after applying the SVAG operator, so $\widehat{\gG}_{\hat{\sigma}}$ is well-behaved.

Now we verify the bounded moments condition.
Let $\hat{\vz} = \frac{1}{\ell} \left( r_1(\ell) \vz_1 + r_2(\ell) \vz_2 \right)$, where $\vz_1, \vz_2 \sim \DatZ_{\sigma}(\vtheta)$.
Then we have
\begin{align*}
	\E[\normtwosm{\hat{\vz}}^{2m}]^{\frac{1}{2m}} &\le \frac{1}{\ell}\left(
		\abssm{r_1(\ell)} \E[\normtwosm{\vz_1}^{2m}]^{\frac{1}{2m}}
		+ \abssm{r_2(\ell)} \E[\normtwosm{\vz_2}^{2m}]^{\frac{1}{2m}}
	\right) \\
	&\le \frac{1}{\ell} (\abssm{r_1(\ell)} + \abssm{r_2(\ell)}) \cdot \E[\normtwosm{\vz_1}^{2m}]^{\frac{1}{2m}} \\
	&\le (1+\sqrt{2}) \E[\normtwosm{\vz_1}^{2m}]^{\frac{1}{2m}}.
\end{align*}
By the bounded moments condition for $\gG_{\sigma}$, there exists a constant $C_{2m}$ such that
	$\E_{\vz \sim
	\DatZ_{\sigma}(\vtheta)}[\normtwosm{\vz}^{2m}]^{\frac{1}{2m}} \le
	C_{2m}(1 + \normtwosm{\vtheta})$ for all $\vtheta \in \R^d$.
So $\E[\normtwosm{\hat{\vz}}^{2m}]^{\frac{1}{2m}} \le (1+\sqrt{2}) C_{2m}(1+\normtwosm{\vtheta})$ for all $\vtheta \in \R^d$.

We now verify the low skewness condition by showing that third moment of $\hat{\vz}$ is $\gO(1/\ell)$.
By the bounded moments condition with $m = 2$ and Jensen's inequality,
\begin{align*}
	\abs{\E[\vz_1^{\otimes 3}]} \le \E[\normtwosm{\vz_1}^{3}] \le \E[\normtwosm{\vz_1}^4]^{3/4} \le \left( C_4 (1+\normtwosm{\vtheta}) \right)^{3/4}.
\end{align*}
So $\abs{\E[\vz_1^{\otimes 3}]}$ is bounded by $\tilde{K}_3(\vtheta) := \left( C_4 (1+\normtwosm{\vtheta}) \right)^{3/4}$ of polynomial growth.

Let $r = \sqrt{2\ell^2 - 1}$. Then $r_1(\ell) = \frac{1}{2}(1+r), r_2(\ell) = \frac{1}{2}(1-r)$.
Since the third moments of two independent random vectors are additive,
\begin{align*}
	\E[\hat{\vz}^{\otimes 3}] &= \frac{1}{8\ell^3} \E\left[
		\left( (1+r) \vz_1 + (1-r) \vz_2 \right)^{\otimes 3}
	\right] \\
	&= \frac{1}{8\ell^3}\left(\E\left[
		(1+r)^3 \vz_1^{\otimes 3}\right] + \left[(1-r)^3 \vz_2^{\otimes 3}
	\right]\right) \\
	&= \frac{1}{8\ell^3}\left((1+r)^3 + (1-r)^3\right) \E\left[ \vz_1^{\otimes 3} \right] \\
	&= \frac{1}{8\ell^3} (2 + 6 r^2) \E[ \vz_1^{\otimes 3}] \\
	&= \frac{1}{8\ell^3} (12 \ell^2 - 4) \E[ \vz_1^{\otimes 3}] \le \frac{1}{\ell} \tilde{K}_3(\vtheta),
\end{align*}
where the 4th equality is due to $(1 + r)^3 + (1 - r)^3 = (1 + 3r + 3r^2 + r^3) + (1 - 3r + 3r^2 - r^3) = 2 + 6r^2$.
Therefore, the low skewness condition is verified.
\end{proof}

\section{Miscellaneous Theoretical Arguments}

\subsection{How does the noise scale change with batch size?}\label{sec:app_sigma_scaling}
We discuss how the noise scale $\sigma$ in the NGOS (\Cref{def:NGOS}) changes when the batch size changes: in particular, $\sigma\sim 1/\sqrt{B}$.
The argument follows from the linearity of covariance and is an already well-known result - we reproduce it here for clarity.
We first fix a parameter vector $\vtheta$.
Let $\vg^{(1)}, \dots, \vg^{(B)}$ be the gradients evaluated at data points from a batch of size $B$,
where $\vg^{(b)} = \nabla f(\vtheta) + \vz^{(b)}$, and every $\vz^{(b)}$ is a gradient noise vector drawn i.i.d.~from a distribution with mean $\vzero$ and covariance $\mSigma(\vtheta)$.
For sampling with replacement on a finite dataset of size $n$,
where $f_1(\vtheta), \dots, f_n(\vtheta)$ are the loss functions for the $n$ data points (and the average of these $n$ functions is $f(\vtheta)$),
this covariance matrix can be explicitly written as:
\begin{equation*}
	\mSigma(\vtheta) = \frac{1}{n}\sum_{i=1}^{n} (\nabla
f_{i}(\vtheta) - \nabla f(\vtheta))(\nabla f_{i}(\vtheta) - \nabla
f(\vtheta))^\top.
\end{equation*}
The average gradient over the batch is $\vg := \frac{1}{B} \sum_{b=1}^{B} \vg^{(b)} = \nabla f(\vtheta) + \frac{1}{B} \sum_{b=1}^{B} \vz^{(b)}$.
As $\vz^{(1)}, \dots, \vz^{(B)}$ are sampled i.i.d.,
their average
$\frac{1}{B}\sum_{b=1}^{B} \vz^{(b)}$ has mean $\vzero$ and covariance $\mSigma(\vtheta)$
by linearity of expectation and covariance.
We can set $\DatZ_{\sigma}(\vtheta)$
to be the distribution of 
the random variable $\frac{1}{\sqrt{B}}\sum_{b=1}^{B} \vz_b$, where $\sigma = \frac{1}{\sqrt{B}}$,
then $\vg$ has the same distribution as the stochastic gradient produced by the NGOS $\gG_{\sigma} = (f, \mSigma, \DatZ_{\sigma})$.

\subsection{What happens when the noise does not dominate the gradient?}\label{sec:app_grad_dominates_noise}
We discuss the linear warm-up setting described in \Cref{sec:linear_warmup}. 
Recall that when ignoring the effect of $\epsilon$, the RMSprop update can be written as
$$\vtheta_{k+1} \approx \vtheta_k - \eta \vg_k \odot (\bar{\vg}^2 + \sigma^2 \vone)^{-1/2}.$$
From the above equation, it is clear that the dynamics of $\vtheta$ depends on the relationship between the noise scale $\sigma$ and the gradient $\normsm{\bar\vg}$.
In \Cref{sec:linear_warmup}, we discuss the case where $\sigma \gg \normsm{\bar \vg}$, which is the regime where the SDE approximation can exist.

Here, we argue that when $\sigma \ll \normsm{\bar{\vg}}$, no SDE approximation can exist for the discrete trajectory.
In this case, the RMSprop update would instead be $\vtheta_{k+1} \approx \vtheta_k - \eta \mU^{-1} \vg_k$, where $\mU = \diag(\sqrt{\bar{\vg}^2})$.
Combining this with $\vg_k \sim \Normal(\bar{\vg}, \sigma \mI)$ yields that
$\vtheta_{k+1} - \vtheta_k \sim \Normal( \eta \mU^{-1} \bar{\vg}, \eta^2 \sigma^2 \mU^{-2})$ approximately.
We can again take a telescoping sum to obtain the marginal distribution of $\vtheta_k$:
$
	\vtheta_k \sim \Normal\left(
		k \eta \mU^{-1} \bar{\vg}, k \eta^2 \sigma^2 \mU^{-2}
	\right)
$
approximately.

However, it is impossible to make the above distribution fixed even as $\sigma$ changes, so no SDE approximation exists.
In particular, we need to make both $k \eta$ and $k \eta^2 \sigma^2$ fixed, so $\eta \sigma$ must be a constant. 
If $\sigma \ll\normsm{\bar{\vg}}$, then $\frac{1}{\eta} \ll \normsm{\bar{\vg}}$ too. 
This requirement on $\eta$ implies that no SDE approximation exists when
$\sigma \ll \normsm{\bar{\vg}}$, and hence motivates us to study the case of
$\sigma \gg \normsm{\bar{\vg}}$.
\section{Experimental Verification of Assumptions}
In this section, we take measurements and perform experiments to verify that the various assumptions made in our theory do not harm the applicability of our findings to realistic settings.

\subsection{Noise Dominates the Gradient}\label{sec:app_noise_dominates}
Our analysis in \Cref{sec:linear_warmup} suggests that an SDE approximation cannot exist when the the gradient $\bar{\vg}$ dominates the noise scale $\sigma$.
Note that \Cref{sec:linear_warmup} performs a rough analysis under the assumption of a linear loss function (i.e., fixed gradient throughout training), which is far from practice.
In the more general setting, we require that for every step $k$, $\E\|\vz_k\|^2$ (i.e., the gradient variance) dominates $\|\E \vg_k\|$ (i.e., the norm of the average gradient), where the expectations are taken over sampling seeds, in order for the SDE approximation to exist.
\Cref{fig:grad_vs_noise_rmsprop} shows that our assumption holds for small batches and for large batches near the end of training.

\begin{figure}
     \centering
     \begin{minipage}[t]{0.45\textwidth}
         \centering\includegraphics[width=\linewidth]{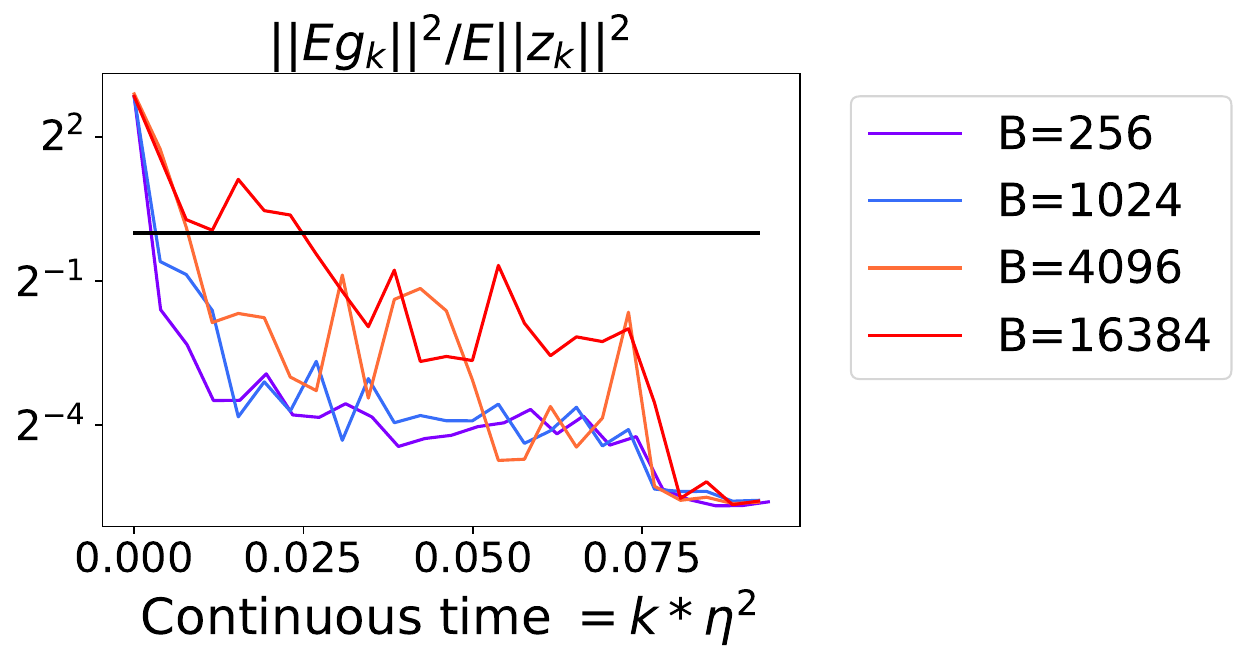}\\
        (a) ResNet-50 with $\epsilon$ order $10^{-30}$
     \end{minipage}
     \begin{minipage}[t]{0.45\textwidth}
         \centering\includegraphics[width=\linewidth]{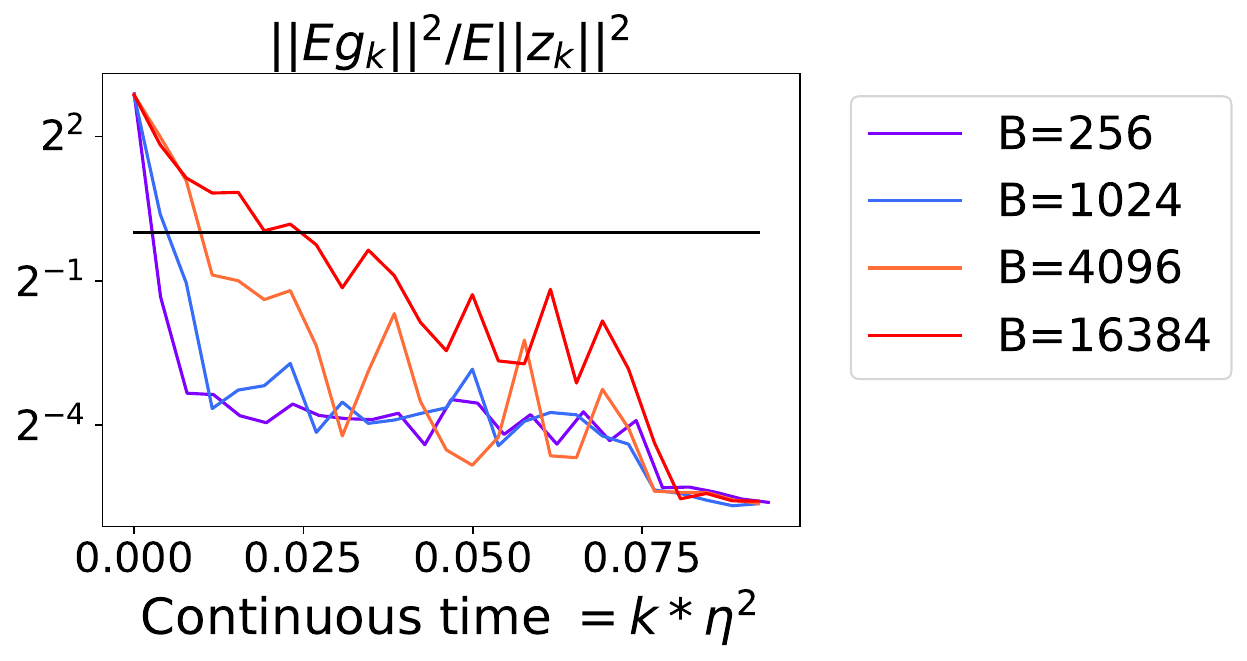} \\
         (b) ResNet-50 with $\epsilon$ order $10^{-8}$
     \end{minipage}
     \vfill
     \begin{minipage}[t]{0.45\textwidth}
         \centering\includegraphics[width=\linewidth]{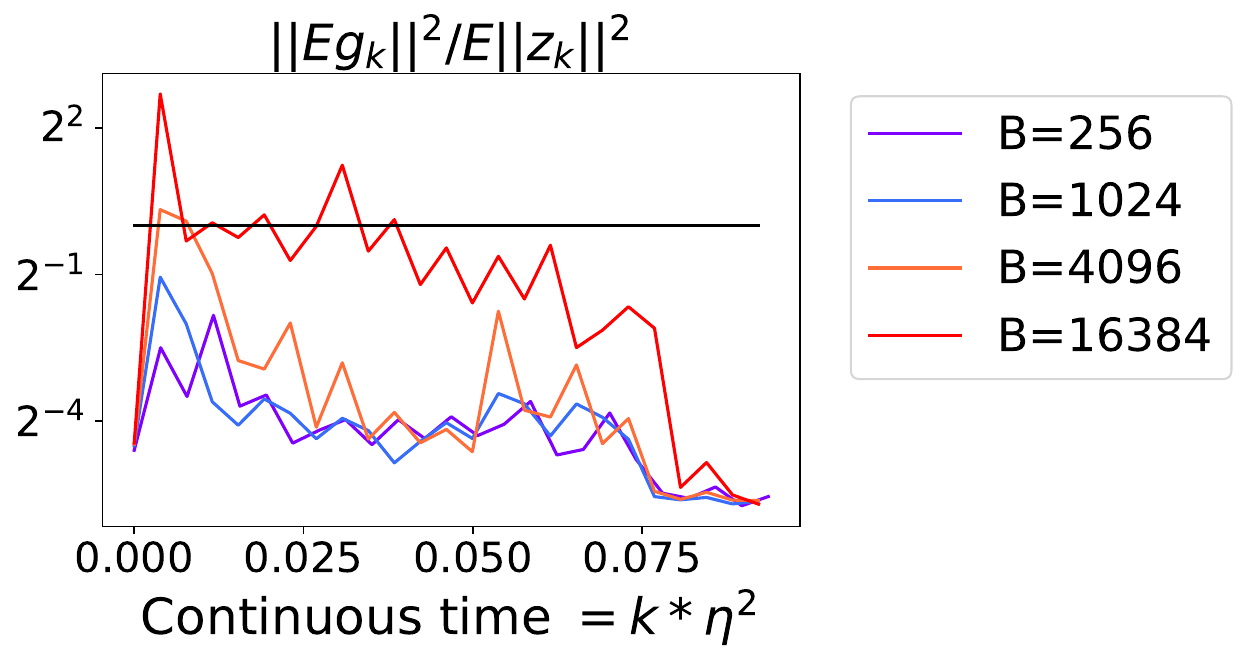} \\
        (c) VGG-16 with $\epsilon$ order $10^{-30}$
     \end{minipage}
     \begin{minipage}[t]{0.45\textwidth}
         \centering\includegraphics[width=\linewidth]{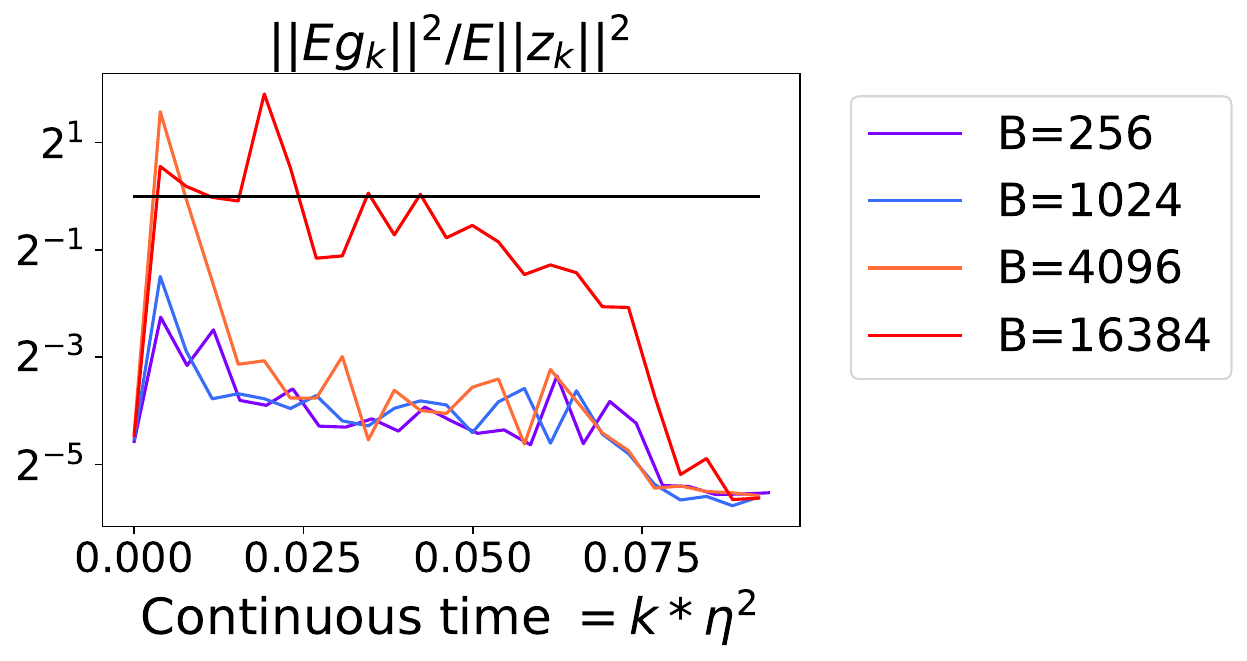} \\
         (d) VGG-16 with $\epsilon$ order $10^{-8}$
     \end{minipage}
  \caption{We compare the norm of the average gradient with the noise scale for different batch sizes during training of ResNet-50 and VGG-16 model with RMSprop on the CIFAR-10 dataset. Here, $(\eta, \beta) = (10^{-3}, 0.999)$ for batch size $256$ and scaled with our proposed square root scaling rule (\Cref{def:rmsprop_scaling}) for the other batch sizes. We show the results for $\epsilon$ at both small (of order $10^{-30}$) and large scale (of order $10^{-8}$). We observe that for small batches, the noise in the gradient dominates the signal in the gradient, supporting our hypothesis. For larger batches, the hypothesis seems to hold true towards the end of training. }
  \label{fig:grad_vs_noise_rmsprop}
\end{figure}

\begin{figure}
     \centering
     \begin{minipage}[t]{0.45\textwidth}
         \centering\includegraphics[width=\linewidth]{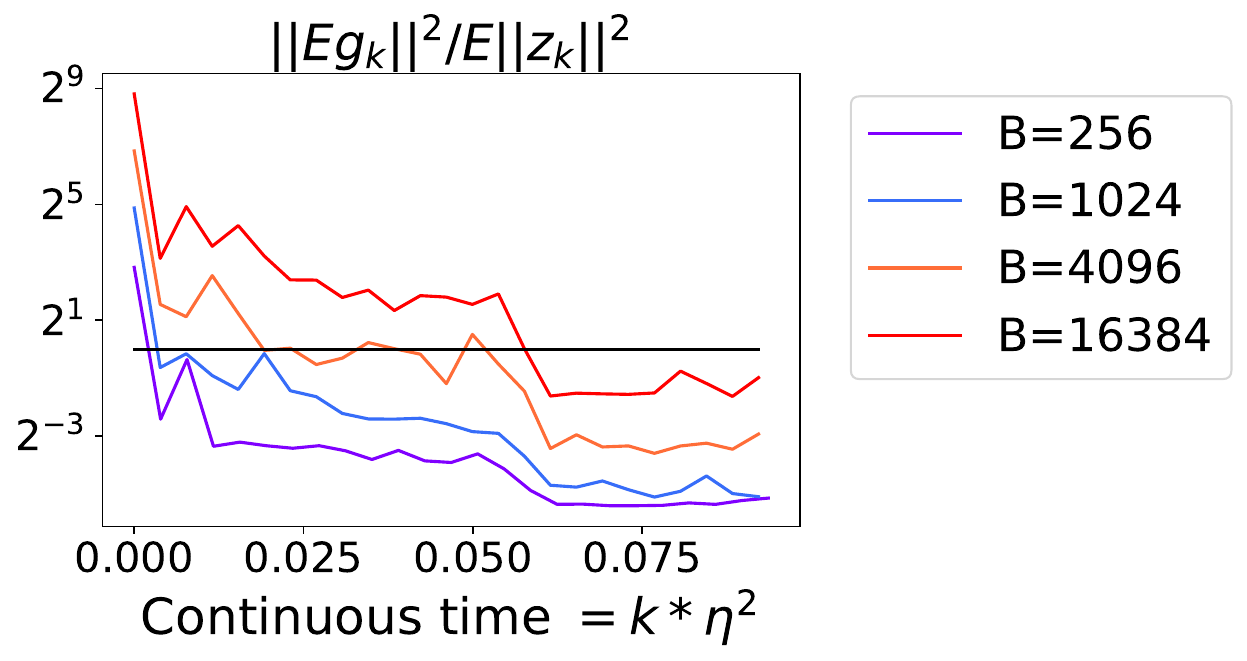}\\
        (a) ResNet-50 with $\epsilon$ order $10^{-30}$ 
     \end{minipage}
     \begin{minipage}[t]{0.45\textwidth}
         \centering\includegraphics[width=\linewidth]{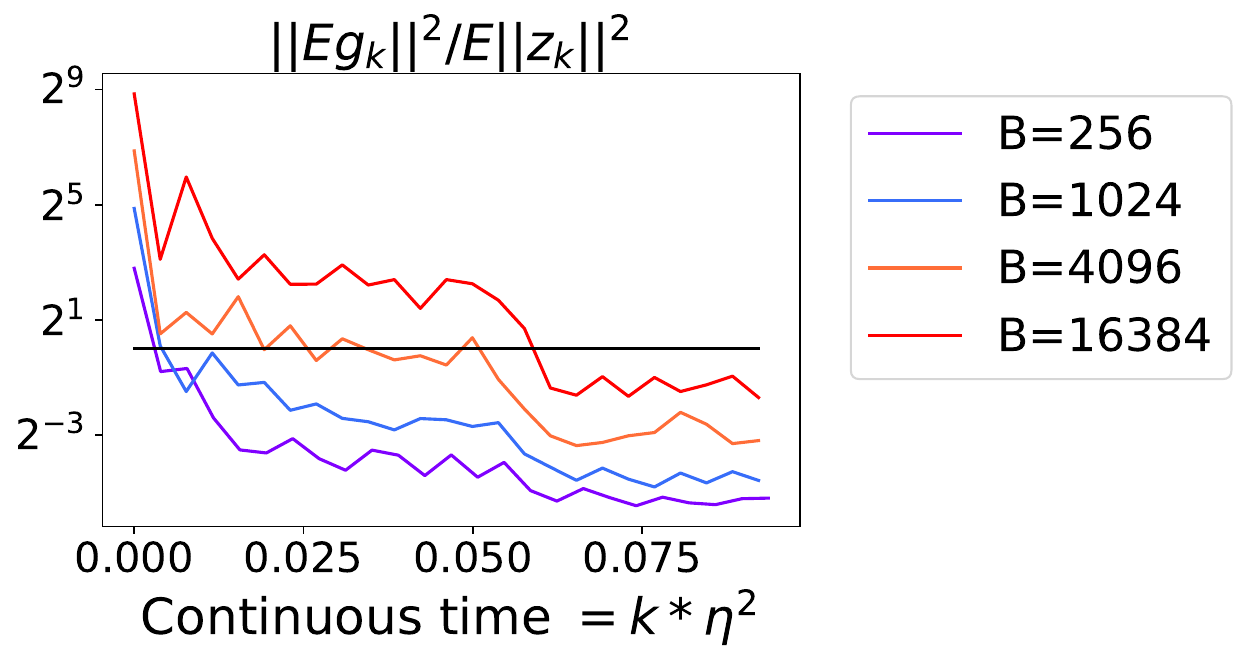}\\
        (b) ResNet-50 with $\epsilon$ order $10^{-8}$
     \end{minipage}
     \vfill\vfill
     \begin{minipage}[t]{0.45\textwidth}
         \centering\includegraphics[width=\linewidth]{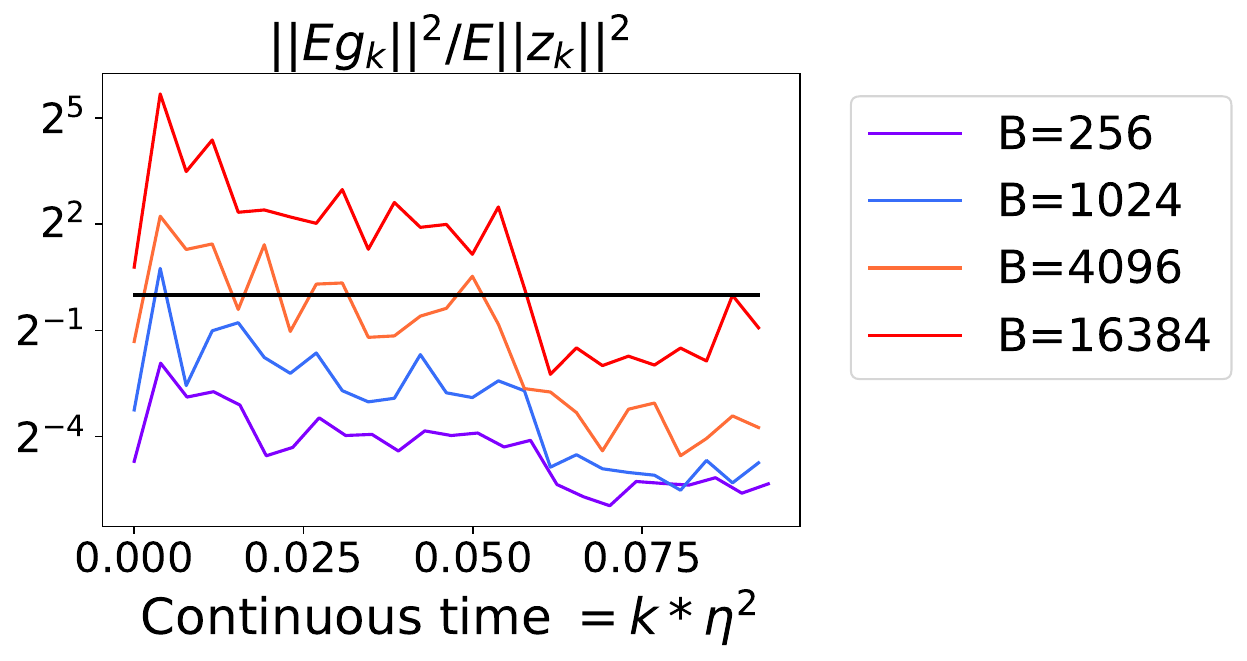}\\
        (c) VGG-16 with $\epsilon$ order $10^{-30}$
     \end{minipage}
     \begin{minipage}[t]{0.45\textwidth}
         \centering\includegraphics[width=\linewidth]{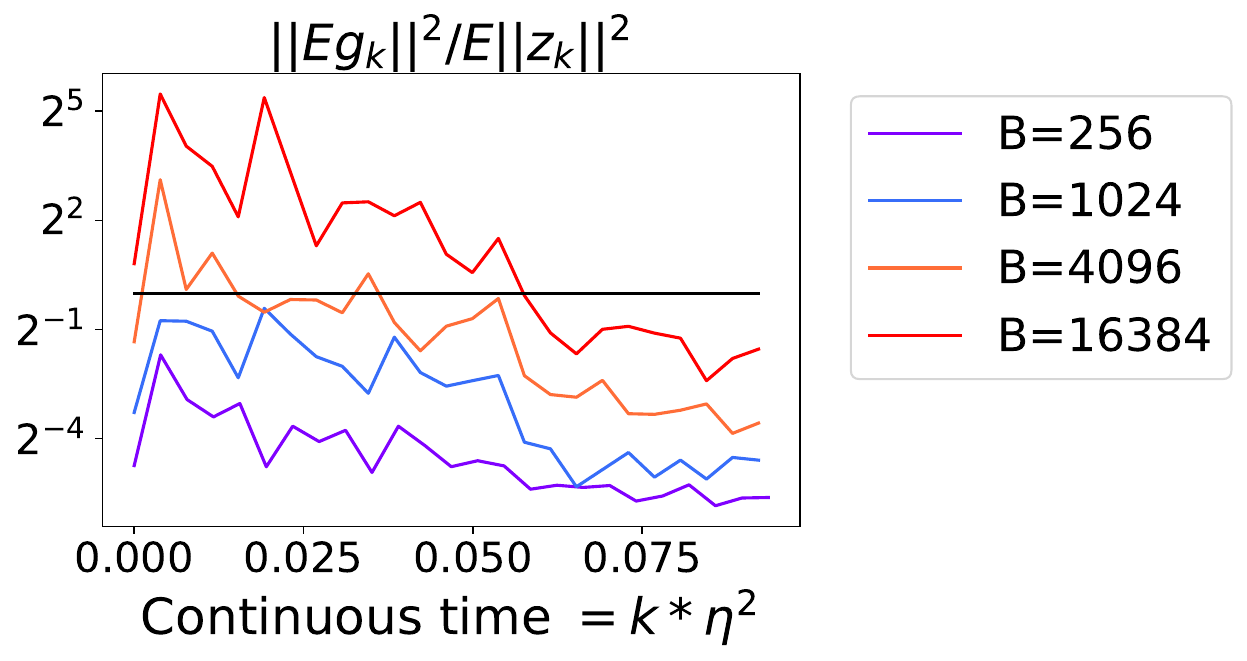}\\
        (d) VGG-16 with $\epsilon$ order $10^{-8}$
     \end{minipage}
  \caption{We compare the norm of the average gradient with the noise scale for different batch sizes during training of ResNet-50 model with Adam on the CIFAR-10 dataset. Here, $(\eta, \beta_1, \beta_2) = (10^{-3}, 0.999, 0.999)$ for batch size $256$ and scaled with our proposed square root scaling rule (\Cref{def:rmsprop_scaling}) for the other batch sizes. We show the results for $\epsilon$ at both small (of order $10^{-30}$) and large scale (of order $10^{-8}$). We observe that for small batches, the noise in the gradient dominates the signal in the gradient, supporting our hypothesis. For larger batches, the hypothesis seems to hold true towards the end of training. }
   \label{fig:grad_vs_noise_adam}
\end{figure}

\subsection{Using \texorpdfstring{$\vv_k$}{vvk} instead of \texorpdfstring{$\vv_{k+1}$}{vvk1} in the update rule}\label{sec:app_vk}
In \Cref{def:rmsprop,def:adam}, we slightly modify the standard implementation of RMSprop and Adam by using $\vv_k$ in the update rule instead of $\vv_{k+1}$.
Here, we verify for Adam that this modification of the optimization algorithms does not significantly harm performance. \cref{fig:prevstep_adam_largeeps} shows the behavior of ResNet-50 and VGG-16 trained with Adam on CIFAR-10 with the above modification of the optimization algorithm. We observe a small drop ($\approx 1\%$) in test accuracies. However, the behavior of the trajectories across different batch sizes for the proposed scaling rule stays the same, i.e. we observe a maximum of $3\%$ test accuracy gap between training batch size $256$ and $8192$. Moreover, the behavior of the test functions match across the trajectories of different batch sizes.

  \begin{figure}[!htbp]
  \centering
     \begin{minipage}[t]{0.9\textwidth}
         \centering\includegraphics[width=\linewidth]{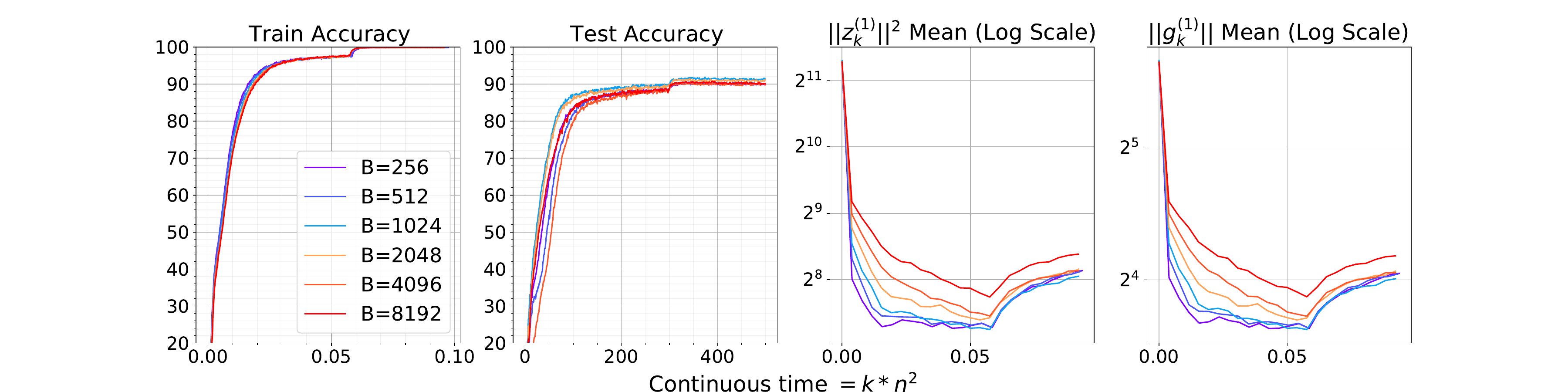}\\(a) ResNet-50
     \end{minipage}
     \begin{minipage}[t]{0.9\textwidth}
         \centering\includegraphics[width=\linewidth]{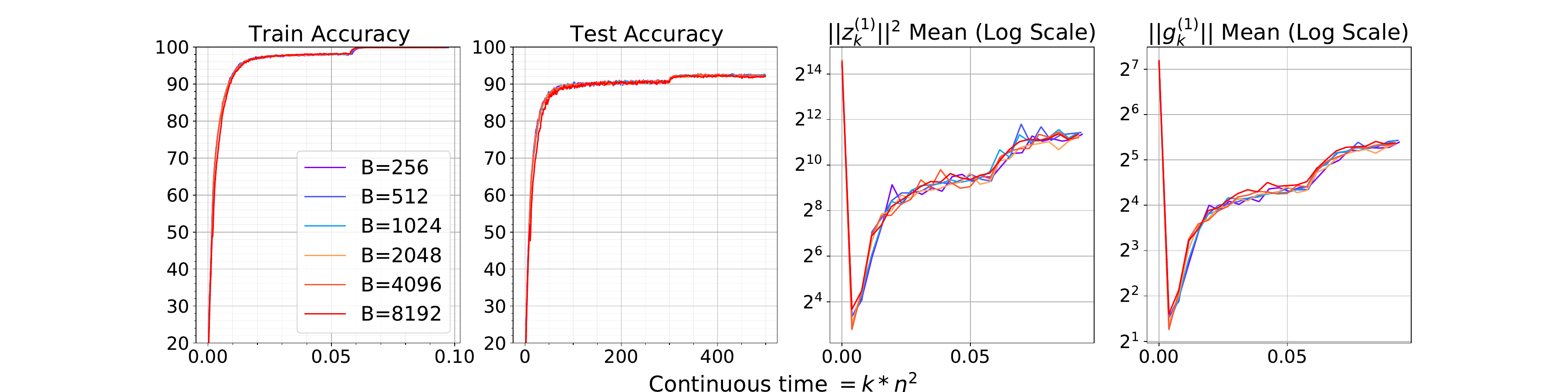}\\(b) VGG-16
     \end{minipage}
	\caption{We repeat the Square Root Scaling experiments with Adam for ResNet-50 and VGG-16 on CIFAR-10 dataset, where we slightly modify the standard implementation of Adam by using $\vv_k$ in the update rule instead of $\vv_{k+1}$. For batch size $256$, $(\eta, \epsilon, \beta_1, \beta_2) = (10^{-3}, 10^{-8}, 0.999, 0.999)$ and the hyperparameters are scaled according to the square root scaling rule for other batch sizes. We observe a small drop ($\approx 1\%$) in test accuracies. However, the behavior of the trajectories across different batch sizes stays the same.}
        \label{fig:prevstep_adam_largeeps}
  \end{figure}

\section{SVAG Experiments}\label{sec:app_exp_svag}

Recall that the SVAG algorithm (\Cref{def:svag_alg}) is a computationally efficient simulation of the SDEs corresponding to RMSprop and Adam.
The SVAG algorithm requires a hyperparameter $\ell$, and the resulting parameters after $k\ell^2$ steps should match the parameters on the corresponding discrete optimization trajectory after $k$ steps.
In particular, \Cref{thm:svag} shows that the SVAG algorithm is an order-1 weak approximation (\Cref{def:weak_approx}) of the SDE, and the approximation error scales as $1/\ell$.
One may be initially concerned that realistic deep learning settings require $1/\ell$ to be very small, which would make $\ell$ large and hence computationally intractable.
\cite{li2021validity} showed that the SVAG trajectories appear to converge to the SDE trajectory for computationally tractable small values of $\ell$.
We similarly find that our proposed SVAG-like algorithms in \Cref{def:svag_alg} appear to converge for small $\ell$ in various settings.

\paragraph{CIFAR-10.} \Cref{fig:rmspropsvag,fig:rmspropsvag1e8} show that SVAG converges and closely tracks RMSprop at smaller $\epsilon$ (=$10^{-30}$), and at larger $\epsilon$ (=$10^{-8}$) respectively. \Cref{fig:adamsvag,fig:Adamsvag1e8} show that SVAG converges and closely tracks Adam at smaller $\epsilon$ (=$10^{-30}$), and at larger $\epsilon$ (=$10^{-8}$) respectively. All experiments follow the setting in \Cref{sec:app_cifar10_config} for batch size 256.

\paragraph{Wikipedia + Books (Academic BERT).} \Cref{fig:sqrtscaling_roberta_svag} shows that SVAG converges and closely tracks Adam. We use the experimental setting for batch size $1024$ in \Cref{sec:app_books_wiki_config} except the hyperparameters $\beta_1$ and $\beta_2$ are fixed at $0.9$ and $0.98$ respectively.

\paragraph{WikiText-103 (GPT).} \Cref{fig:sqrtscaling_gpt} shows that SVAG converges and closely tracks Adam. We use the experimental setting for batch size $1024$ in \Cref{sec:app_books_wiki_config} except the hyperparameters $\beta_1$ and $\beta_2$ are fixed at $0.9$ and $0.98$ respectively. Additionally, for computational reasons, we pretrain on sequences of length $64$.

\begin{figure*}
 \begin{minipage}{1\textwidth}
      \centering
          \begin{minipage}{\linewidth}
          \begin{figure}[H]
              \centerline{\includegraphics[width=\linewidth]{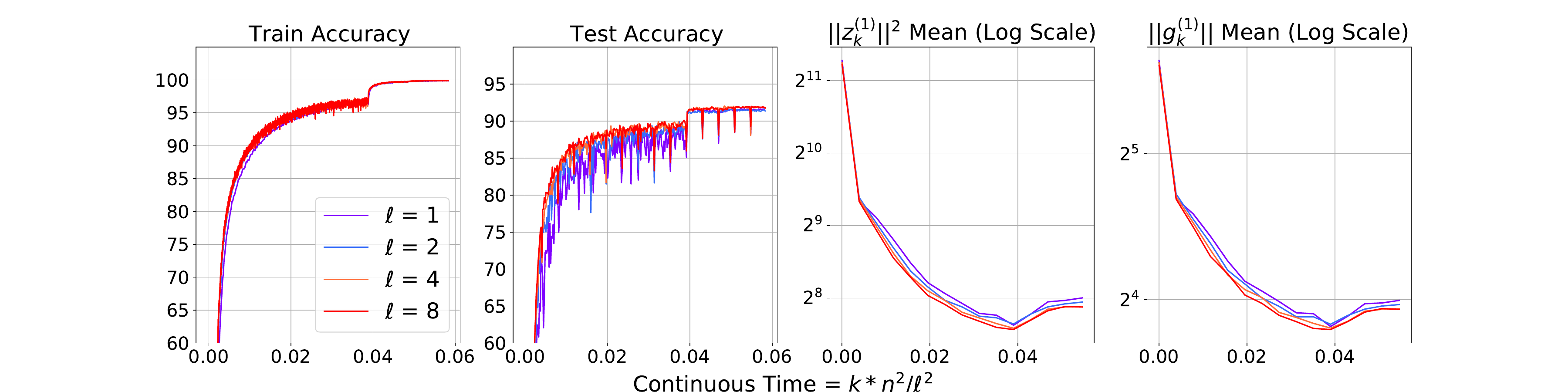}}
\caption{SVAG experiments on ResNet-50 trained on CIFAR-10 with RMSprop. We use batch size $256$ and the hyperparameters $\eta = 10^{-3}, \beta = 0.999, \epsilon = 10^{-30}$ and a weight decay of $10^{-4}$. Since SVAG takes $\ell$ smaller steps to simulate
the continuous dynamics in $\eta$ time, we plot accuracy against continuous time defined as $k \times \eta^2 / \ell^2$.}
            \label{fig:rmspropsvag}

          \end{figure}
      \end{minipage}
\end{minipage}
\end{figure*}
\begin{figure*}
 \begin{minipage}{1\textwidth}
      \centering
          \begin{minipage}{\linewidth}
          \begin{figure}[H]
              \centerline{\includegraphics[width=\linewidth]{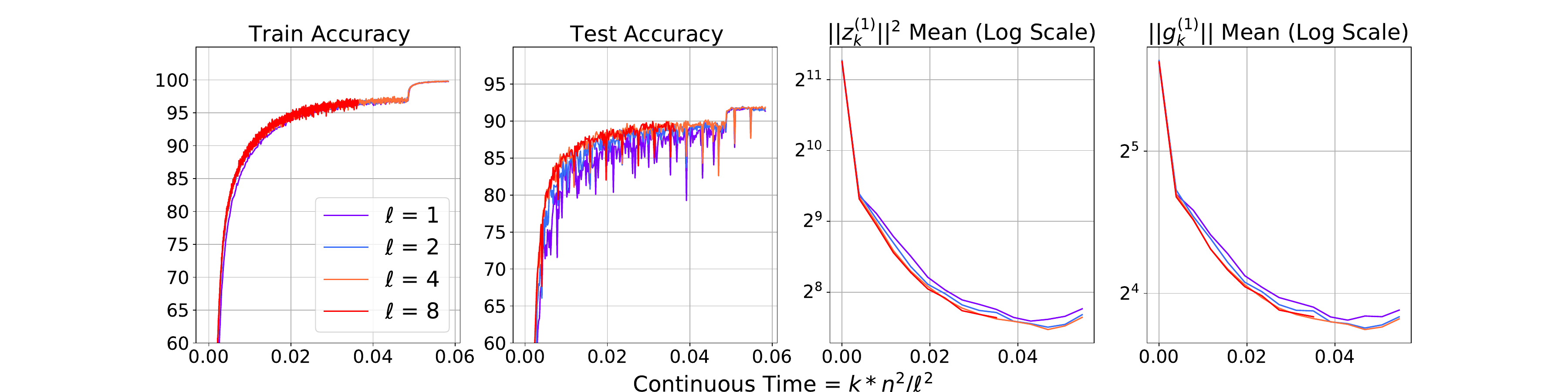}}
\caption{SVAG experiments on ResNet-50 trained on CIFAR-10 with RMSprop.  We use batch size $256$ and  the hyperparameters $\eta = 10^{-3}, \beta = 0.999, \epsilon = 10^{-8}$, and a weight decay of $10^{-4}$. Since SVAG takes $\ell$ smaller steps to simulate
the continuous dynamics in $\eta$ time, we plot accuracy against continuous time defined as $k \times \eta^2 / \ell^2$.}
            \label{fig:rmspropsvag1e8}
          \end{figure}
      \end{minipage}
\end{minipage}
\end{figure*}

\begin{figure*}
 \begin{minipage}{1\textwidth}
      \centering
          \begin{minipage}{\linewidth}
          \begin{figure}[H]
              \centerline{\includegraphics[width=\linewidth]{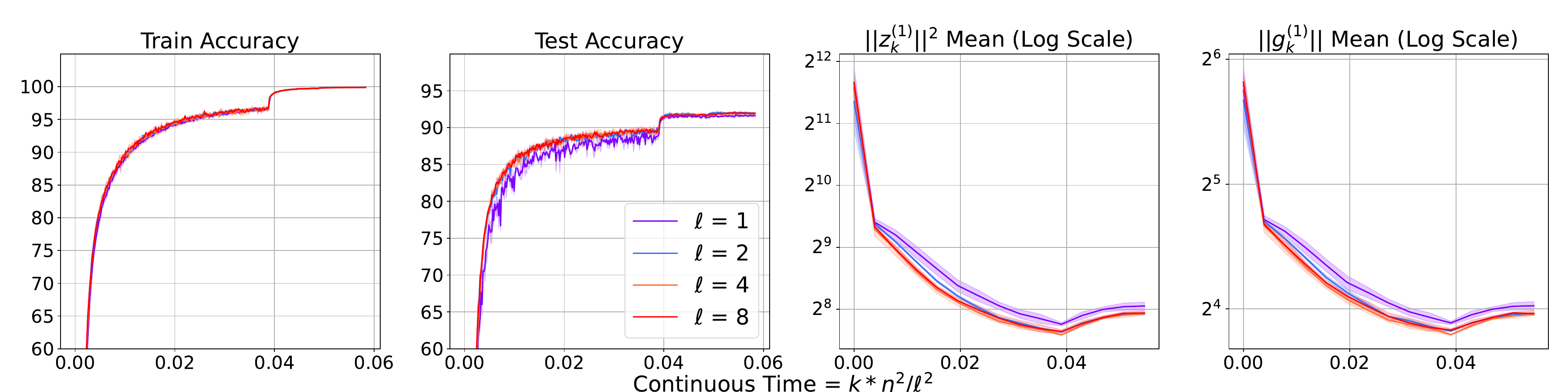}}
\caption{SVAG experiments on ResNet-50 trained on CIFAR-10 with Adam.  We use batch size $256$ and the hyperparameters $\eta = 10^{-3}, \beta_1 = 0.9, \beta_2 = 0.999, \epsilon = 10^{-30}$, and a weight decay of $10^{-4}$. Since SVAG takes $\ell$ smaller steps to simulate
the continuous dynamics in $\eta$ time, we plot accuracy against continuous time defined as $k \times \eta^2 / \ell^2$.}
            \label{fig:Adamsvag1e8}
          \end{figure}
      \end{minipage}
\end{minipage}
\end{figure*}

\begin{figure*}
\begin{center}
\centerline{\includegraphics[width=\linewidth]{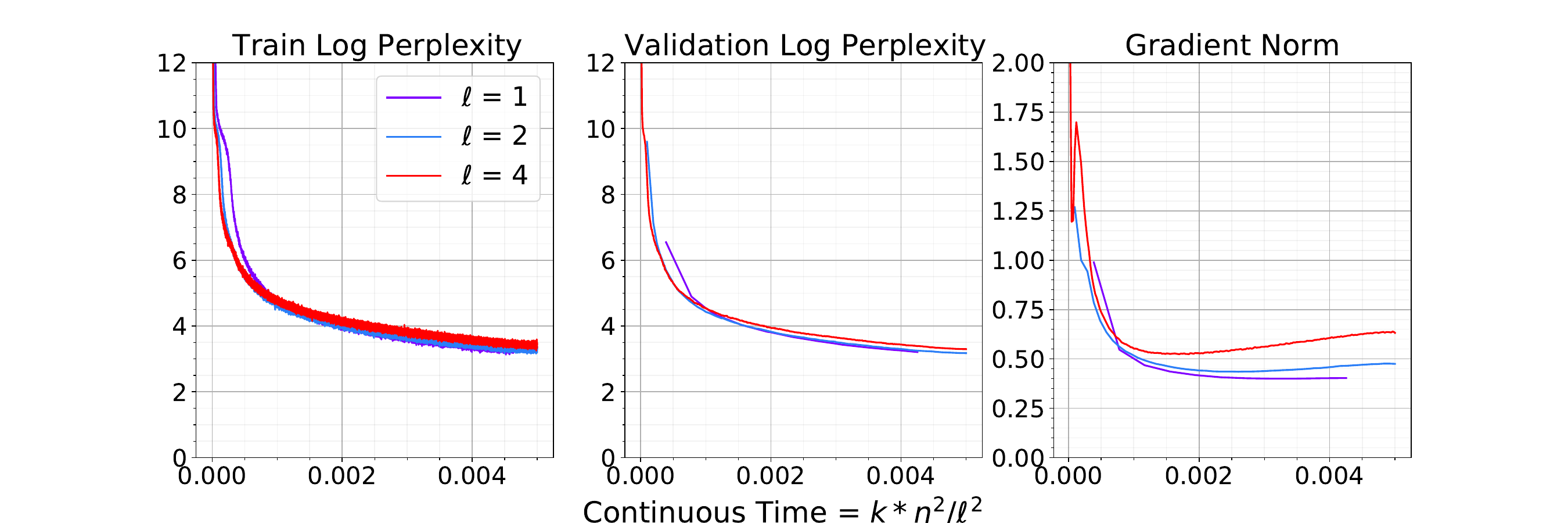}}
\caption{SVAG experiments on RoBERTa pretrained on Bookcorpus + Wikipedia dataset with Adam. We use batch size $1024$ and the hyperparameters $\eta = 10^{-3}, \beta_1 = 0.9, \beta_2 = 0.98, \epsilon = 2\times10^{-6}$. Since SVAG takes $\ell$ smaller steps to simulate
the continuous dynamics in $\eta$ time, we plot accuracy against continuous time defined as $k \times \eta^2 / \ell^2$.}
\label{fig:sqrtscaling_roberta_svag}
\end{center}
\end{figure*}

\begin{figure*}
\begin{center}
\centerline{\includegraphics[width=\linewidth]{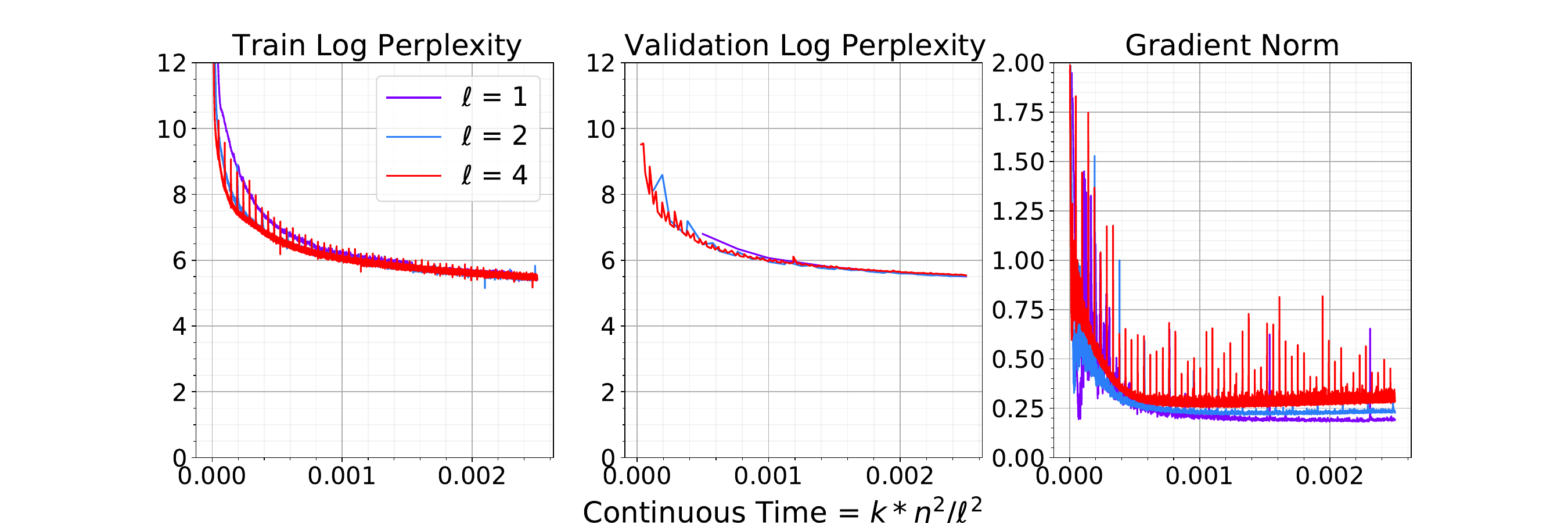}}
\caption{SVAG experiments on GPT pretrained on WikiText-103 with Adam. We use batch size $1024$ and the hyperparameters $\eta = 10^{-3}, \beta_1 = 0.9, \beta_2 = 0.98, \epsilon = 2\times10^{-8}$. Since SVAG takes $\ell$ smaller steps to simulate
the continuous dynamics in $\eta$ time, we plot accuracy against continuous time defined as $k \times \eta^2 / \ell^2$. For computational constraints, we train on sequences of length $64$.}
\label{fig:sqrtscaling_gpt_svag}
\end{center}
\end{figure*}

\section{Square Root Scaling Experiments}\label{sec:app_sqsr_scaling_exps}
We experimentally evaluate the scaling rules proposed in \Cref{def:adam_scaling,def:rmsprop_scaling} by training models with different batch sizes and modifying the optimization hyperparameters accordingly.
The number of gradient steps were modified to keep the total amount of continuous time same across all the batches. The warmup schedule and the learning rate decay schedule were kept the same. See \Cref{sec:app_exp_config} for details about the baseline runs for each dataset.

We perform two types of ablation studies. In \Cref{sec:app_sqsr_ablation}, we compare our proposed scaling rule to variants that only scale some subset of the optimization hyperparameters. We find that our proposed scaling rule is the best at preserving the validation accuracy and the other test functions across different batch sizes.
In \Cref{sec:app_lsr_ablation}, we compare the proposed square root scaling rule to a linear one and find that they perform comparably on CIFAR-10, though the square root scaling significantly outperforms linear scaling on the ImageNet dataset.
We hypothesize that for simpler datasets, like CIFAR-10, the differences between the two scaling rules is not reflected clearly in the validation accuracies because the task is too easy to learn.

\paragraph{CIFAR-10.}
    \Cref{fig:accrmsprop_vgg_smalleps,fig:accrmsprop_vgg_largeeps} show the performance of VGG-16 when trained with different batch sizes with RMSprop at smaller $\epsilon$ (=$10^{-30}$), and at larger $\epsilon$ (=$10^{-8}$) respectively. \Cref{fig:accAdam_vgg_smalleps,fig:accAdam_vgg_largeeps} show the performance of VGG-16 when trained with different batch sizes with Adam at smaller $\epsilon$ (=$10^{-30}$), and at larger $\epsilon$ (=$10^{-8}$) respectively. For the corresponding experiments on ResNet-50, please refer to  \Cref{fig:accrmsprop_resnet_smalleps,fig:accrmsprop_resnet_largeeps,fig:accAdam_resnet_smalleps,fig:accAdam_resnet_largeeps}. For the details on the experimental setting, please refer to \Cref{sec:app_cifar10_config}. 

We observe that in all settings the test accuracies vary by at most $3\%$ across batch sizes when using the proposed square root scaling rule. 
Moreover, the test functions stay close across multiple batch sizes, signifying that the trajectories stay close across the batch sizes using the Square root scaling rule.


\paragraph{ImageNet.}
\Cref{fig:sqrtscaling_imagenet_smalleps,fig:sqrtscaling_imagenet_largeeps} show the performance of ResNet-50 when trained with different batch sizes with RMSprop at smaller $\epsilon$ (=$10^{-30}$), and at larger $\epsilon$ (=$10^{-8}$) respectively. For the details on the experimental setting, please refer to \Cref{sec:app_imagenet_config}.

We observe that on the validation set, the loss and accuracy behavior for the model is very similar, when trained with different batch sizes. Moreover, the difference between the validation accuracies is atmost $3\%$ between the batch sizes $256$, $1024$, $4096$, and $16384$ for smaller $\epsilon$. The difference between the validation accuracies is atmost $1.5\%$ between the batch sizes $256$, $1024$, $4096$, $16384$, and $32768$ for larger $\epsilon$.

\paragraph{Wikipedia + Books (Academic BERT).} \Cref{fig:sqrtscaling_roberta} shows the performance of a $\text{RoBERTa}$ model when pretrained with different batch sizes with ADAM. The scaling rule is applied to modify the peak values of the optimization hyperparameters. We also evaluate the pretrained models on several downstream tasks, and show the results in \cref{tab:Downstream_performance_academicbert}. For the details on the setting for both pretraining and downstream experiments, please refer to \Cref{sec:app_books_wiki_config}. 

We observe that the log perplexity on the training and validation datasets matches across different batch sizes throughout pretraining. Moreover, we also observe that the gradient norms across different batch sizes remain close throughout the pretraining. Furthermore, the models pretrained across different batch sizes can achieve very similar performance in the downstream tasks.

\paragraph{WikiText-103 (Academic GPT).} \Cref{fig:sqrtscaling_gpt} shows the log perplexity behavior 
of $\text{GPT}$ on training and validation datasets, across different batch sizes of pretraining . We observe that the log perplexity matches across different batch sizes. Moreover, we also observe that the gradient norms across different batch sizes remain close throughout the training.

\begin{figure}[!htbp]
\begin{center}
\centerline{\includegraphics[width=\linewidth]{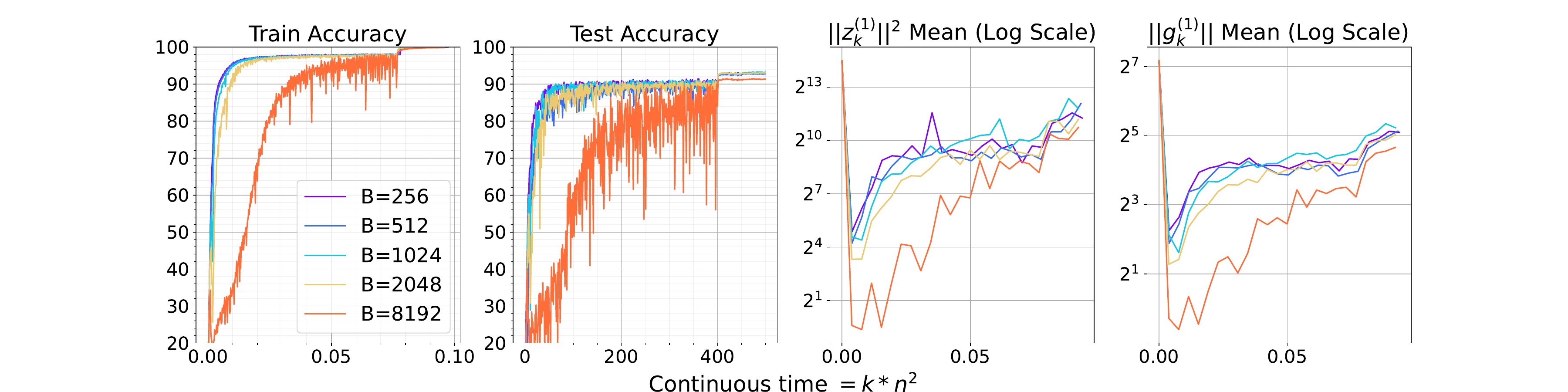}}
\caption{VGG-16 trained on CIFAR-10 using RMSprop are close for different batch sizes when the optimization hyperparameters are varied according to the proposed scaling rule for RMSprop (\Cref{def:adam_scaling}). We use a baseline setting of $\eta=10^{-3}$, $\epsilon=10^{-8}$, and $\beta =  0.999$ for batch size $256$.  We use a weight decay factor of $10^{-4}$. We observe a gap of at most $3\%$ among the different batch sizes under consideration.}
\label{fig:accrmsprop_vgg_largeeps}
\end{center}
\end{figure}

\begin{figure}[!htbp]
\begin{center}
\centerline{\includegraphics[width=\linewidth]{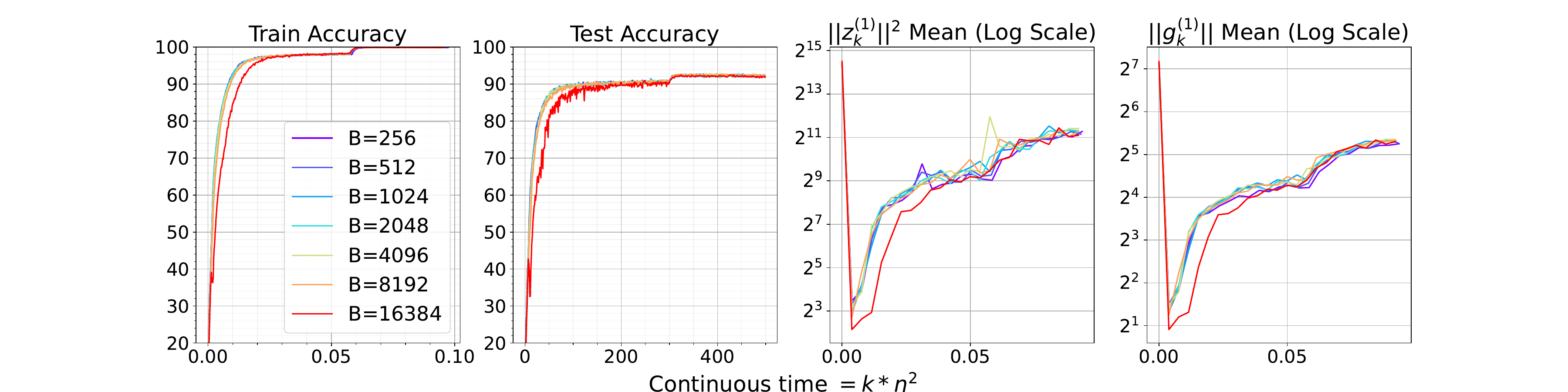}}
\caption{VGG-16 trained on CIFAR-10 using Adam are close for different batch sizes when the optimization hyperparameters are varied according to the proposed scaling rule for Adam (\Cref{def:adam_scaling}). We use a baseline setting of $\eta=10^{-3}$, $\epsilon=10^{-8}$, and $(\beta_1, \beta_2) = (0.999, 0.999)$ for batch size $256$.  We use a weight decay factor of $10^{-4}$. We observe a gap of at most $3\%$ among the different batch sizes under consideration.}
\label{fig:accAdam_vgg_largeeps}
\end{center}
\end{figure}

\begin{figure}[!htbp]
\begin{center}
\centerline{\includegraphics[width=\linewidth]{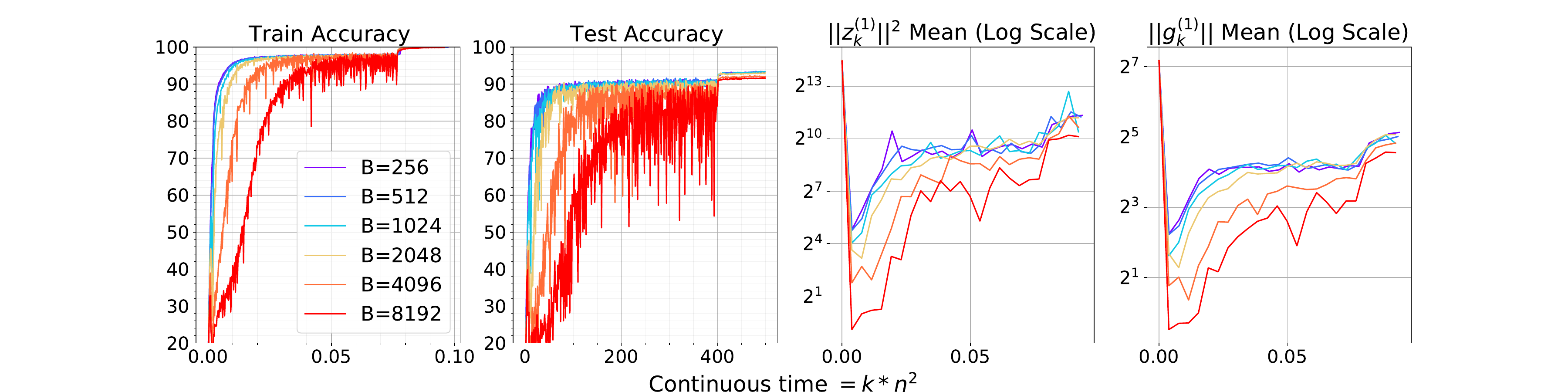}}
\caption{VGG-16 trained on CIFAR-10 using RMSprop are close for different batch sizes when the optimization hyperparameters are varied according to the proposed scaling rule for RMSprop (\Cref{def:adam_scaling}). We use a baseline setting of $\eta=10^{-3}$, $\epsilon=10^{-30}$, and $\beta = 0.999$ for batch size $256$.  We use a weight decay factor of $10^{-4}$. We observe a gap of at most $3\%$ among the different batch sizes under consideration.}
\label{fig:accrmsprop_vgg_smalleps}
\end{center}
\end{figure}

\begin{figure}[!htbp]
\begin{center}
\centerline{\includegraphics[width=\linewidth]{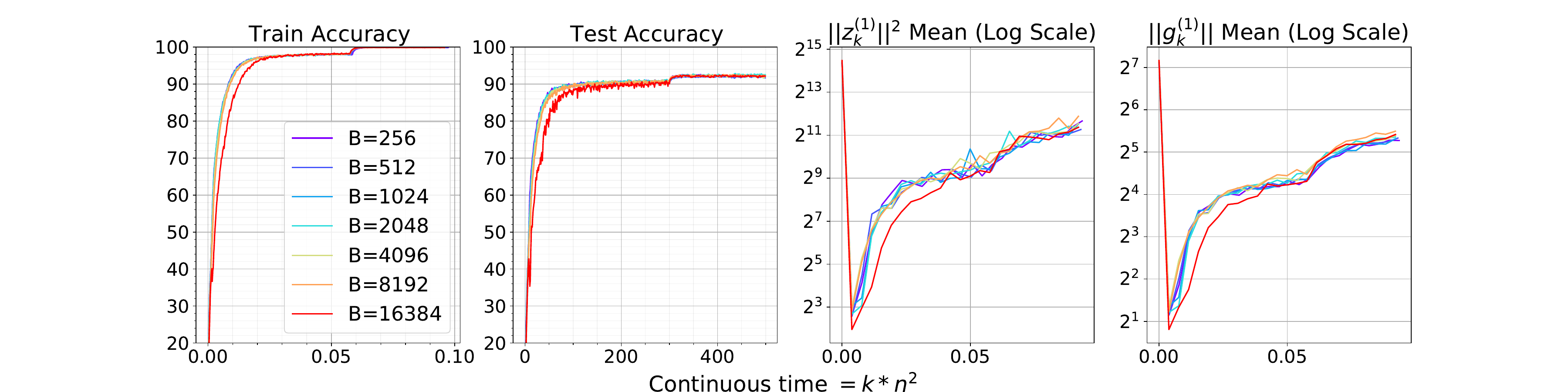}}
\caption{VGG-16 trained on CIFAR-10 using Adam are close for different batch sizes when the optimization hyperparameters are varied according to the proposed scaling rule for Adam (\Cref{def:adam_scaling}). We use a baseline setting of $\eta=10^{-3}$, $\epsilon=10^{-30}$, and $(\beta_1, \beta_2) = (0.999, 0.999)$ for batch size $256$.  We use a weight decay factor of $10^{-4}$. We observe a gap of at most $3\%$ among the different batch sizes under consideration.}
\label{fig:accAdam_vgg_smalleps}
\end{center}
\end{figure}

\begin{figure}[!htbp]
\begin{center}
\centerline{\includegraphics[width=\linewidth]{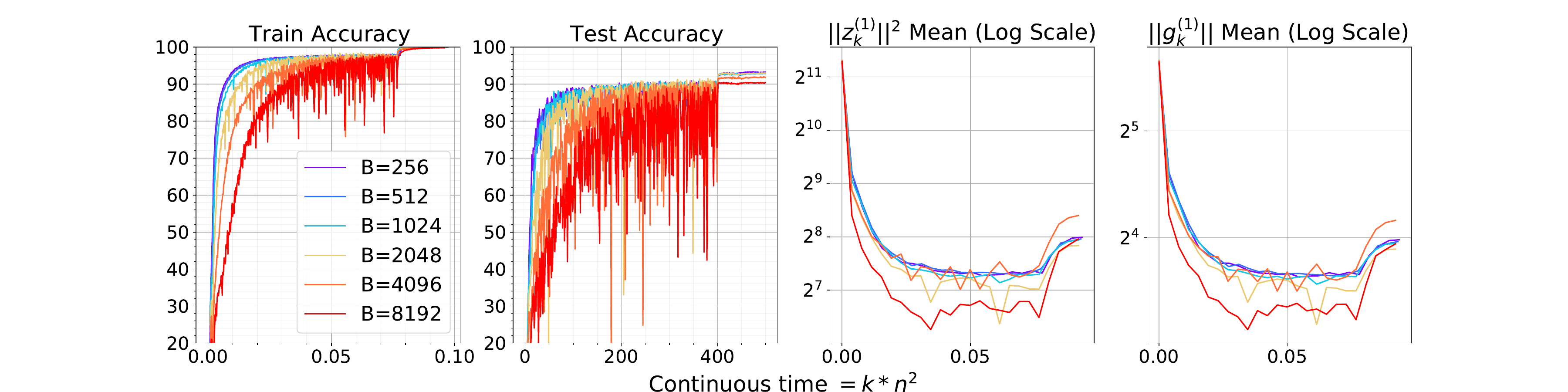}}
\caption{ResNet-50 trained on CIFAR-10 using RMSprop are close for different batch sizes when the optimization hyperparameters are varied according to the proposed scaling rule for RMSprop (\Cref{def:adam_scaling}). We use a baseline setting of $\eta=10^{-3}$, $\epsilon=10^{-8}$, and $\beta = 0.999$ for batch size $256$. $\epsilon=10^{-30}\approx 0$ for all experiments. We use a weight decay factor of $10^{-4}$.  We observe a gap of at most $3\%$ among the different batch sizes under consideration.}
\label{fig:accrmsprop_resnet_smalleps}
\end{center}
\end{figure}

\begin{figure}[!htbp]
\begin{center}
\centerline{\includegraphics[width=\linewidth]{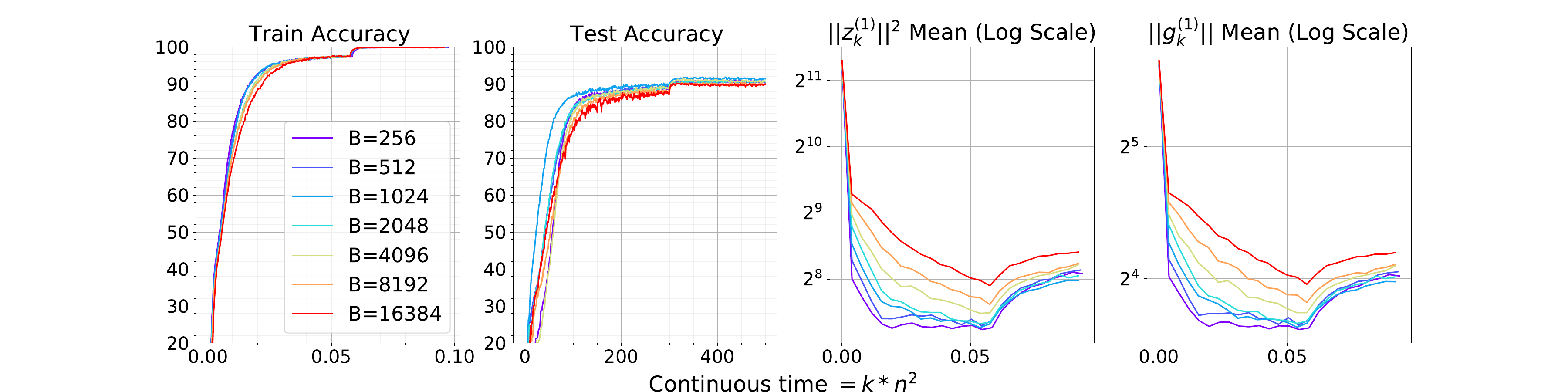}}
\caption{ResNet-50 trained on CIFAR-10 using Adam are close for different batch sizes when the optimization hyperparameters are varied according to the proposed scaling rule for Adam (\Cref{def:adam_scaling}). We use a baseline setting of $\eta=10^{-3}$ and $(\beta_1, \beta_2) = (0.999, 0.999)$ for batch size $256$. $\epsilon=10^{-30}\approx 0$ for all experiments. We use a weight decay factor of $10^{-4}$.  We observe a gap of at most $3\%$ among the different batch sizes under consideration.}
\label{fig:accAdam_resnet_smalleps}
\end{center}
\end{figure}

\begin{figure}[!htbp]
\begin{center}
\centerline{\includegraphics[width=\linewidth]{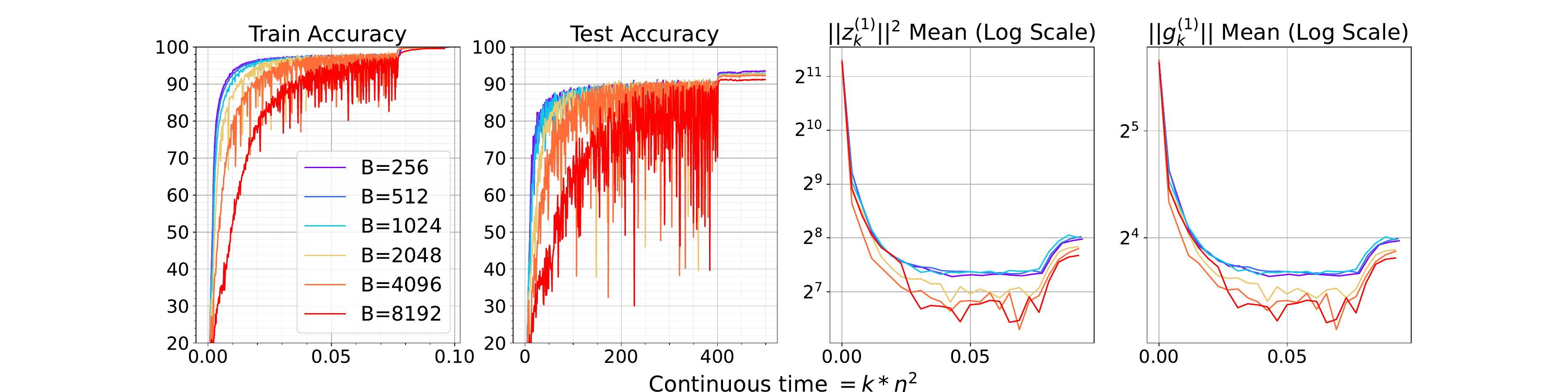}}
\caption{ResNet-50 trained on CIFAR-10 using RMSprop are close for different batch sizes when the optimization hyperparameters are varied according to the proposed scaling rule for RMSprop (\Cref{def:adam_scaling}). We use a baseline setting of $\eta=10^{-3}$, $\epsilon = 10^{-8}$, and $\beta =  0.999$ for batch size $256$.  We use a weight decay factor of $10^{-4}$. We observe a gap of at most $3\%$ among the different batch sizes under consideration.}
\label{fig:accrmsprop_resnet_largeeps}
\end{center}
\end{figure}

\begin{figure}[!htbp]
\begin{center}
\centerline{\includegraphics[width=\linewidth]{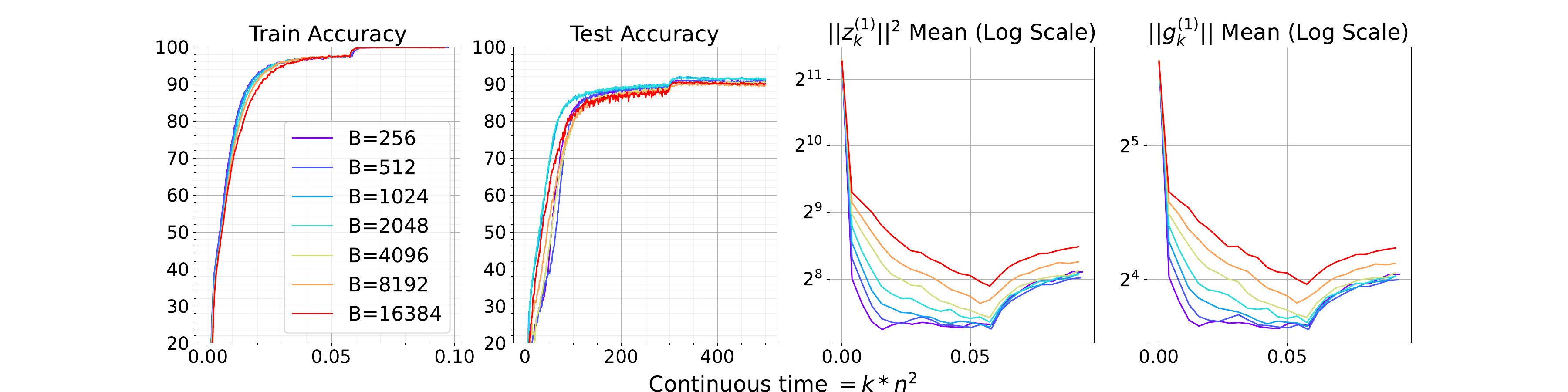}}
\caption{ResNet-50 trained on CIFAR-10 using Adam are close for different batch sizes when the optimization hyperparameters are varied according to the proposed scaling rule for Adam (\Cref{def:adam_scaling}). We use a baseline setting of $\eta=10^{-3}$, $\epsilon = 10^{-8}$, and $(\beta_1, \beta_2) = (0.999, 0.999)$ for batch size $256$.  We use a weight decay factor of $10^{-4}$. We observe a gap of at most $3\%$ among the different batch sizes under consideration.}
\label{fig:accAdam_resnet_largeeps}
\end{center}
\end{figure}

\begin{figure}[!htbp]
\begin{center}
\centerline{\includegraphics[width=\linewidth]{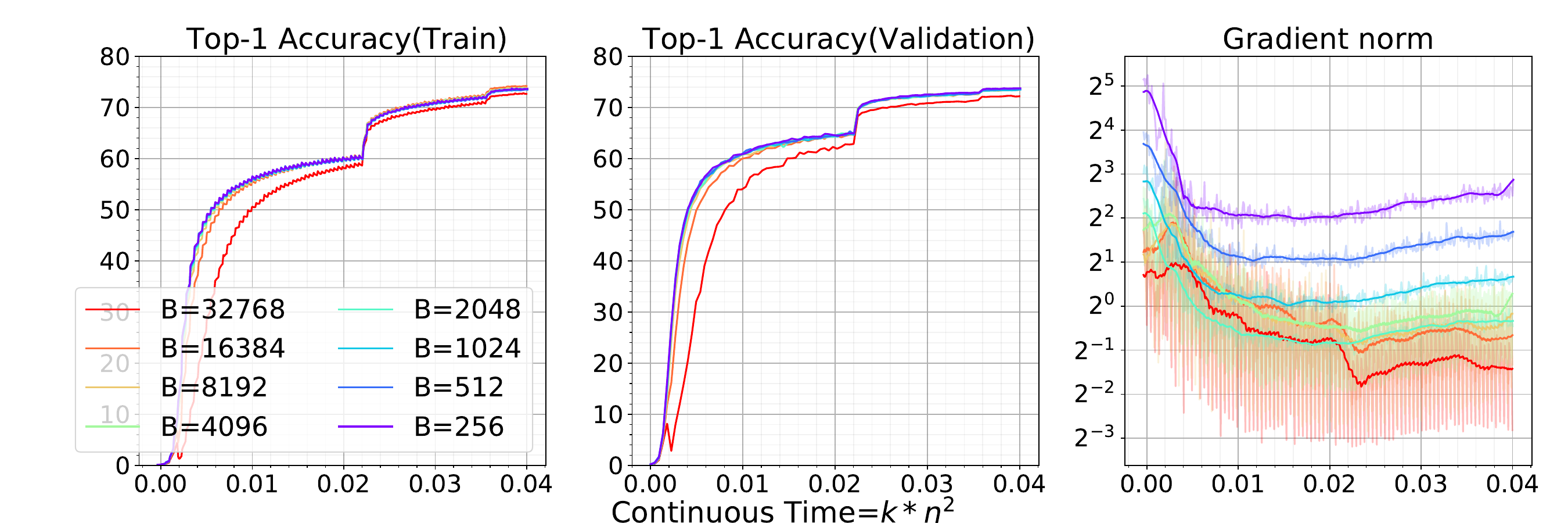}}
\caption{ResNet-50 trained on ImageNet using Adam are close for different batch sizes when the optimization hyperparameters are varied according to the proposed scaling rule for Adam (\Cref{def:adam_scaling}). We use a baseline setting of $\eta=3 \times 10^{-4}$, $\epsilon = 10^{-8}$, and $(\beta_1, \beta_2) = (0.999, 0.999)$ for batch size $256$. We use a weight decay factor of $10^{-4}$. We achieve around $74\%$ validation accuracy with batch size $256$ and the accuracy drops by at most $1.5\%$ at batch size $32768$.}
\label{fig:sqrtscaling_imagenet_largeeps}
\end{center}
\end{figure}

\begin{figure}[!htbp]
\begin{center}
\centerline{\includegraphics[width=\linewidth]{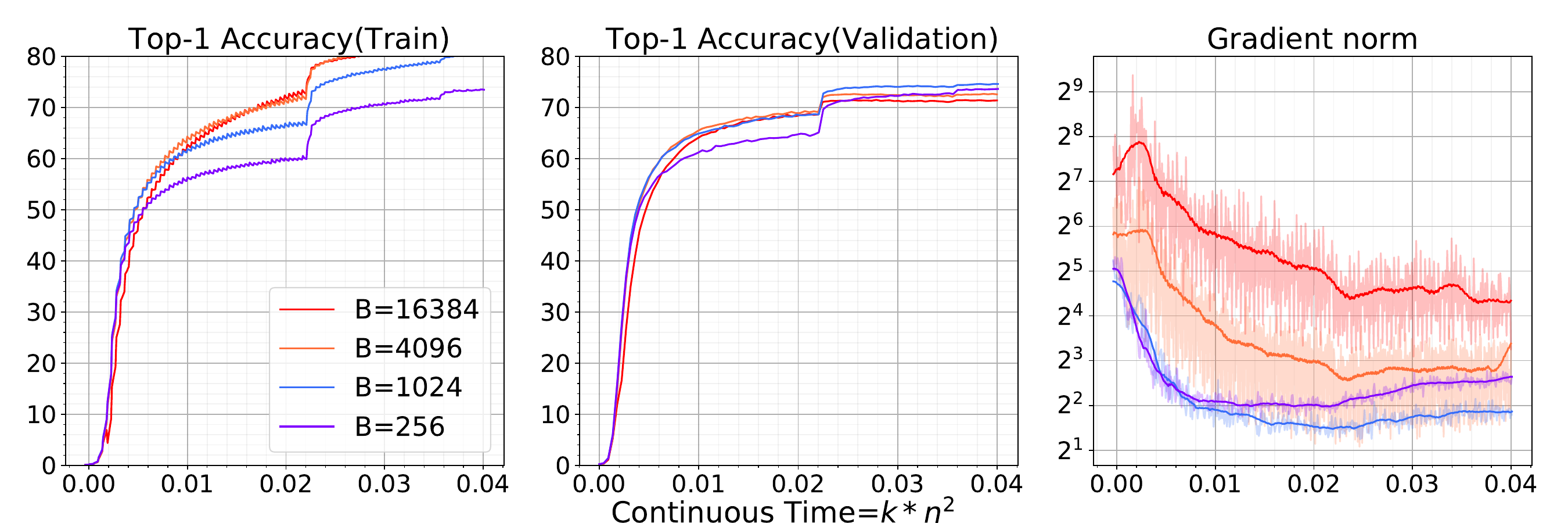}}
\caption{ResNet-50 trained on ImageNet using Adam are close for different batch sizes when the optimization hyperparameters are varied according to the proposed scaling rule for Adam (\Cref{def:adam_scaling}). We use a baseline setting of $\eta=3 \times 10^{-4}$ and $(\beta_1, \beta_2) = (0.999, 0.999)$ for batch size $256$. $\epsilon=10^{-30}\approx 0$ for all experiments. We use a weight decay factor of $10^{-4}$.  We achieve around $74\%$ validation accuracy with batch size $1024$ and the accuracy drops by at most $3\%$ at batch size $16384$.}
\label{fig:sqrtscaling_imagenet_smalleps}
\end{center}
\end{figure}

\begin{figure}[!htbp]
\begin{center}
\centerline{\includegraphics[width=\linewidth]{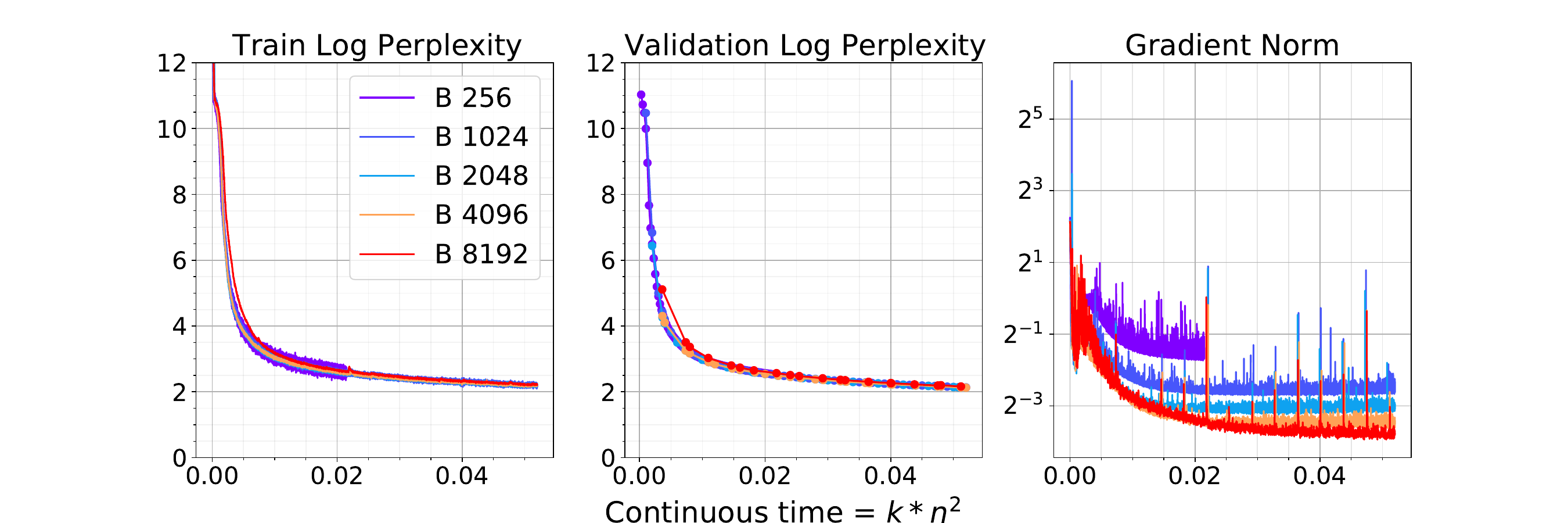}}
\caption{The train and validation log perplexities of $\text{RoBERTa}$-large trained on the
Wiki+Books corpus using Adam (for $48$ hours) are close for moderate batch sizes using the Square Root
Scaling Rule on Adam. $\eta = 10^{-3}$ and $(\beta_1, \beta_2)
= (0.99375, 0.996)$ for batch size $1024$, $\epsilon = 2 \times 10^{-6}$ for
batch size $1024$ and scaled likewise for other batch sizes. We achieve a validation log perplexity of $2.1 \pm 0.1$ for batch size $1024$, $2048$, $4096$ and $8192$. Training with batch size $256$ is computationally inefficient, but follows the same behavior during its 48-hour trajectory.}
\label{fig:sqrtscaling_roberta}
\end{center}
\end{figure}

\begin{table}[]
\centering
\begin{tabular}{c|c|c|c|c|c|c|c|c}
		Pretrain batch size $B$ & CoLA & SST-2 & MRPC & STS-B & QQP & MNLI  & QNLI & RTE \\ \hline
		$1024$ & $0.585$ & $0.92$ & $0.73$ & $0.866$ & $0.873$ & $0.836$  & $0.906$ & $0.682$   \\
		$2048$  & $0.563$ & $0.928$ & $0.803$ & $0.869$ & $0.875$ & $0.826$  & $0.897$ & $0.653$   \\
		$4096$ & $0.581$ & $0.921$ & $0.778$ & $0.869$ & $0.875$ & $0.839$  & $0.892$ & $0.675$   \\
		$8192$ & $0.626$ & $0.929$ & $0.778$ & $0.884$ & $0.877$ & -  & $0.9$ & $0.675$  \\
		\hline \vspace{0.1cm}
\end{tabular}
\caption{Performance of the pretrained $\text{RoBERTa}$ models when finetuned on different downstream tasks in GLUE \cite{wang2018glue}. F1 scores are reported for QQP and MRPC, Spearman correlations are reported for STS-B, Matthews correlations for CoLA, and
accuracy scores are reported for the other tasks. Here, $B$ denotes the batch size used for pretraining. We run an extensive grid search (\Cref{tab:Downstream_hyperparam_academicbert}) to find the best performance of each pretrained model on each of the downstream tasks.}
\label{tab:Downstream_performance_academicbert}
\end{table}

\begin{figure}[!htbp]
\begin{center}
\centerline{\includegraphics[width=\linewidth]{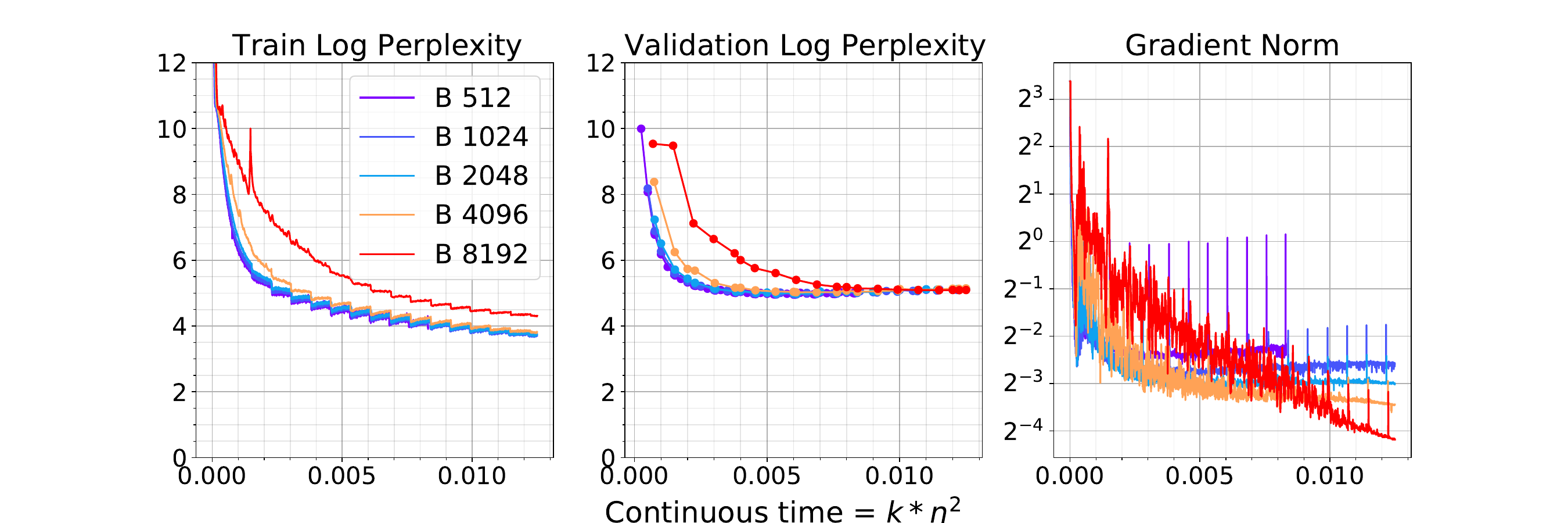}}
\caption{The train and validation perplexities of 12 layer GPT trained on the
Wikitext corpus using Adam (for $48$ hours) are close for moderate batch sizes Square Root
Scaling Rule on Adam on $\text{RoBERTa}$-large. $\eta = 10^{-3}$ and $(\beta_1, \beta_2)
= (0.9875, 0.996)$ for batch size $1024$, $\epsilon = 2 \times 10^{-8}$ for
batch size $1024$ and scaled likewise for other batch sizes.  We achieve a validation log perplexity of $5 \pm 0.1$ for all the batch sizes under consideration. Moreover, we observe an alignment in the behavior of the trajectories across the different batch sizes (except $8192$ in the first half of the training).}
\label{fig:sqrtscaling_gpt}
\end{center}
\end{figure}

\subsection{Ablation study on the proposed scaling rule}\label{sec:app_sqsr_ablation}
\paragraph{CIFAR-10.} We conduct an ablation study on whether all the parameters $\eta, \epsilon, \beta_1, \beta_2$ need to be scaled in our proposed scaling rule. To do so, we compare the performance of a ResNet-50 model trained with batch size $256$ and hypeparameters ($\epsilon = 10^{-8}, \beta_1 = 0.999, \beta_2 = 0.999$) with the performance at a larger batch size, across $5$ runs representing $5$ different scaling rules: (a) Scale $\eta$, keeping others fixed, (b) Scale $\eta, \epsilon$, keeping others fixed, (c) Scale $\eta, \epsilon, \beta_1$, keeping others fixed, (d) Scale $\eta, \epsilon, \beta_2$, keeping others fixed, and (e) Scale $\eta, \epsilon, \beta_1, \beta_2$. Please check the behaviors of the different scaling rules at batch size $2048$, $4096$, $8192$ and $16384$ in \Cref{fig:ablation_sqrtscaling}. We found that (e) consistently beats others in terms of the test functions and the validation accuracies at all batch sizes. The closest scaling rule (c) involved scaling only $\eta, \epsilon, \mbox{ and } \beta_1$ while keeping $\beta_2$ fixed. 

\subsection{Ablation against linear scaling rules} \label{sec:app_lsr_ablation}
\paragraph{CIFAR-10.} We compared the proposed scaling rule against possible linear scaling rules, that scale the hyperparameters linearly with the increase in training batch size. We focused on ResNet-50 training with Adam. The linear scaling rules that we tried were: (a) Scale $\eta$ linearly, keeping $\beta_1, \beta_2, \epsilon$ fixed, (b) Scale $\eta, 1 - \beta_1$ linearly, keeping $\beta_2, \epsilon$ fixed, (c) Scale $\eta, 1 - \beta_2$ linearly, keeping $\beta_1, \epsilon$ fixed, and (d) Scale $\eta, 1 - \beta_1, 1 - \beta_2$ linearly, keeping $\epsilon$ fixed. Fig.~\ref{fig:ablation_LSR} shows the behavior of these scaling rules at batch size $8192$ and $16384$, at different values of $\epsilon$.

The linear scaling rule in (d) seems to perform as well as the proposed Square Root scaling rule (\cref{def:adam_scaling}) in terms of validation accuracies. However, with a closer look on the train accuracy plots, we observe that the Square Root scaling rule tracks the smaller batch training trajectory better than the linear scaling rule. The linear scaling rules seem to catch up, only after the learning rate is decayed. Our hypothesis is that the CIFAR-10 dataset is simple enough for  different scaling rules to work well. 

\paragraph{ImageNet.} We conduct ablation experiments on ResNet-50, trained with Adam on ImageNet, where we follow a linear scaling rule to scale the hyperparameters across batch sizes. Due to computational issues, we didn't conduct extensive experiments, as was done for CIFAR-10. The linear scaling rule for Adam is as follows: the hyperparameters $\eta$, $1 - \beta_1$ and $1 - \beta_2$ are scaled by $\kappa$, when the batch size is scaled by $\kappa$, and $\epsilon$ isn't scaled. As was noted earlier, the definition of continuous time will change to $\eta \times \# \{ \text{gradient steps} \}$. We keep the number of training epochs equal to $90$ as before, follow the same learning rate schedule,  and show the performance of the models in \cref{fig:linearscaling_imagenet_largeeps}. We observe that scaling the hyperparameters to larger batch training with the proposed LSR results in a big drop in validation accuracies.

\begin{figure}
 \centering
     \begin{subfigure}
         \centering\includegraphics[width=\linewidth]{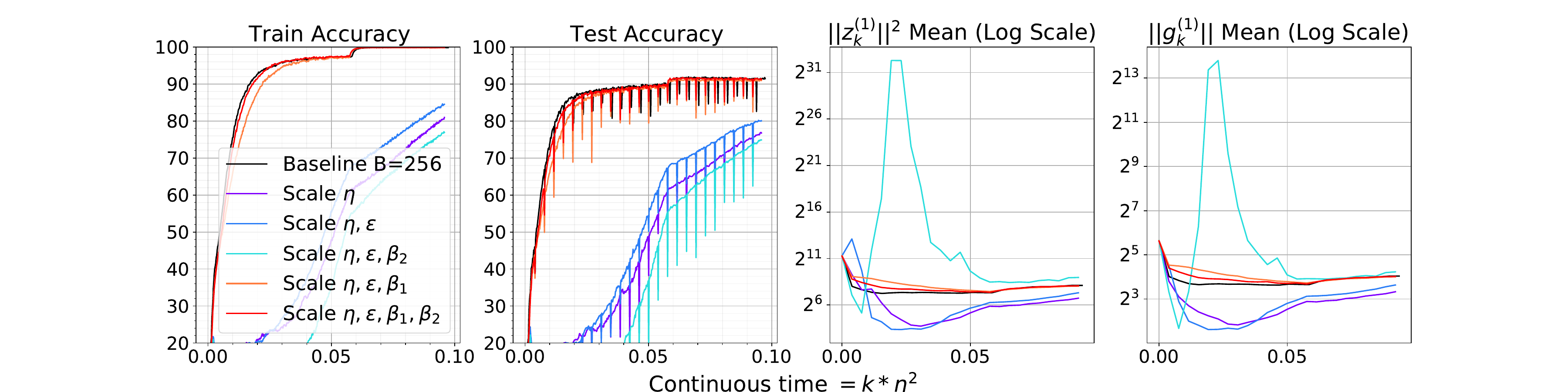}\\(a) $B=2048$
     \end{subfigure}
     \hfill
     \begin{subfigure}
         \centering\includegraphics[width=\linewidth]{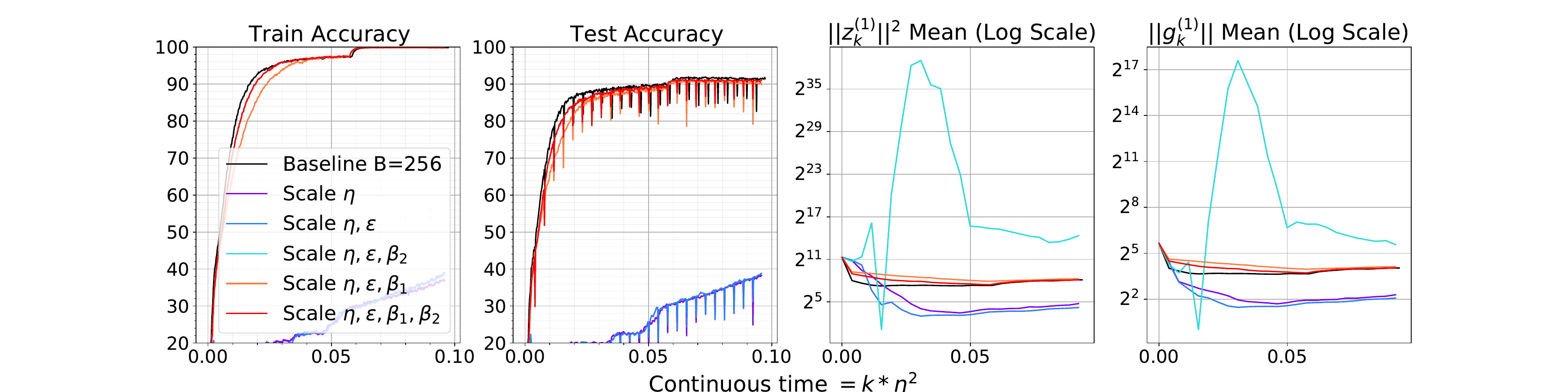}\\(b) $B=4096$
     \end{subfigure}
  \begin{subfigure}
    \centering\includegraphics[width=\linewidth]{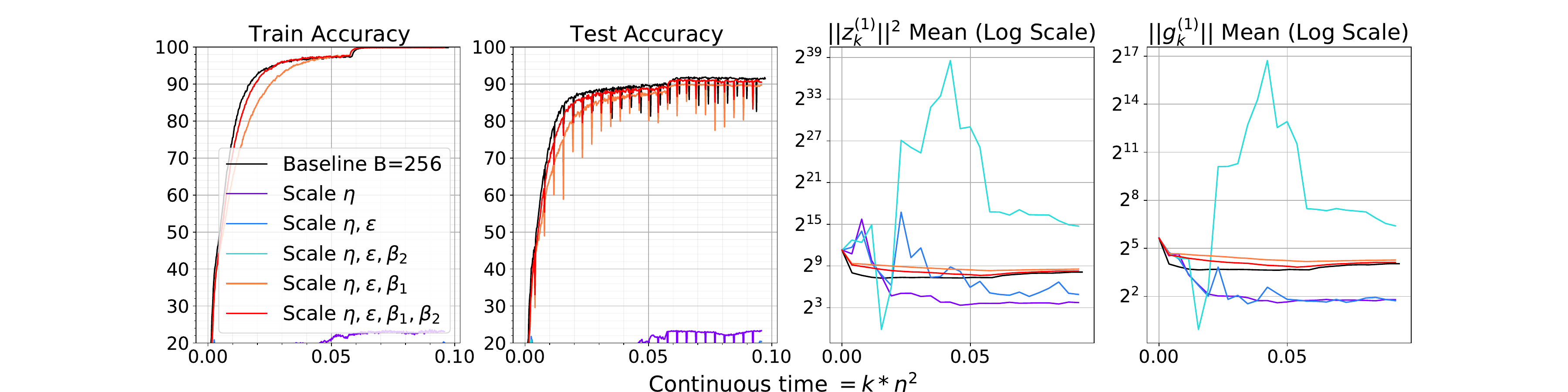}\\(c) $B=8192$
    \end{subfigure}
  \begin{subfigure}
    \centering\includegraphics[width=\linewidth]{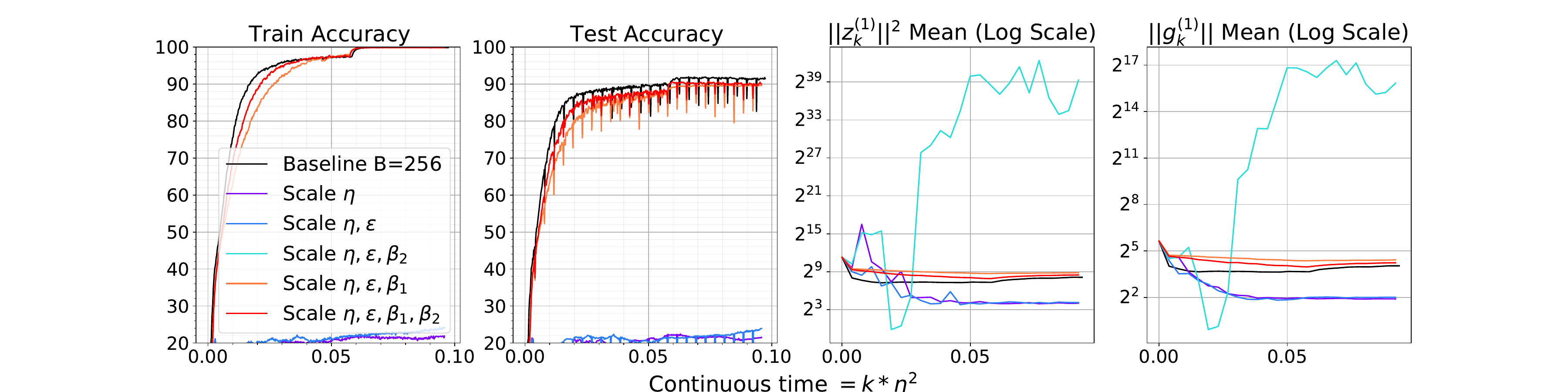}\\(d) $B=16384$
  \end{subfigure}
  \label{fig:ablation_sqrtscaling}
   \caption{Ablation study for the square root scaling rule on Resnet-50 trained with Adam on CIFAR-10. We compare the performance of a model trained with batch size $256$ and hypeparameters ($\epsilon = 10^{-8}, \beta_1 = 0.999, \beta_2 = 0.999$) with the performance at a larger batch size, across $5$ runs representing $5$ variations of the square root scaling rule: (a) Scale $\eta$, keeping others fixed, (b) Scale $\eta, \epsilon$, keeping others fixed, (c) Scale $\eta, \epsilon, \beta_1$, keeping others fixed, (d) Scale $\eta, \epsilon, \beta_2$, keeping others fixed, and (e) Scale $\eta, \epsilon, \beta_1, \beta_2$. We use a weight decay of $10^{-4}$ in all the experiments. We observe that scaling all the hyperparameters consistently gives better performance at higher batch size. Scaling rule (c) is close second.}
  \end{figure}

\begin{figure}[!htbp]
  \centering
  \begin{subfigure}
    \centering\includegraphics[width=\linewidth]{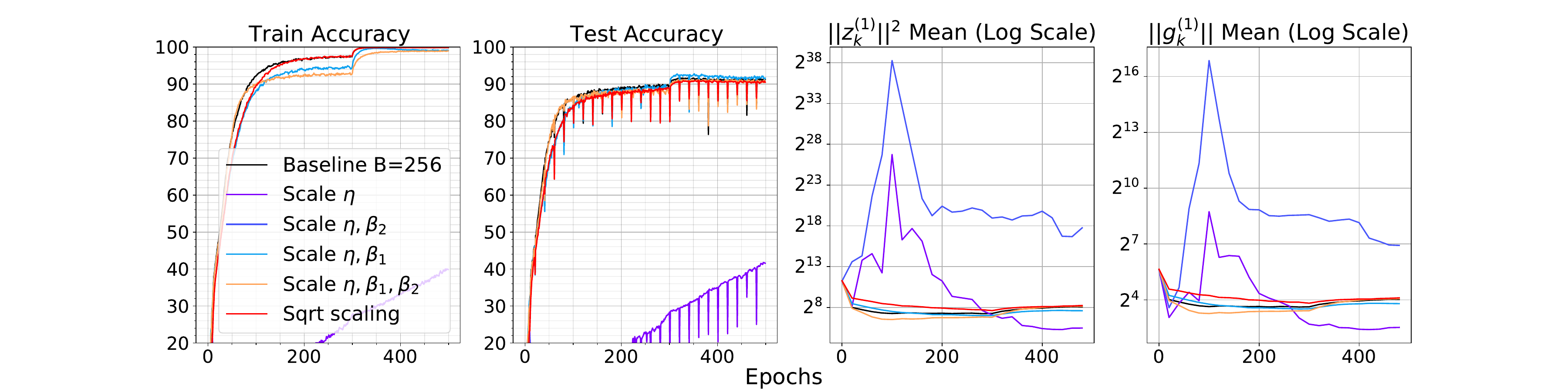} \\(a) $B=8192$, $\epsilon$ of order $10^{-30}$
  \end{subfigure}
\begin{subfigure}
    \centering\includegraphics[width=\linewidth]{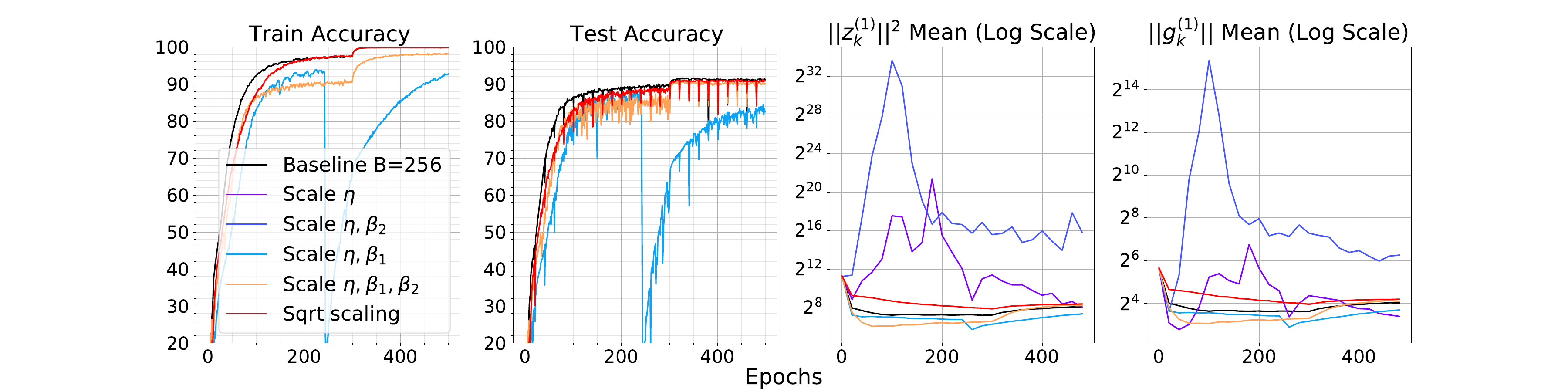}\\(b) $B=16384$, $\epsilon$ of order $10^{-30}$
  \end{subfigure}
  \begin{subfigure}
    \centering\includegraphics[width=\linewidth]{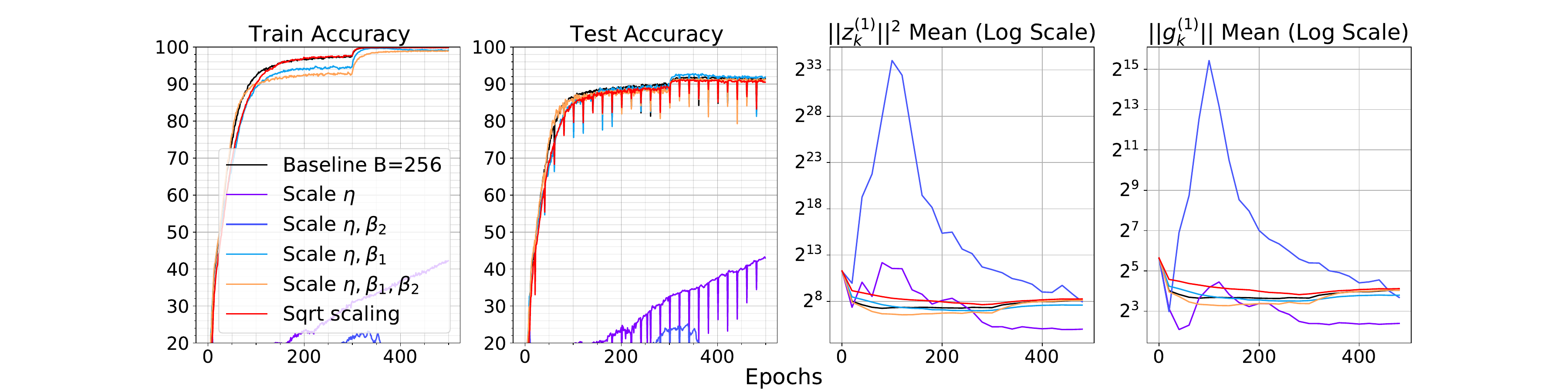}\\(c) $B=8192$, $\epsilon$ of order $10^{-8}$
  \end{subfigure}
\begin{subfigure}
    \centering\includegraphics[width=\linewidth]{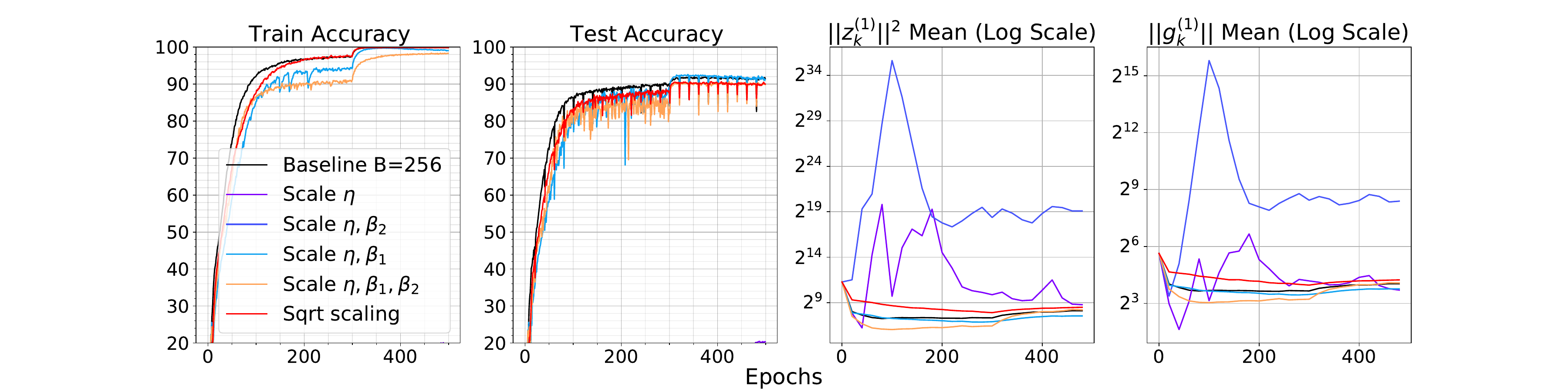}\\(d) $B=16384$, $\epsilon$ of order $10^{-8}$
  \end{subfigure}
  \caption{Ablation study against (possible) linear scaling rules on Resnet-50 trained with Adam on CIFAR-10. We compare the performance of a model trained with batch size $256$ and hypeparameters ($\epsilon = 10^{-8}/10^{-30}, \beta_1 = 0.999, \beta_2 = 0.999$) with the performance at a larger batch size, across $5$ runs representing $5$ possible linear scaling rules: (a) Scale $\eta$ linearly, keeping $\beta_1, \beta_2, \epsilon$ fixed, (b) Scale $\eta, 1 - \beta_1$ linearly, keeping $\beta_2, \epsilon$ fixed, (c) Scale $\eta, 1 - \beta_2$ linearly, keeping $\beta_1, \epsilon$ fixed, and (d) Scale $\eta, 1 - \beta_1, 1 - \beta_2$ linearly, keeping $\epsilon$ fixed. We use a weight decay of $10^{-4}$ in all the experiments. We also compare the behavior of the linear scaling rules against the square root scaling rule. Since the continuous time definition varies across the scaling rules, we plot against the number of epochs trained. A closer look at the training accuracy plots shows that the square root scaling rule tracks the smaller batch training trajectory better than the linear scaling rule.
  }
  \label{fig:ablation_LSR}
  \end{figure}

\begin{figure}[!htbp]
\begin{center}
\centerline{\includegraphics[width=\linewidth]{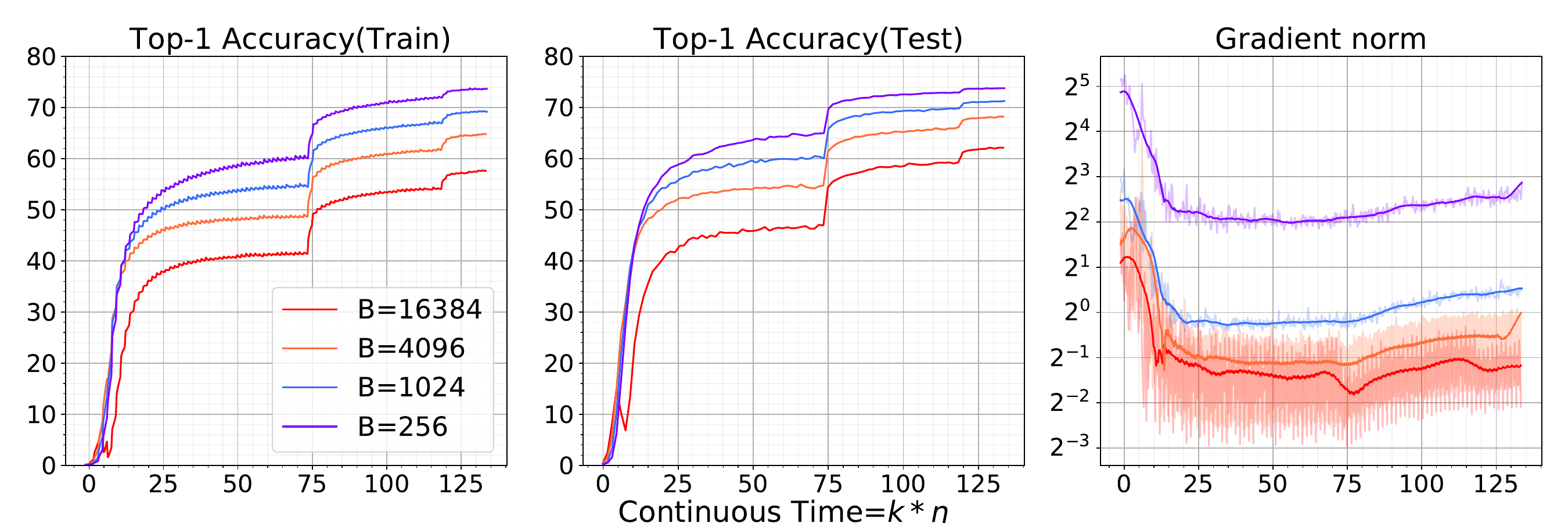}}
\caption{Ablation study against a (possible) linear scaling rule on Resnet-50 trained with Adam on Imagenet. We compare the performance of a model trained with batch size $256$ and hypeparameters ($\eta=3 \times 10^{-4}$, $\epsilon = 10^{-8}$, and $(\beta_1, \beta_2) = (0.999, 0.999)$) with the performance at a larger batch size, when the hyperparameters are scaled as follows: scale $\eta, 1-\beta_1, 1-\beta_2$ by $\kappa$, if the batch size is scaled by $\kappa$, keeping $\epsilon$ fixed. We use a weight decay factor of $10^{-4}$. We clearly observe a decrease in performance at larger batch size, in comparison to a model trained with square root scaling rule (see \Cref{fig:sqrtscaling_imagenet_largeeps}).}
\label{fig:linearscaling_imagenet_largeeps}
\end{center}
\end{figure}
 
\section{Experiment Configuration Details}\label{sec:app_exp_config}

\subsection{A note on learning rate schedule and warm-up}
We have used a learning rate schedule and a warm-up phase in all our experiments. We have to admit that our current theorems do not directly apply to time-varying learning rates or batch sizes. But our experiments demonstrate that our scaling rules continue to hold for learning rate schedulers with a special warm-up, even if they go beyond the scope of our theoretical setting. Technically, the extensions of our theorems to time-varying learning rates or batch sizes are interesting, and we believe they can indeed be shown following the same proof strategy. The corresponding SDE approximations should have hyperparameters changing with time.

\subsection{CIFAR-10}\label{sec:app_cifar10_config}
There are $50000$ images in the training set and $10000$ images in the validation set of CIFAR-10 \cite{cifar10}.

\paragraph{Architecture.} We used the architecture of ResNet-56 from \cite{He_2015_ICCV} without modification. We used the same architecture of VGG-16 with batch normalization from \cite{simonyan2014very}. However, we kept the final layer of the architecture fixed throughout training, to make the model $1$-homogenous and avoid the optimization difficulties of $2$-homogenous networks~(\cite{li2020exp}). 

\paragraph{RMSprop.} To fix a baseline to compare against, we first trained the models with batches of size $256$, sampled with replacement, at peak learning rate $\eta = 10^{-3}$ and $\beta=0.999$. The model was trained for $500 \times \lfloor (50000 / 256) \rfloor = 97500$ gradient steps (or $500$ epochs). We followed an initial warmup for the first $2\%$ of the total gradient steps. The learning rate schedule during the warmup phase is given by $\eta \times 10^{-3} \times (10^{3})^{\# \text{ epochs } / 10}$. We also followed a learning rate decay schedule, the learning rate was decayed by $0.1$ when the model reaches $80\%$ ($400$ epoch) and $90\%$ ($450$ epoch) of the total continuous time respectively. We did experiments at two values of $\epsilon$, small ($=10^{-30}$) and large  ($=10^{-8}$).

We then made multiple runs of the same model with batches of varying sizes in $\{1024, 4096, 16384\}$, with the hyperparameters $\eta$, $\epsilon$ and $\beta$ modified appropriately according to the scaling rule. The number of gradient steps were modified to keep the total amount of continuous time same across all the batches (which amounted to $500$ epochs by the equivalence between continuous time and the number of training epochs). The warmup schedule and the learning rate decay schedule were kept the same. 

\paragraph{Adam.} We first trained the models with batches of size $256$, sampled with replacement, at peak learning rate $\eta = 10^{-3}$ and $(\beta_1, \beta_2) = (0.999, 0.999)$. The total continuous time of training (or number of epochs), the amount of continuous time in the warmup phase, and the learning rate schedule in the warmup phase were same as RMSprop. The only difference was the learning rate schedule after the warmup phase, the learning rate was decayed by $0.1$ when the model reaches $60\%$ of the total continuous time (or 300 epochs).

We then made multiple runs of the same model with batches of varying sizes in $\{1024, 4096, 16384\}$, with the hyperparameters $\eta$, $\epsilon$ and $\beta_1, \beta_2$ modified appropriately according to the scaling rule. The number of gradient steps were modified to keep the total amount of continuous time same across all the batches. The warmup schedule and the learning rate decay schedule were kept the same. 

\subsection{ImageNet}\label{sec:app_imagenet_config}
There are $1281167$ images in the training set and $50000 $ images in the validation set of ImageNet \citep{deng2009imagenet}.

\paragraph{Architecture.} We trained a ResNet-50 \citep{He_2015_ICCV} model without modification.

\paragraph{Adam.} To fix a baseline to compare against, we first trained the models with batches of size $256$, sampled with replacement, at learning rate $\eta = 3 \times 10^{-4}$ and $(\beta_1, \beta_2)=(0.999,0.999)$. The model was trained for a total of $90 \times \lfloor (1281167 / 256) \rfloor = 450360$ gradient steps (or $90$ epochs). We followed an initial warmup for the first $\frac{1}{18}$ fraction of the total continuous time (or the first $5$ epochs). The learning rate schedule during the warmup phase is increased linearly with epoch, i.e. the learning rate is given by $\eta \times 10^{-3} \times (10^{3})^{\# \text{ epochs } / 5}$. We then followed a learning rate decay schedule, where the learning rate was decayed by $0.1$ when the model reaches $\frac{5}{18}$ fraction (or $50$ epoch ) and $\frac{8}{9}$ fraction (or $80$ epoch) of the total continuous time respectively. We use two different values for $\epsilon$, small $\epsilon$ ($=10^{-30} \approx 0$) and a larger $\epsilon$ ($=10^{-8}$).

\subsection{Books and Wikipedia (Academic BERT)}\label{sec:app_books_wiki_config}
 We use a combination of Bookcorpus \citep{zhu2015aligning} plus English Wikipedia,
which totals $16$ GB of uncompressed text.
We split the data uniformly with a ratio $9:1$, to create training and validation datasets for pretraining.

\paragraph{Architecture.} We pretrain a $24$-layer $\text{RoBERTa}$ \cite{liu2019roberta} model. We pretrain on sequences of length $128$.

\paragraph{Pre-training with Adam.} 
We use the code from \cite{wettig2022should}. We follow the optimization recipe from \cite{izsak2021how} for efficient pre-training.
To fix a baseline, we first trained our model with batch size $1024$, with the optimization parameters given in table~\ref{tab:opt_roberta}. In the warmup phase, the learning rate is increased linearly over the interval, i.e. the learning rate at step $k$ in the warmup phase is given by $\frac{k}{k_{\mathrm{warmup}}}  \eta$, where $k_{\mathrm{warmup}}$ denotes the total number of warmup steps and $\eta$ denotes the peak learning rate. Moreover, after the warmup phase, the learning rate is decayed linearly to $0$, i.e. the learning rate at step $k$ after the warmup phase is given by $\frac{k - k_{\mathrm{warmup}}} { k_{\mathrm{max}} -  k_{\mathrm{warmup}} }  \eta$, where $\eta$ denotes the peak learning rate and $k_{\mathrm{max}}$ denotes the maximum number of gradient steps intended for pretraining.

\sadhika{Move this to the appropriate section}

\begin{table}[]
    \centering
    \begin{tabular}{c|c}
    Hyperparameter & Value \\
    \hline
    Dropout & $0.1$ \\
    Attention Dropout & $0.1$ \\
    Warmup Steps & $5520$\\
    Peak Learning Rate & $10^{-3}$ \\
    Batch Size & $1024$ \\
    Weight Decay & $10^{-4}$ \\
    Max Steps & $92000$ \\
    Learning Rate Decay & Linear \\
    Adam $\beta_1$ & $0.99375$ \\
    Adam $\beta_2$ & $0.996$ \\
    Adam $\epsilon$ & $2\times 10^{-6}$ \\
    Gradient Clipping & $0.0$ \\
    Position embeddings & $128$ \\ \hline\vspace{0.1cm} 
    \end{tabular}
    \caption{Optimization hyperparameters of baseline $\text{RoBERTa}$ model during pretraining.}
    \label{tab:opt_roberta}
\end{table}

\textbf{Fine-tuning with Adam}: We also validate the performance of the pretrained models from the previous section on the GLUE\citep{wang2018glue} datasets. For each downstream task, we run an extensive grid search on the hyperparameters for finetuning each pretrained model. We focused on the following hyperparameters for grid search: batch size, the peak learning rate and the total number of training epochs. Please see \cref{tab:Downstream_hyperparam_academicbert} for the hyperparameter grid. The rest of the hyperparameters are fixed for all the runs and are given in \cref{tab:opt_roberta_finetune}. We follow a similar learning rate schedule during the warmup phase as was used for pretraining: the learning rate at step $k$ in the warmup phase is given by $\frac{k}{k_{\mathrm{warmup}}}  \eta$, where $k_{\mathrm{warmup}}$ denotes the total number of warmup steps and $\eta$ denotes the peak learning rate. Moreover, after the warmup phase, the learning rate is decayed linearly to $0$, i.e. the learning rate at step $k$ after the warmup phase is given by $\frac{k - k_{\mathrm{warmup}}} { k_{\mathrm{max}} -  k_{\mathrm{warmup}} }  \eta$, where $\eta$ denotes the peak learning rate and $k_{\mathrm{max}}$ denotes the maximum number of gradient steps intended for finetuning.

\begin{table}[]
    \centering
    \begin{tabular}{c|c}
    Hyperparameter & Value \\
    \hline
    Dropout & $0.1$ \\
    Attention Dropout & $0.1$ \\
    Warmup Steps & $6\%$ of total\\
    Weight Decay & $0.1$ \\
    Learning Rate Decay & Linear \\
    Adam $\beta_1$ & $0.9$ \\
    Adam $\beta_2$ & $0.98$ \\
    Adam $\epsilon$ & $10^{-6}$ \\
    Gradient Clipping & $0.0$ \\ \hline \vspace{0.1cm}
    \end{tabular}
    \caption{Optimization hyperparameters of all pretrained $\text{RoBERTa}$ models during finetuning.}
    \label{tab:opt_roberta_finetune}
\end{table}

\begin{table}[]
    \centering
\begin{tabular}{c|c|c|c}
		Dataset & Finetune batch size & Peak Learning rate & Total training epochs\\ \hline
		CoLA & $\{16, 32\}$ & $\{10^{-5}, 3\times 10^{-5}, 5\times 10^{-5}, 8\times 10^{-5}\}$ & $\{3, 5, 10\}$ \\
		SST-2 & $\{16, 32\}$ & $\{10^{-5}, 3\times 10^{-5}, 5\times 10^{-5}, 8\times 10^{-5}\}$ & $\{3, 5, 10\}$ \\
		MRPC & $\{16, 32\}$ & $\{10^{-5}, 3\times 10^{-5}, 5\times 10^{-5}, 8\times 10^{-5}\}$ & $\{3, 5, 10\}$ \\
		STS-B & $\{16, 32\}$ & $\{10^{-5}, 3\times 10^{-5}, 5\times 10^{-5}, 8\times 10^{-5}\}$ & $\{3, 5, 10\}$ \\
		RTE & $\{16, 32\}$ & $\{10^{-5}, 3\times 10^{-5}, 5\times 10^{-5}, 8\times 10^{-5}\}$ & $\{3, 5, 10\}$\\
		QQP & $\{32\}$ & $\{5\times 10^{-5}, 8\times 10^{-5}\}$ & $\{3, 5\}$ \\
		MNLI & $\{32\}$ & $\{5\times 10^{-5}, 8\times 10^{-5}\}$ & $\{3, 5\}$ \\
		QNLI & $\{32\}$ & $\{5\times 10^{-5}, 8\times 10^{-5}\}$ & $\{3, 5\}$  \\
		\hline \vspace{0.1cm}
\end{tabular}
\caption{Hyperparameter grid for pretrained  $\text{$\text{RoBERTa}$}_{\text{LARGE}}$ on the downstream tasks.}
\label{tab:Downstream_hyperparam_academicbert}
\end{table}

\subsection{WikiText-103 (GPT)}
WikiText-103 \citep{merity2017pointer} is a dataset with 103 million tokens
extracted from Wikipedia. We split the data uniformly with a ratio $9:1$, to
create training and validation datasets for pretraining. We use Adam
\citep{kingma2014adam} optimization algorithm.

\paragraph{Architecture.} We pretrain a $12$-layer $\text{GPT}$ \citep{brown2020language} model without modification. The model has $12$ layers with hidden dimension $768$, feedforward network dimension $3072$, $12$ attention heads in each attention layer, and attention head size $64$. We pretrain on sequences of length $128$ (unless stated otherwise).

\paragraph{Adam.} We use the code from \cite{wettig2022should}. To fix a baseline, we first trained our model with batch size $1024$, with the optimization parameters given in \Cref{tab:opt_gpt}. In the warmup phase, the learning rate is increased linearly over the interval, i.e. the learning rate at step $k$ in the warmup phase is given by $\frac{k}{k_{\mathrm{warmup}}}  \eta$, where $k_{\mathrm{warmup}}$ denotes the total number of warmup steps and $\eta$ denotes the peak learning rate. Moreover, after the warmup phase, the learning rate is decayed linearly to $0$, i.e. the learning rate at step $k$ after the warmup phase is given by $\frac{k - k_{\mathrm{warmup}}} { k_{\mathrm{max}} -  k_{\mathrm{warmup}} }  \eta$, where $\eta$ denotes the peak learning rate and $k_{\mathrm{max}}$ denotes the maximum number of gradient steps intended for pretraining.

\begin{table}[]
    \centering
    \begin{tabular}{c|c}
    Hyperparameter & Value \\
    \hline
    Dropout & $0.1$ \\
    Attention Dropout & $0.1$ \\
    Warmup Steps & $1000$\\
    Peak Learning Rate & $10^{-3}$ \\
    Batch Size & $1024$ \\
    Weight Decay & $10^{-4}$ \\
    Max Steps & $12500$ \\
    Learning Rate Decay & Linear \\
    Adam $\beta_1$ & $0.9875$ \\
    Adam $\beta_2$ & $0.996$ \\
    Adam $\epsilon$ & $2\times 10^{-8}$ \\
    Gradient Clipping & $0.0$ \\
    Position embeddings & $128$ \\
    \hline \vspace{0.1cm}
    \end{tabular}
    \caption{Optimization hyperparameters of baseline $\text{GPT}$ model during pretraining.}
    \label{tab:opt_gpt}
\end{table}


\end{document}